\documentclass[11pt]{article}

\usepackage{setup}
\DeclareFontShape{T1}{cmr}{bx}{sc}{<-> ssub * cmr/b/sc}{}
\usepackage{algorithm2e}
\usepackage{comment}
\usepackage{microtype}
\usepackage{subfigure}

\def\draft{1}
\newcommand{\mnote}[1]{\ifnum\draft=1\textcolor{red}{[\textbf{Madhu:} #1]}\fi}
\newcommand{\cnote}[1]{\ifnum\draft=1\textcolor{violet}{[\textbf{Cassandra:} #1]}\fi}
\newcommand{\anote}[1]{\ifnum\draft=1\textcolor{teal}{[\textbf{Anna:} #1]}\fi}
\newcommand{\enote}[1]{\ifnum\draft=1\textcolor{blue}{[\textbf{Elchanan:} #1]}\fi}

\usepackage{dsfont}
\newcommand{\1}{\mathds{1}}

\newcommand{\E}{\mathbb{E}}

\newcommand{\N}{\mathbb{N}}
\renewcommand{\P}{\mathbb{P}}

\newcommand{\R}{\mathbb{R}}
\newcommand{\Z}{\mathbb{Z}}


\newcommand{\bpar}[1]{\left(#1\right)}
\newcommand{\bbr}[1]{\left[#1\right]}
\newcommand{\bcurly}[1]{\big\{#1\big\}}



\newcommand{\bs}{\baselineskip}

\DeclareMathOperator*{\argmin}{arg\,min}

\newcommand{\dist}{\mathrm{dist}}


\newcommand{\ind}[1]{\1_{[#1]}} 
\newcommand{\bigmid}{\ \big\vert \ }

\newcommand{\e}{\epsilon}
\newcommand{\eps}{\varepsilon}

\newcommand{\ceq}{\coloneqq}


\renewcommand{\P}{\mathbb{P}}

\newcommand{\F}{\mathcal{F}}
\newcommand{\G}{\mathcal{G}}
\newcommand{\T}{\mathcal{T}}

\renewcommand{\L}{\mathcal{L}}

\newcommand{\PF}{\textsf{PF}}
\newcommand{\CF}{\textsf{CF}}
\newcommand{\PT}{\textsf{PT}}
\newcommand{\CT}{\textsf{CT}}
\newcommand{\True}{{\textsf{True}}}
\newcommand{\False}{{\textsf{False}}}

\newcommand{\ZCF}{\textsf{ZCF}}
\newcommand{\ZCT}{\textsf{ZCT}}
\newcommand{\ZNCF}{\textsf{ZNCF}}
\newcommand{\ZNCT}{\textsf{ZNCT}}

\newcommand{\ZNPF}{\textsf{ZNPF}}

\renewcommand{\mathbb}{\mathbf}

\tikzset{every picture/.style={line width=0.75pt}} 

\title{Errors are Robustly Tamed in Cumulative Knowledge Processes}

\subtitle{}
\author{Anna Brandenberger{$^*$}\footnote{Supported in part by Vannevar Bush Faculty Fellowship ONR-N00014-20-1-2826, and NSF GRFP 2141064. Email: abrande@mit.edu}, Cassandra Marcussen{$^\dagger$}\footnote{Supported in part by a Simons Investigator Award and NSF Award CCF 2152413 to Madhu Sudan. Email: cmarcussen@g.harvard.edu}, Elchanan Mossel{$^*$}\footnote{Supported in part by a Simons Investigator Award, Vannevar Bush Faculty Fellowship ONR-N00014-20-1-2826, NSF award CCF 1918421, and ARO MURI W911NF1910217. Email: elmos@mit.edu}, Madhu Sudan{$^\dagger$}\footnote{Supported in part by a Simons Investigator Award and NSF Award CCF 2152413. Email: madhu@cs.harvard.edu}} 
\affiliation{{$^*$}Department of Mathematics, MIT; {$^\dagger$}School of Engineering and Applied Sciences, Harvard University}

\begin{document}

\maketitle
\thispagestyle{empty}  
\abstract{
  We study processes of societal knowledge accumulation, where the validity of a new unit of knowledge depends both on the correctness of its derivation and on the validity of the units it depends on. A fundamental question in this setting is: If a constant fraction of the new derivations is wrong, can investing a constant fraction, bounded away from one, of effort ensure that a constant fraction of knowledge in society is valid? Ben-Eliezer, Mikulincer, Mossel, and Sudan (ITCS 2023) introduced a concrete probabilistic model to analyze such questions and showed an affirmative answer to this question. Their study, however, focuses on the simple case where each new unit depends on just one existing unit, and units attach according to a {\em preferential attachment rule}. 
  
In this work, we consider much more general families of cumulative knowledge processes, where new units may attach according to varied attachment mechanisms and depend on multiple existing units. We also allow a (random) fraction of insertions of adversarial nodes. 

We give a robust affirmative answer to the above question by showing that for \textit{all} of these models, as long as many of the units follow simple heuristics for checking a bounded number of units they depend on, all errors will be eventually eliminated. Our results indicate that preserving the quality of large interdependent collections of units of knowledge is feasible, as long as careful but not too costly checks are performed when new units are derived/deposited.\footnotemark
}{}

\footnotetext{Accepted for presentation at the Conference on Learning Theory (COLT) 2024}
\newpage 
\thispagestyle{empty}
\tableofcontents

\newpage
\pagenumbering{arabic}

\section{Introduction}

A fundamental question in the study of information and knowledge and their processing is: how robust is the processing to errors? In the well-studied setting of information transmission, this leads to the fields of information and coding theory, and a foundational result here is that a (sufficiently small) constant fraction of errors can be remedied by a constant fraction of additional redundancy. Thus most channels can operate at a positive constant rate, where the effort (redundancy) needed to protect information from errors does not overwhelm the amount of new information being transmitted. In this work, continuing on the work of Ben-Eliezer, Mikulincer, Mossel, and Sudan~\cite{BMMS:23}, we explore the analogous question in the context of {\em cumulative knowledge}. 

The phrase {\em cumulative knowledge} refers to the fact that societal knowledge consists of a collection of units of knowledge that depend on each other. In such processes of knowledge accumulation, a major concern is that, as bodies of knowledge become larger, a significant fraction of nodes may become invalid (meaning they either introduce an error or rely on a faulty unit of knowledge). Over the years many papers \cite{IannidisSelfCorrect2012, Ioannidis2005, Grcar2013, LarcombeEnvironmental2018} have emphasized this problem for scientific publication networks. Because new units of knowledge rely on potentially incorrect prior units, any error in cumulative knowledge can result in future slow-downs and the spread of false information; see, for example, \cite{lehrer2010truth}. Of particular interest are corpuses where the validity of different units of knowledge is interdependent. Key examples include corpuses of scientific publications mentioned above, where the validity of results of one paper may depend on the validity of a cited paper, and code libraries where the correctness of units of libraries depends on the correctness of dependent libraries. 

It is perhaps natural to expect that in such corpuses, when a new unit of knowledge is introduced, the creators of the unit will not only check/debug their own creation/code/derivations but also some of the units they depend on. 
The recent paper \cite{BMMS:23} formalized a family of such models and showed that in the simple case where each new unit of knowledge connects to one existing unit of knowledge according to a preferential attachment distribution, errors will be eventually eliminated if a (random) fraction of the nodes checks a bounded number of nodes they depend on. 

Our main interest in this paper is to study if similar results hold for more general and more realistic models. In particular, we wish to consider networks where the attachment rules are more flexible than preferential attachment, new units may depend on more than one existing unit, and some of the nodes behave in an adversarial fashion, not only intentionally introducing incorrect units of knowledge but also strategically connecting to other units. 

The main result of this paper is that under a broad class of knowledge accumulation processes, natural error elimination heuristics will eliminate all of the errors in the process even when each new unit of knowledge checks at most a constant number of units it depends on. We also study the conditions under which the error effect survives forever.

\subsection{Cumulative Knowledge Processes}
Cumulative Knowledge Processes, with errors and checking, are characterized by three key ingredients: (1) The growth model that specifies how new units of knowledge are created and how they may depend on previous units of knowledge. (2) An error model that specifies how and when errors may happen. (For example, do errors simply affect the (in)correctness of individual pieces of reasoning, or do they also violate assumptions of the growth model?) (3) The checking process that determines how often checks are performed, how high up the chain of dependencies the checks go, and how broadly checks search for potential errors.

The aforementioned prior work in this field~\cite{BMMS:23} made one specific choice for each of these aspects and analyzed the ability of the resulting processes to cope with errors. For instance, for the growth model, they consider and analyze a preferential attachment model, and additionally, new units can only depend on one existing unit. This sheds no light on other attachment models, such as the uniform attachment setting or settings where new units connect to multiple existing units. The main quest of the current work is to go beyond the specifics of any one choice of the first two ingredients (the growth and error models) and rather suggest error-checking mechanisms that may be robust to the exact choice of these (up to reasonable limits). To do so, however, one needs to study broad classes of growth and error models without enumerating them individually. A conceptual contribution of this paper is to describe large families of growth models in a ``uniform'' way. We elaborate below.

\paragraph{Growth Models and Attachment Functions} In this work, we consider a model of cumulative knowledge where each unit of knowledge is given by a node, and edges point from a node to the nodes that directly depend on it. The growth model specifies how a new node chooses the set of parents it depends on. In this work, we capture a broad class of growth models by looking at an attachment function $\mathbf{a}:\Z_{\geq 0}\to \R_{\geq 0}$ such that a node of degree $d$ is chosen as a parent with probability proportional to $\mathbf{a}(d)$. This simple abstraction allows us to capture a whole spectrum of growth models uniformly with $\mathbf{a}(d) = 1$ capturing uniform growth and $\mathbf{a}(d) = d + 1$ capturing preferential attachment. 
Additionally, we allow new units to connect to a random number of parents $M$ called the combination factor, where each is chosen according to the attachment function.
(And our results do work very broadly in this spectrum of models.) 

\paragraph{Error models} The main consideration in our work is the impact of errors in cumulative knowledge. Errors originate because some new units of knowledge (new nodes) were incorrectly derived from the previous knowledge (parent nodes). We refer to these nodes as ``Conditionally False'' or $\CF$ nodes (as opposed to the rest of the nodes that are conditionally true). 
The fact that errors are not discovered immediately implies that these nodes continue to have children according to our growth process until some check reveals the error, at which point the nodes stop getting new children. All nodes that have a $\CF$ node as an ancestor are $\False$ nodes since their truth depends indirectly on an incorrect derivation, but discovering their falsehood is up to the checking process. Nodes that are not $\False$, called $\True$ nodes,  represent the actual knowledge accumulated by the process and the goal is to build checking processes that ensure that $\False$ nodes do not overwhelm the corpus of knowledge.

In our setting (as in previous settings), a node is deemed a $\CF/\CT$ node at the time of its birth, by an independent coin toss (with a parameter $\epsilon$ being the probability of its being $\CF$). The novelty in our error models is to allow some even more {\em adversarial} tampering.  With some positive probability $q$ at each growth step, we allow an adversary to inject a new node (which may itself be $\CF$ or $\CT$) with up to $r$ adversarially selected parents. We seek checking processes that allow such adversarial tampering with positive $q$ and bounded $r$. 

\paragraph{Checking process} In our model(s), nodes that are $\CF$ are not revealed to be $\False$ at birth and continue to have children. Incorrect nodes are discovered only when some newborn node has a given node as an ancestor and checks it and discovers the error. When the error is discovered, the entire path from the newborn node to the $\CF$ ancestor is proclaimed false $\PF$. $\PF$ nodes no longer participate in the growth process (in that they don't get any more children) --- but note that their descendant count may continue to grow due to the existence of $\CT$ and $\CF$ descendants. 

The checking process involves a number of choices and parameters: the probability with which a newly born node decides to check its parents, the sequence of ancestors that are checked, and the stopping condition. In the previous work these choices were made somewhat simpler due to the condition that each node had only one parent, so the sequence was determined naturally: check one's parent, then grandparent, and so forth up to a prescribed bound on the depth of the checking or till an error is found. In our setting, checking mechanisms become more complex and we outline several natural options including breadth-first search and exhaustive checking up to a certain depth limit. In all cases, the important condition is that the number of nodes checked is bounded independently of the size of the graph (to ensure that the effort of checking is only a constant factor larger than the effort of producing new units of knowledge) and the probability of checking is bounded away from 1 (to model the fact that not everyone may follow the rules, either out of laziness or malintent). 

The variability of the growth, the error, and the checking processes lead to an entire family of cumulative knowledge processes that we describe formally in the following section. 

\subsection{The family of processes}

We model the cumulative knowledge process described in the prior subsection as 
a sequence of states $X_0, X_1, \dots, X_t, \dots$, where each state $X_t = (\G_t, L_t)$ consists of body of knowledge represented by a directed acyclic graph $\G_t$ of size $t+1$ as well as an accompanying set of labels $L_t$ (each node is $\CT$, $\CF$ or $\PF$, where $\CT$ and $\CF$ nodes are both publicly considered $\PT$). The $i$th element of $L_t$ is the label of the node added at time $i$.

\begin{SCfigure}[.8]
    \centering
    \input{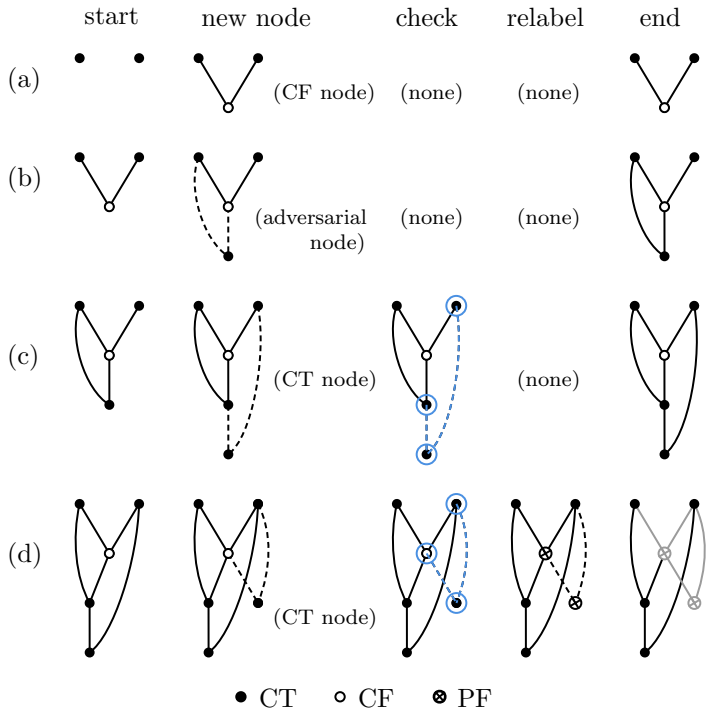}
    \caption{Example evolution of a cumulative knowledge process under \\ (a) adding an erroneous node and running no checks, \\ (b) adversarially adding a true node (no checks), \\ (c) adding a true node and running a check of depth 1, which does not successfully find the false node, \\ (d) adding a true node and running a check, successfully finding a false node.}
    \label{fig:evolution}
\end{SCfigure}

\paragraph{Generalized Cumulative Knowledge Processes (CKP)} 
A family is specified by a set of features
\begin{equation}\label{eq:params}
    \left(\textsc{Attach} = (\mathbf{a}, M), \textsc{Error} = (\e), \textsc{Adversary} = (q, r), \textsc{Check} = (p, k) \right),
\end{equation}
where $\mathbf{a}: \mathbb{Z}_{\geq 0} \to \mathbb{R}_{\geq 0}$ is an attachment function, $M$ the combination factor is a bounded random variable $(1 \leq M < \infty)$, and the remaining are all constants: $\e \in [0,1)$ the error probability, $q \in [0,1)$ the probability of an adversarial node and $r \in \N$ its maximal number of parents, $k \in \N$ the radius of a check, and finally $p \in [0,1)$ the checking probability. 

A CKP begins with a starting state $X_0$ being a $\CT$ node (though our analysis could allow for multiple $\CT$ nodes in $X_0$). Given $X_t = (\G_t, L_t)$, the state $X_{t + 1}$ is created through the following steps. 

First, a new node is added to $\G_t$. 
With probability $1 - q$, this step evolves non-adversarially, according to a random process. That is, the node connects to a random number of parent nodes $m$ drawn according to the random variable $M$, each of which is chosen according to the attachment function $\mathbf{a}$, which is a function of the number of $\PT$ children $\deg_{\PT}$ of a node. This means that, for each selection of a parent node, the node $v$ is chosen with probability proportional to $\mathbf{a}(\deg_{\PT}(v))$. 
The new node introduces an error (is labeled \CF) with probability $\e$ and is $\CT$ otherwise. After this, with probability $p$, the new node performs a local checking procedure. 
This checking procedure examines the truth values of ancestor nodes within a constant radius $k$; if any $\PF$ or $\CF$ node is detected, a path of nodes from the new node to this node is marked as $\PF$. 
 
Otherwise (with probability $q$), an adversary controls the next step, deciding which parent nodes the new node connects to (up to $r$ parents) as well as the label of the new node. 

\paragraph{Simple CKP} We also consider a simplification where we initialize $X_0$ as a $\CF$ node, and set $\e = 0$ and $q = 0$. That is, we start with a \CF\ node, followed by all non-adversarial \CT\ nodes.

\paragraph{Preferential-tree CKP example} As mentioned earlier, setting $\mathbf{a}$ according to the preferential attachment rule $\mathbf{a}(d) = d + 1$, $M$ to be the constant $1$, and $q = 0$ yields the tree-like CKP defined and analyzed in \cite{BMMS:23}.

\subsection{Main results}

Our results indicate that natural heuristics suffice for preserving the quality of large bodies of knowledge. We are interested in eliminating errors in bodies of knowledge, defined formally below. 

\begin{defn}[Error elimination] \label{def:error-elimination}
    The error effect in a CKP
    is completely eliminated if for all $\CF$ nodes $u$, there exists a time $t'$ such that for all $t \geq t'$, the sub-DAG $\G_t^{(u)}$ rooted at $u$ at time $t$ is completely marked as $\PF$.
\end{defn}

Our error elimination result holds for a general class of \textit{regular} attachment functions. 

\begin{defn}[Regular attachment functions] \label{def:regular-attachment}
$(b_1, b_2)$-regular attachment functions $\mathbf{a}: \Z_{\geq 0} \to \R_{\geq 0}$ satisfy the following boundedness conditions: 
(i) for all $d \in \mathbb{Z}_{\geq 0}$, $b_1 \leq \mathbf{a}(d + 1) - \mathbf{a}(d) \leq b_2$, with $b_1, b_2 \geq 0$, and (ii) 
$\mathbf{a}(0) \geq 1$. Note that $b_1 \geq 0$ implies $\mathbf a$ is non-decreasing.
\end{defn}

Setting $\mathbf{a}(0) = 1$, the $(1,1)$-regular and $(0,0)$-regular attachment functions are respectively the well-known preferential and uniform attachment models.

\begin{thm}[Error elimination, informal] \label{thm:intro-error-elimination} For natural checking processes that we propose, for every $(0, b)$-regular attachment function and every combination factor $M \geq 1$, and any adversary bound $r \in \Z_{\geq 0}$, there exists $ q_0 \in [0,1)$ such that for every low enough adversarial probability $q \leq q_0$, and every error probability $\epsilon < 1$, there exist $p_0 < 1$ and $k_0 \in \N$ such that for all large enough checking parameters $p \geq p_0$ and $k \geq k_0$, error elimination occurs.
\end{thm}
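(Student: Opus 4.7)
The plan is to fix an arbitrary $\CF$ node $u$ and track, over time, the set $S_t \subseteq V(\G_t)$ of descendants of $u$ (including $u$) that are currently labeled $\PT$ — i.e.\ have not yet been marked $\PF$. Error elimination at $u$ is exactly the statement that $|S_t|$ eventually hits and stays at $0$; since there are only finitely many $\CF$ nodes born up to any time and each new $\CF$ node is born at a specific step, a union bound reduces Theorem~\ref{thm:intro-error-elimination} to proving that $|S_t| \to 0$ almost surely for a fixed $u$. The ``stays at $0$'' part is automatic, because $\PF$ nodes take no further children, so once $S_t=\emptyset$ no new $\PT$ descendant of $u$ can ever be created; hence it suffices to show $S_t$ is empty at some (random) time.

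I would attack this with a potential-function / Lyapunov argument, of the kind the paper's macros already foreshadow (\texttt{LeavesComponentsPotential}, \texttt{ExponentialPotential}). A natural candidate is a weighted potential
\[
\Phi_t \;=\; \sum_{v \in S_t} f\bigl(\deg_{\PT}(v)\bigr),
\]
where $f$ is tailored to the $(0,b)$-regular regime: since $\mathbf{a}(\deg)$ grows at most linearly and $b_1=0$ allows $\mathbf{a}$ to be constant, $f$ should either be bounded (a size- or leaves-and-components potential) or a slowly decaying exponential so that attachment to a high-degree node does not blow up $\Phi$. A new non-adversarial child attaching to $v\in S_t$ changes $\Phi$ by $f(\deg_{\PT}(v){+}1)-f(\deg_{\PT}(v))+f(0)$, while a successful check of radius $k$ launched from the new node deletes the entire $\PT$-path between it and the caught ancestor, removing the corresponding $f$-weights from $\Phi$. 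The crucial computation is to show that for $p$ close to $1$ and $k$ large enough, a child lands in $S_t$ with probability proportional to (the attachment mass of) $S_t$, and conditionally on landing there it catches a $\CF/\PF$ ancestor within radius $k$ with probability at least some $\delta(k,\epsilon,b)>0$ independent of $t$.

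The target drift inequality takes the form
\[
\E[\Phi_{t+1}-\Phi_t \mid X_t] \;\leq\; -\gamma\,\Phi_t \;+\; C\,q,
\]
valid whenever $p\geq p_0$ and $k\geq k_0$, with $\gamma>0$ and $C<\infty$ depending only on $\mathbf{a},M,\epsilon,r$. Given this, a standard nonnegative (super)martingale / Markov-inequality argument yields $\E[\Phi_t] \lesssim (1-\gamma)^t\Phi_0 + Cq/\gamma$; choosing $q_0$ so that $Cq_0/\gamma$ is strictly less than the minimum positive value $\Phi_t$ can take (which is a positive constant since $f$ is bounded below on nonempty $S_t$), a Borel--Cantelli argument forces $\Phi_t=0$ infinitely often, hence eventually forever.

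The hard part will be the drift calculation in the face of the two generalizations over \cite{BMMS:23}: the DAG (not tree) structure, and the adversary. For the DAG issue, a single successful check marks only \emph{one} directed path of $\PT$ descendants as $\PF$, not the entire ancestor set, so $\Phi$ must be engineered to strictly decrease by a definite amount on each successful check regardless of which path is chosen; this is what pushes one toward a leaves-and-components style potential, since pruning any path of $S_t$ from a leaf removes at least one leaf or splits off at least one component. For the adversary, each adversarial insertion contributes up to $r$ fresh attachment points into $S_t$ and bypasses any check, which accounts for the additive $Cq$ term; we must then take $q_0$ small enough that this positive drift cannot keep $\Phi_t$ permanently above $0$. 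Threading the parameters $f, p_0, k_0, q_0$ so that $\gamma$ remains positive \emph{uniformly} across all $(0,b)$-regular $\mathbf{a}$, all bounded $M$, all $\epsilon<1$, and all $r$, is where the bulk of the technical work will lie.
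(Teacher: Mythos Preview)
Your overall framework is right---a Lyapunov potential over the $\PT$ false nodes, shown to be a supermartingale---but the specific potential you propose cannot work, and the gap is exactly the ``crucial computation'' you flag. A potential of the form $\sum_{v\in S_t} f(\deg_{\PT}(v))$ changes by at most a bounded amount per step: a failed check adds $O(1)$, while a successful check removes only the $O(k)$ nodes on the explored path. Your claimed uniform lower bound $\delta(k,\epsilon,b)>0$ on the probability that a depth-$k$ check catches a minimal false ancestor is false: once a component rooted at a $\CF$ node has grown to depth $\gg k$ and size $Z_i$, a new child landing in it catches the root with probability only $\alpha_i\asymp k/Z_i$, which can be arbitrarily small. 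So the drift you can actually extract is of order $-pk\cdot \alpha_i + (1-p\alpha_i)\cdot O(1)\asymp -pk^2/Z_i + O(1)$, which is positive for large $Z_i$. In particular your target inequality $\E[\Phi_{t+1}-\Phi_t\mid X_t]\le -\gamma\Phi_t + Cq$ is impossible with any bounded $f$, since $|\Phi_{t+1}-\Phi_t|$ is itself bounded.

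What the paper does instead is weight each $\PT$ false node by $c^{|v|}$, where $|v|$ is the graph distance from $v$ to its nearest \emph{minimal false} ancestor (a $\CF$ node or a $\CT$ node with a $\PF$ parent), and partition the false sub-DAG into components $\G_t[u]$ indexed by these minimal false nodes. The point is that when a check succeeds and $u$ is marked $\PF$, every surviving node in that component now has a strictly closer minimal false ancestor (some child of $u$), so every exponent $|v|$ drops by at least $1$ and the component's potential contracts by a factor $c^{-1}$---a decrease of order $(1-c^{-1})\Phi_i$, not $O(k)$. On the other side, the increase from a failed check is $O(\Phi_i/Z_i)$ because the new node's weight $c^{|v|}$ averages to $\Phi_i/Z_i$ over the component. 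Since also $\alpha_i\gtrsim k/Z_i$, both the negative and positive contributions scale like $\Phi_i/Z_i$, and the balance can be made negative by taking $k,p$ large. The adversary is handled not by an additive $Cq$ but by observing that its worst-case contribution $\approx c^h$ (with $h$ the maximum distance to a minimal false node) is itself $\lesssim \Phi(\G_t)$, since the witnessing chain already contributes $\gtrsim c^h$; so the adversarial drift is $q\cdot O(\Phi)$, absorbable into the negative term for small $q$. The distance-exponential weighting, and the accompanying notion of minimal false node, are the missing ideas.
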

For a formal version, see Theorem~\ref{thm:error-elimination-general}, which applies to local error-checking procedures given in Definition~\ref{checking-procedures}. More concretely, this result says that for arbitrarily bounded attachments and adversaries, if the adversarial steps occur with low enough probability, we can ensure the error is eliminated for any error probability, as long as the checking probability and depth are high enough. Interestingly, this result also holds for noisy checks. Specifically, if checking a $\CF$ node only reveals the error with some probability $p_e$ (independently, for each $\CF$ node), then we still have error elimination as long as the product with the checking probability $p$ satisfies the cutoff $p\cdot p_e \geq p_0$; see Remark~\ref{remark:noisy-check}. Note also that \cite[Theorems 3,7]{BMMS:23} can be obtained as a sub-case of our result, by setting $\mathbf{a}(d) = d + 1$ (preferential attachment), $M = 1$, and $q = 0$. 

We note in Section \ref{subsection:fraction-false-nodes} that this asymptotic error elimination also implies that, at any time, the proportion of undetected \False\ nodes is small; see Proposition \ref{prop:proportion-of-false-nodes}.

We also study the conditions under which error survival takes place, i.e., when we do not have error elimination. Specifically, error survival is defined as follows.

\begin{defn}[Error survival] \label{def:error-survival}
    The error effect in a CKP survives if with some positive probability, there exists a $\CF$ node $u$ such that for all time steps $t$, there exists at least one $\PT$ node in the sub-DAG $\G_t^{(u)}$ rooted at $u$.
\end{defn}

Our error survival results demonstrate the non-triviality of error elimination: without sufficiently frequent and extensive error-checking mechanisms the error effect may never be eliminated. First, a simple argument shows the following error survival result, which applies to all CKPs.
\begin{prop}\label{prop:error-survival-p-less-than-epsilon}
    For natural checking processes that we propose, for every CKP, whenever $p < \e$, the error effect survives forever with positive probability.
\end{prop}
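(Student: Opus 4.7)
The plan is to take as witness the first $\CF$-labeled node $u$ born by the process, and to show via a supercritical branching-process comparison that with positive probability the sub-DAG $\G_t^{(u)}$ retains a $\PT$-labeled node at every time $t$ --- precisely the condition for error survival with $u$ as witness.

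Since $p<\epsilon$ forces $\epsilon>0$, and since $q<1$ by hypothesis, the first non-adversarial step produces a node born $\CF$ with probability at least $(1-q)\epsilon>0$; condition on this event and call the $\CF$ node $u$. Let $N_t$ count the $\PT$-labeled, $\CF$-labeled nodes inside $\G_t^{(u)}$, including $u$ itself while still $\PT$. Under the conditioning, $N_0=1$, and $\{N_t \geq 1 \text{ for all } t\}$ implies error survival with witness $u$.

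I then analyze the drift of $N_t$. At each subsequent step the new node is born $\CF$ with probability $\epsilon$ and, conditional on the non-adversarial event (probability $1-q$), attaches to some node of $\G_t^{(u)}$ with strictly positive probability $\pi_t$ whenever $N_t \geq 1$ (bounded below by the ratio of attachment weights on $\G_t^{(u)}$ to the total weight, which is positive once $N_t \geq 1$ because $\mathbf{a}(0)\geq 1$ and each such node contributes a positive attachment weight). These two independent events contribute in expectation at least $(1-q)\epsilon\,\pi_t$ to $N_{t+1}-N_t$. Conversely, $N_t$ decreases only when some newly-born node performs a check (probability $\leq p$) whose marked path traverses a $\PT$ $\CF$-node of $\G_t^{(u)}$; for the local checks of radius $k$ proposed in the paper, the number of $\CF$-labeled $\PT$-nodes converted to $\PF$ per check is bounded by a constant $c(k)$. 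Under $\epsilon>p$, one couples $(N_t)$ with a supercritical Galton--Watson process on the generation scale ``next time $\G_t^{(u)}$ gains a new $\CF$-descendant'', and classical non-extinction yields $\P[N_t \geq 1 \text{ for all } t \mid u \text{ is born}]>0$, concluding the proof.

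The main obstacle is controlling the per-check decrement. A single check of radius $k$ can in principle convert several $\PT$ $\CF$-descendants of $u$ to $\PF$ simultaneously (the whole marked path). For the simplest natural check --- which marks only one path down to the first detected $\CF/\PF$ ancestor and therefore eliminates at most one previously $\CF$-labeled node per check --- the threshold $\epsilon>p$ is immediate. For richer BFS-style checks one must additionally argue that such aggregate killing can originate only from checks performed inside $\G_t^{(u)}$, and wrap the drift analysis inside a potential-function argument to maintain positive drift as long as $\epsilon>p$.
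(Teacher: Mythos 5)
Your core idea is the same as the paper's: fix a $\CF$ witness $u$, track a count of ``surviving false'' nodes in $\G_t^{(u)}$, and exploit the fact that new $\CF$ arrivals (rate $\epsilon$) outpace the at-most-one-per-check removals (rate $\leq p$). However, your execution differs in two respects and leaves one genuine gap.

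First, the potential. You track $N_t$, the number of $\PT$ $\CF$ nodes in $\G_t^{(u)}$. The paper's Lemma~\ref{lemma:minimal-false-potential-survival} instead tracks $\Phi_F = |\F_t|$, the number of \emph{minimal false} nodes, i.e.\ $\CF$ nodes together with roots ($\CT$ nodes with a $\PF$ parent). The latter is the cleaner choice because $|\F_t^{(u)}| = 0$ \emph{iff} the sub-DAG is entirely $\PF$; your $N_t \geq 1$ is only a sufficient condition for survival. Both counts decrease by at most $1$ per check, because \textsc{Stringy}, \textsc{BFS}, and \textsc{Exhaustive BFS} all stop at the first minimal false node found. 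This is why your concern in the final paragraph — that a richer local check might remove several $\CF$ ancestors at once along the marked path — is largely moot for the mechanisms the paper analyzes in this proposition; the check cannot pass through a $\CF$ node. The concern is only real for \textsc{Parent-wise BFS} (handled in the paper by an edge-by-edge argument that replaces $p$ with $p\E\{M\}$) and \textsc{Complete} (which the paper does not claim this result for).

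Second, the tool, which is where the gap lies. You invoke a ``supercritical Galton--Watson coupling,'' but this is not justified. Minimal false nodes in $\G_t^{(u)}$ do not reproduce independently: the attachment rule selects a parent from the whole DAG proportional to $\mathbf{a}(\deg)$, so offspring counts of different minimal false nodes are correlated and depend on the state of the rest of the graph, and the ``generation'' re-scaling you suggest (``next time $\G_t^{(u)}$ gains a $\CF$ descendant'') is not stated precisely enough to verify that the resulting process dominates a GW process. The paper instead works directly with the additive potential as a sub-martingale: after fixing a $t_0$ with $\Phi_F \geq C$, they truncate the (potentially large, since marking a path $\PF$ can create many new roots) upward increments to at most $1$, check the resulting bounded-increment process still has positive drift when $p < \epsilon$, and apply the sub-martingale non-vanishing lemma (\cite{BEMMS}, Lemma 3.6) to conclude. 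You should replace the hand-waved GW comparison with this kind of explicit martingale argument, and switch to counting minimal false nodes so the conclusion $\Phi_F > 0$ literally is survival.
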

See Lemma~\ref{lemma:minimal-false-potential-survival} for the formal version of this result. For CKPs with attachment functions $\mathbf{a}$ that are $(\mathbf{a}(0), b)$-regular and small values of $\E\{M\}$, we obtain more refined error survival results. These results depend on $b$, $\E\{M\}$, and the minimum value $\min(M)$ that $M$ can take.

\begin{thm}[Error survival, informal] \label{thm:intro-error-survival}
 For natural checking processes that we propose, for every $b \in \mathbb{R}_{\geq 0}$ and $(\mathbf{a}(0), b)$-regular attachment function $\mathbf{a}$, and every bounded random variable $M \geq 1$, 
 there exist small enough adversary parameters such that the error survives for all $p < p_0$ for some $p_0 \geq \e$, for any $k \in \N$. 
\end{thm}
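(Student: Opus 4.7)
The plan is to build a positive-probability survival event by comparing the sub-DAG rooted at a seed $\CF$ node to a supercritical branching-type process. I would first reduce to the Simple CKP (with $\e = 0$, $q = 0$, single $\CF$ root) via a monotone coupling: any additional $\CF$ node in the full CKP only adds further surviving sub-DAGs, so survival in the simple case implies survival in the general one, and $q \leq q_0$ is chosen just small enough to absorb the worst-case adversarial damage per step.

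Given that reduction, the core object is a weighted potential on the $\PT$-descendants of the seed $u$: let $\Phi_t := \sum_v \mathbf{a}(\deg_{\PT}(v))$, summed over $\PT$-descendants $v$ of $u$. This choice is natural because the probability that a fresh non-adversarial parent-choice lands inside the subtree equals $\Phi_t/S_t$, where $S_t$ is the global $\PT$-weight. I would compute $\E\{\Phi_{t+1} - \Phi_t \mid \F_t\}$ by decomposing the step into three contributions: (i) a positive gain from a new $\PT$ descendant, which adds $\mathbf{a}(0)$ for itself plus at least $\mathbf{a}(0)$ from the degree increment of its parent (using $\mathbf{a}(d+1) - \mathbf{a}(d) \geq \mathbf{a}(0) \geq 1$); (ii) a loss from checks, bounded by $p$ times the maximum $\PT$-weight destroyed by a single check---a constant depending only on $k$, $b$, and $\max M$; (iii) contributions from adversarial steps, controlled by the choice of $q_0$. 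This yields a drift lower bound of the form
\[
\E\{\Delta \Phi_t \mid \F_t\} \;\geq\; (1-\e)(1-p)\cdot c_1(\mathbf{a}, M) \cdot \frac{\Phi_t}{S_t} \;-\; p \cdot c_2(k, b, M) \;-\; q \cdot c_3(r, b),
\]
with $c_1$ strictly larger than what a naive $\e$-only argument gives, thanks to the degree-increment term supplied by the regularity lower bound.

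From this drift estimate I would deduce survival by a standard supercritical branching-process comparison: couple the growth of the $\PT$-subtree with a Galton-Watson process whose offspring distribution has mean strictly greater than $1$ as soon as $p < p_0$ for some $p_0 = p_0(\mathbf{a}, \e, M, k, b) \geq \e$, and invoke classical non-extinction of supercritical branching to conclude that $\Phi_t$ stays positive forever with strictly positive probability. The improvement from the naive $p_0 = \e$ of Proposition~\ref{prop:error-survival-p-less-than-epsilon} to $p_0 \geq \e$ (strict in the regime where $\mathbf{a}(0)$ and $b$ are comparable and $\E\{M\}$ is small) comes precisely from the extra weight contributed by the degree-increment of the attachment function, which is what the refined regularity hypothesis captures.

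The main obstacle is handling the checking step cleanly. A single check from a newborn $\PT$-descendant can mark an entire length-$k$ path as $\PF$, simultaneously removing up to $\Theta(k)$ weight-bearing nodes and lowering the degrees of their still-$\PT$ neighbors, so bounding the expected loss in $\Phi_t$ per check requires a careful combinatorial argument using the specific checking procedures of Definition~\ref{checking-procedures}. This is exactly where the hypotheses that $k$ is a constant, $M$ is bounded, and $\E\{M\}$ is small all become essential, since they control the size of the $\PT$-neighborhood exposed to any one check and thereby keep $c_2$ from overwhelming $c_1$ in the drift bound above.
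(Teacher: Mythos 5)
Your proposal diverges from the paper's argument in several places, and at least two of the divergences are genuine gaps.

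\paragraph{The potential is the wrong object.} You track $\Phi_t = \sum_{v} \mathbf{a}(\deg_{\PT}(v))$, the total attachment weight of the sub-DAG. A single check of depth $k$ can mark an entire length-$k$ path $\PF$, destroying $\Theta(k)$ nodes each carrying weight at least $\mathbf{a}(0)$ (and shaving off degree-increments of their neighbors as well), so your loss term $c_2$ necessarily grows with $k$. That forces your threshold $p_0$ to decay as $k \to \infty$, whereas both the informal statement and the formal Theorem~\ref{thm:general-ckp-error-survival} give a $k$-\emph{independent} threshold --- the paper explicitly emphasizes that ``a large checking depth cannot counteract the spread of errors when the overall checking probability $p$ is too small.'' The paper sidesteps this by instead tracking $\Phi_{F,L} = |\F_t| + |\L_t|$, the count of minimal-false nodes plus non-root leaves (suitably weighted in the general case). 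A check of any depth removes at most one minimal-false node from $\F_t$ and, while it may convert leaves to roots along the way, each such conversion moves a unit from $|\L_t|$ to $|\F_t|$ with zero net change --- so the decrement of $\Phi_{F,L}$ per check is $O(1)$ independent of $k$. That $k$-robustness is the structural point your potential misses.

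\paragraph{Your mechanism for improving beyond $p_0 = \e$ is different and insufficient.} You attribute the gain over the naive $p < \e$ bound to the degree-increment term $\mathbf{a}(d+1)-\mathbf{a}(d) \geq \mathbf{a}(0)$ inflating $\Phi_t$. The paper's actual mechanism is combinatorial: because $\mathbf{a}$ is $(\mathbf{a}(0),\cdot)$-regular, every non-root leaf has indegree at least $\min(M)$ and therefore contributes at least $(\min(M)+1)\,\mathbf{a}(0)$ to the global attachment weight $Z$, giving the sharp bound $\beta = \mathbf{a}(0)|\L_t|/Z < 1/(\min(M)+1)$ on the per-edge probability of hitting a leaf. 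That bound is what makes the expected leaf-destruction per new node strictly less than $1$ and yields a drift constant $1 - 2p - \E\{M\}/(\min(M)+1)$ that can be positive even when $p$ exceeds $\e$. Your drift estimate is instead of the form $c_1 \Phi_t/S_t - p c_2 - qc_3$, and the positive term is not uniformly bounded below: the ratio $\Phi_t/S_t$ can tend to zero as the global DAG grows while the sub-DAG stagnates, so the branching-process comparison you invoke has no clean offspring distribution and the ``supercritical'' claim is not justified as written. The paper's drift is constant (uniform in $t$) once one conditions on the new node attaching inside the sub-DAG, and survival then follows from the bounded-increment submartingale lemma (Lemma 3.6 of \cite{BEMMS}) applied to a truncated version of the potential.

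\paragraph{The reduction to the simple CKP is not available.} You propose to handle the general case by a monotone coupling onto the simple CKP ($\e = 0$, $q = 0$, single $\CF$ root), arguing that extra $\CF$ nodes only help survival. This direction of monotonicity is not proven in the paper --- the paper's Theorem~\ref{thm:monotonicity-p-k} establishes monotonicity only in the checking parameters $p,k$ for the simple tree-CKP with $M=1$, and monotonicity in $M$, $\e$, or $q$ is left as an open problem. Moreover the coupling is not obviously monotone in the direction you want: in the simple CKP the seed's sub-DAG \emph{is} the entire process and so receives every new attachment, while in the general CKP it competes with the rest of the graph for attachments and can be starved; survival of the former does not immediately imply survival of the latter. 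The paper handles the general case directly, working with the sub-DAG $\G_t^{(u)}$ rooted at a fixed $\CF$ node $u$ and conditioning on the event that the new node attaches inside it.

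In short, the correct proof hinges on (a) choosing a potential whose per-check decrement is $O(1)$ regardless of $k$ (which forces the leaves-plus-minimal-false count rather than a weight sum), and (b) the structural observation that leaves are expensive to attach to --- neither of which appears in your sketch.
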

For a formal version, see Theorem \ref{thm:general-ckp-error-survival}. In particular, this bound $p < p_0$ is better than the $p < \e$ bound from Proposition~\ref{prop:error-survival-p-less-than-epsilon} for several parameter settings. For instance, for attachment models with $(\mathbf{a}(0), b)$-regularity and small constant $M$, for small enough adversarial parameters $q$ and $r$, there exists $\e$ small enough such that the checking probability cutoff $p_0$ from Theorem~\ref{thm:intro-error-survival} exceeds $\e$. 

Interestingly, the error survival guarantees do not depend on the checking depth $k$, indicating that a large checking depth cannot counteract the spread of errors when the overall checking probability $p$ is too small.

\begin{rem}
    The error survival phase also justifies our choice of regular attachment functions. Indeed, for any CKP with attachment function growing as $\mathbf{a}(d) \gtrsim d^3$, the error survives forever with positive probability (see Lemma~\ref{lem:growing-a}). The same holds for attachment functions with ``holes" in their distribution, i.e., if $\mathbf{a}(d) = 0$ for any $d \in \Z_{\geq 0}$. These are ruled out by our monotonicity requirement, and indeed do not make too much sense in terms of dependency network modeling.
\end{rem}

The error elimination and survival results, while natural, require some work to show, due to the locality of the process and our inability to rely on the structure of the CKP over all probabilistic events. Indeed, even simple properties such as monotonicity in the checking parameters $p$ and $k$ (it should be easier to eliminate errors with higher checking probability or higher checking depth) are not obvious to show. In this paper, we prove that error elimination is \textit{monotonic} with respect to the checking parameters (checking probability and checking depth) for the simple CKP where each node connects to one parent node. This answers an open question presented in \cite{BMMS:23}. We conjecture that monotonicity holds for the case when nodes can connect to more than one parent node, and it remains an open question to prove this. The existence of monotonicity certifies that the family of models and the checking heuristics are natural and align with the intuition that more checking (higher probability or greater depth) can only improve the chances of eliminating all of the error effect. We prove the following result related to monotonicity.

\begin{thm}[Monotonicity with respect to checking parameters] \label{thm:intro-monotonicity}
    For all $p_2 \geq p_1$ and $k_2 \geq k_1$, if the error effect is completely eliminated in the simple CKP with features $(\textsc{Attach} = (\mathbf{a}, M=1),  \textsc{Check} = (p_1, k_1))$, then the error effect is completely eliminated in the simple CKP with features $(\textsc{Attach} = (\mathbf{a}, M=1),  \textsc{Check} = (p_2, k_2))$.
\end{thm}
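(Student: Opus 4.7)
The plan is to prove this by a coupling argument: construct both processes on a common probability space so that the invariant $\PF^{(1)}_t \subseteq \PF^{(2)}_t$ holds at every $t$. In the simple CKP with $M=1$ the single $\CF$ node is the root and every subsequent node descends from it, so error elimination in Process $i$ is exactly the almost-sure event $\bigcup_t \PF^{(i)}_t = V$; the containment invariant would therefore immediately transfer error elimination from Process~1 to Process~2, which is the statement of the theorem. The content is thus the existence of such a coupling.

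I would first reduce to the two univariate legs ``increase $p$ at fixed $k$'' and ``increase $k$ at fixed $p$'' by interpolating through $(p_1,k_2)$. In both legs I share randomness $(U_t,V_t)_{t\geq 1}$: the uniform $U_t$ governs the check indicator so that Process $i$ checks iff $U_t \le p_i$ (giving Process~2's check events as a superset of Process~1's), and $V_t$ is used via each process's inverse CDF to select a parent from its attachment law $\pi_t^{(i)} \propto \mathbf{a}(\deg^{(i)}_{\PT}(\cdot))$. The easy sub-cases of the inductive step --- no check in either process, or check in Process~2 only --- trivially preserve the invariant.

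The delicate sub-case is when both processes check at the new node $w$. Since the root is the only $\CF$ vertex, Process $i$'s check succeeds iff $w$ has a $\PF/\CF$ ancestor within depth $k_i$ in $T_t^{(i)}$; the invariant and $k_2 \geq k_1$ together make any witness for Process~1 serve as a witness for Process~2 at the label level. To promote this to the vertex-level containment of the newly marked $\PF$ paths, I would strengthen the invariant to a structural coupling of the two trees, enforced by choosing Process~2's parent of $w$ to be the lowest non-$\PF^{(2)}_t$ ancestor, in $T_t^{(1)}$, of Process~1's parent. The $M=1$ tree structure then identifies the root-path of $w$ in $T_t^{(2)}$ with the subsequence of the root-path in $T_t^{(1)}$ obtained by deleting $\PF^{(2)}_t$-vertices, so Process~1's newly-$\PF$ path is covered by Process~2's newly-$\PF$ path together with vertices that were already $\PF^{(2)}_t$, preserving the invariant.

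The main obstacle --- and the reason the paper restricts to $M=1$ --- is realizing this greedy, structure-preserving parent coupling with the correct marginals $\pi_t^{(1)}$ and $\pi_t^{(2)}$. The push-forward of $\pi_t^{(1)}$ through ``lowest non-$\PF^{(2)}_t$ ancestor'' need not equal $\pi_t^{(2)}$, so the coupling must be augmented by a residual transport that absorbs the marginal discrepancy without breaking the structural invariant; carrying this out requires a careful combinatorial analysis of how the attachment weights $\mathbf{a}(\deg^{(i)}_{\PT}(\cdot))$ aggregate along root-paths. The $M=1$ hypothesis enters essentially here, since only in the tree setting are root-paths well-defined lines along which such an aggregation can be done cleanly; this is precisely why the extension to $M \ge 2$ is left as a conjecture.
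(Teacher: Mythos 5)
Your overall framing is right --- monotonicity should follow from a coupling in which the $p_2$-CKP has ``caught up'' on everything the $p_1$-CKP has found --- and your reduction to the two univariate legs (fix $k$ vary $p$, then fix $p$ vary $k$) matches the paper's structure (Lemmas~\ref{lemma:monotonicity-proof-for-p} and~\ref{lemma:monotonicity-proof-for-k}). However, the core of your coupling has a genuine gap, which you yourself flag but do not resolve: you insist on a \emph{same-time} coupling in which both trees receive a new vertex at every step, so at each time $t$ you must couple the two attachment laws $\pi_t^{(1)}$ and $\pi_t^{(2)}$ directly, and the ``lowest non-$\PF^{(2)}_t$ ancestor'' map does not push $\pi_t^{(1)}$ forward to $\pi_t^{(2)}$. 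Any corrective ``residual transport'' necessarily re-routes some mass to a vertex that is \emph{not} the lowest non-$\PF^{(2)}_t$ ancestor of Process~1's parent, which destroys exactly the structural invariant (``Process~2's root-path of $w$ is Process~1's root-path with $\PF^{(2)}_t$-vertices deleted'') that you rely on in the check step. So the step you describe as ``requires a careful combinatorial analysis'' is in fact the whole theorem, and your proposal gives no mechanism for it; I do not see how to carry it out within the same-time framework.

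The paper avoids this problem entirely by giving up on the same-time coupling. It builds a single driving process $Z_t$ with an extended label alphabet (ordinary labels $\CT,\CF,\PF$ plus ``zombie'' labels $\ZCT,\ZCF$ for vertices alive in the $p_1$-CKP but already marked $\PF$ in the $p_2$-CKP, and $\ZNCT,\ZNPF$ for vertices that never should have been created in the $p_2$-CKP). The $p_1$-CKP $X_t$ is read off from $Z_t$ by forgetting the zombie markings, while the $p_2$-CKP $Y_s$ is read off by deleting zombies and their descendants --- and crucially, $Y$ is allowed to \emph{lag behind} via a mapping $\textsc{Y-time}(t)$: on steps where $Z$ attaches the new vertex to a zombie, $Y$ simply does not update. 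This makes the marginal-matching problem vanish. The new parent is always sampled from the single attachment law of $Z$ (which equals $\pi^{(1)}$), and \emph{conditional} on that parent being a non-zombie ($\CT/\CF$) vertex, its distribution is exactly $\pi^{(2)}$ --- because a $\CT/\CF$ vertex in $Z$ has the same $\PT$-degree as the corresponding vertex in $Y$ (no $\CT/\CF$ vertex has a zombie child, by the analogue of Lemma~\ref{zombie-ancestors-are-zombies}). There is never a need to transport probability mass between two mismatched laws. If you want to repair your argument, this time-dilation device is the missing idea: replace ``both trees grow one node per step'' with ``the $p_2$-tree grows only when the $p_1$-tree attaches to a still-alive-in-$p_2$ vertex,'' and the marginal problem dissolves.
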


\subsection{Related work}

We study general families of processes of societal knowledge accumulation. As mentioned earlier, prior results from \cite{BMMS:23} only deal with processes evolving according to preferential attachment in which each node connects to only one parent, and additionally do not account for adversarial behavior. When we focus on CKPs whose attachment function is preferential attachment and there is no adversarial behavior, our results imply the following phenomena that present contrasts between CKPs with one parent or multiple parents and answer questions raised in \cite{BMMS:23}.

First, it was proven in \cite{BMMS:23} that when each node connects to one parent node and the checks are shallow (meaning $k = 2$), for any $0 \leq p < 1$, the error effect in the simple preferential attachment tree-CKP $(M=1)$ with $\textsc{Check} = (p, k)$ survives with positive probability. On the other hand, we prove that if $\E\{1/M\} \leq 1/3$, for sufficiently large values of $p$, shallow checks eliminate the error effect completely for preferential attachment CKPs. Additionally, when $\E\{1/M\} \leq 1/2$, we prove that error elimination occurs for $k \geq 3$ and sufficiently large values of $p$, answering a question raised in \cite{BMMS:23}.

We now highlight a range of other related works.

\paragraph{Models of noisy computation} Our model can be thought of as a model of noisy computation, as it models the errors within and dependencies between units of knowledge, which may be viewed as computational units. There is a vast literature on noisy computation, see e.g.~\cite{vonNeumann:56, EvansSchulman:99, PMC:67, AMP:20}. As mentioned in~\cite{BMMS:23}, the main difference in the setting is that in noisy computation models, one is allowed to design the structure of the network, while the design choices in our setting are limited: e.g., in a scientific publication network, we cannot arbitrarily design the prior results that an author relies on.

A vein of this literature models the broadcasting of one bit of information in a noisy network, with error correction, see e.g.~\cite{information-spread,gray2001reader,MaMoPo:20}. These models do not involve any accumulation of new knowledge, rather they study the possibility of retaining some starting knowledge. 

\paragraph{Error-resilience of preferential attachment networks} Our paper considers a family of processes each evolving according to a general attachment function $\mathbf{a}: \mathbb{Z}_{\geq 0} \to \mathbb{R}_{\geq 0}$ that specifies the probability of attaching to a node as a function of its $\PT$-outdegree. A classic choice for $\mathbf{a}$ yields the preferential attachment model, a well-studied model of networks used to theoretically model the development of the web, citation networks, and other real-world network processes \cite{amaral2000classes,jeong2003measuring}. A range of previous papers study error resilience of preferential attachment networks. These papers, e.g. \cite{bollobas_riordan_2003, flaxman_frieze_vera_2007, Deijfen_2009, DeoCami2007, bollobas_riordan_2003_robustness},  focus on the retention or loss of aspects of the preferential attachment network's structure -- such as its scale-free property, degree distribution, or large components -- under various assumptions about deletions in the network. A deletion can be compared to marking a $\False$ node as $\PF$ in our paper. In contrast, our results do not rely on the retention of structural properties of preferential attachment networks and we instead conjecture that much of this structure is lost in the CKP. We also consider a model of deletion that happens as the graph is generated and which deletes nodes along \textit{paths}, contrasting the aforementioned papers. Thus the results on error-resilience of preferential attachment networks shed no light on knowledge accumulation as defined in this work. 

\paragraph{Multitype Preferential Attachment Networks}
We may consider a different model, where nodes connect in a preferential attachment fashion and the probability that a unit of knowledge is correct depends on the type (e.g. correct/incorrect) of the nodes it connects to. 
For such models, it is natural to study the fraction of correct nodes in the limit. Such models are analyzed in \cite{antunovic2016coexistence,briend2023broadcasting}. 

\subsection{Open problems}

\paragraph{Alternative attachment functions} In this paper, we considered a broad family of models of knowledge accumulation whose underlying corpuses of knowledge evolve according to an attachment function that depends on the outdegree of nodes. More generally one may ask about preserving the quality of large corpuses of knowledge where new units of knowledge attach to existing units according to other properties. For example, other properties such as the height of nodes, the time step in which a node was added, or some other notion of ``importance'' of nodes could be considered. 

Additionally, one may ask about attachment functions with alternative procedures in which parent nodes are not chosen independently and instead are correlated with each other. The attachment procedures we consider, which choose parent nodes independently, particularly make sense in contexts where the corpus of knowledge is cohesive (for example, theorems proven in a certain field of study). However, it would be compelling to consider broader settings as well. An example in which the choice of parent nodes is correlated is the geometric preferential attachment model (\cite{flaxman_frieze_vera_2004_Geo1, flaxman_frieze_vera_2007_Geo2}) in which nodes connect to existing nodes according to a combination of proximity and preferential attachment. Another example would be an attachment procedure in which, after the first parent is chosen, the remaining parent nodes are chosen to be within a fixed radius of the first parent node.

\paragraph{Monotonicity with respect to the checking parameters} We conjecture that error elimination is monotone with respect to the overall checking probability $p$ and the checking depth $k$. That is, for $p_2 \geq p_1$ and $k_2 \geq k_1$, if a CKP with $\textsc{Check} = (p_1, k_1)$ eliminates all errors, then a CKP with $\textsc{Check} = (p_2, k_2)$ (and the same parameter specifications for $\textsc{Attach}$, $\textsc{Error}$, and $\textsc{Adversary}$) eliminates all errors. We have proven such a monotonicity result on trees in the simple case, i.e. the case where $M = 1$, $\e = 0$, and $q = 0$.

When $M$ is a fixed constant, we also conjecture that error elimination is monotone with respect to the number of parents. That is, for $m_2 > m_1$, if a CKP with $\textsc{Attach} = (\mathbf{a}, M = m_1)$ eliminates all errors then a CKP with $\textsc{Attach} = (\mathbf{a}, M = m_2)$ (and the same parameter specifications for  $\textsc{Error}$, $\textsc{Adversary}$, and $\textsc{Check}$) eliminates all errors.

\begin{figure}[hbtp]
    \centering \textcolor{white}{\subfigure{\includegraphics[width=0.3\textwidth]{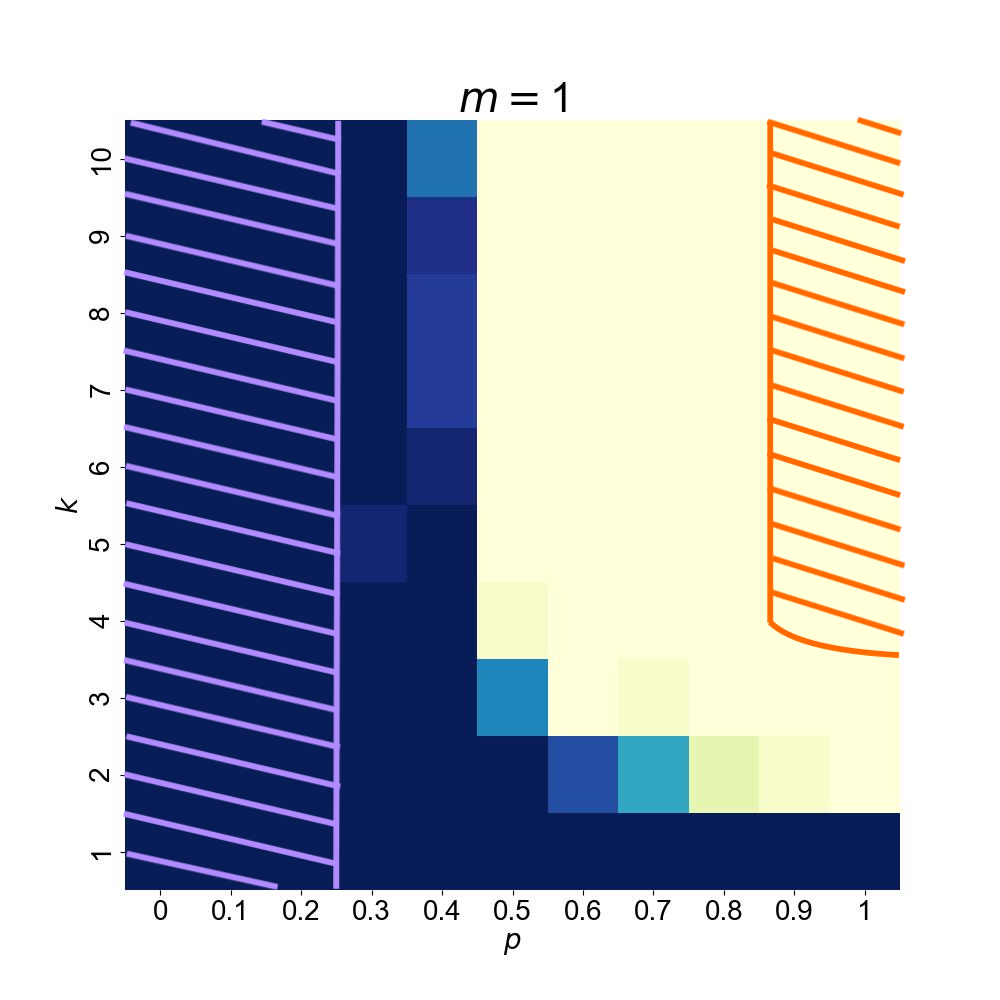}} 
    \subfigure{\includegraphics[width=0.3\textwidth]{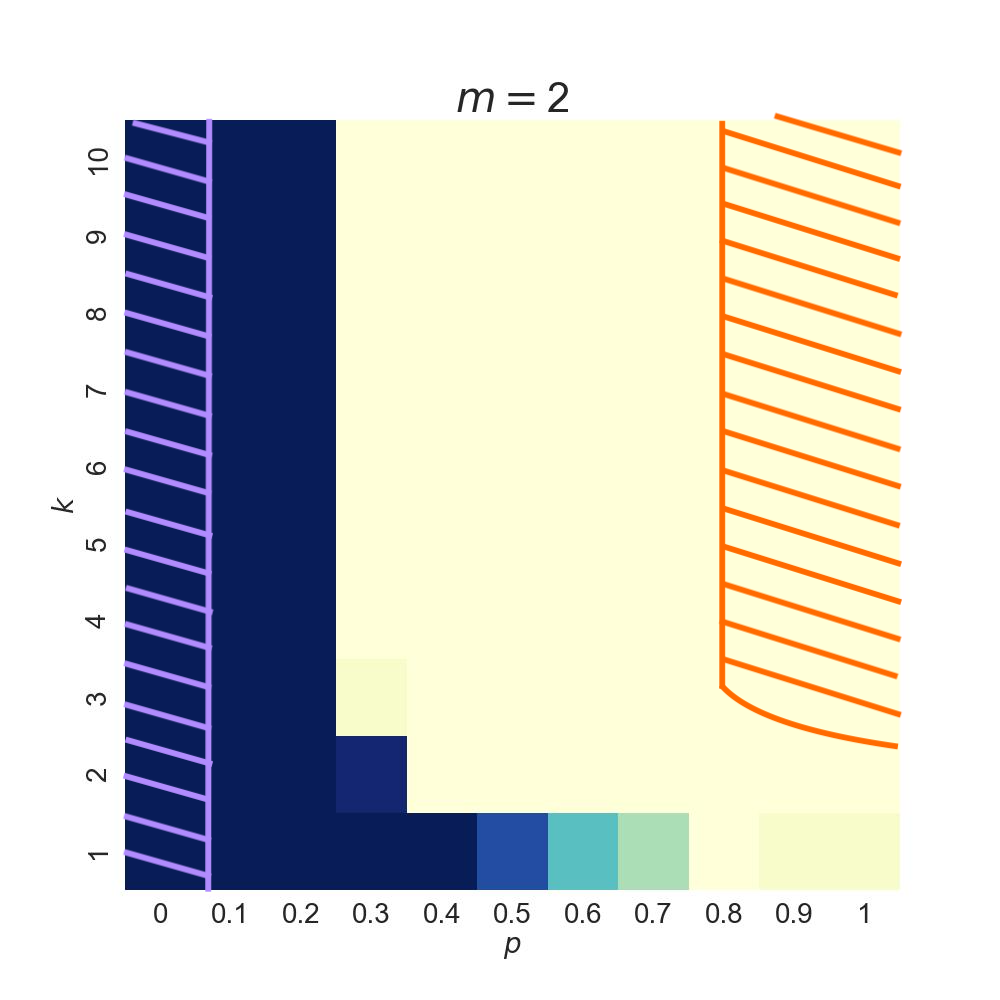}} 
    \subfigure{\includegraphics[width=0.3\textwidth]{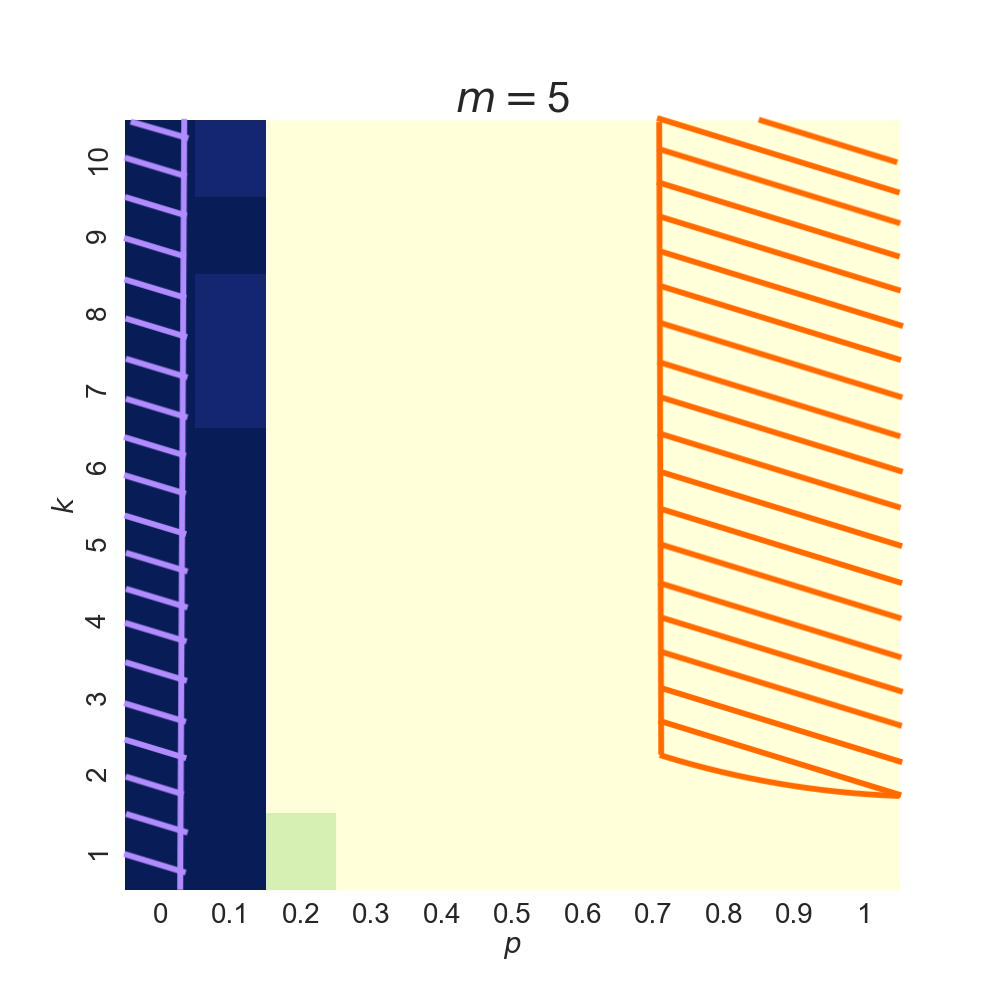}}
    \subfigure{\includegraphics[height=0.29\textwidth]{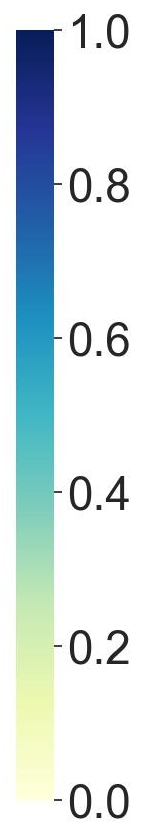}}
    }
    \vspace{-.6cm}
    \caption{ \textcolor{black}
    Simulations of the simple CKP with $\textsc{Check} = (p, k)$ evolving according to preferential attachment where new nodes connect to $m$ existing nodes, for fixed $m = 1, 2, 5$. The checking mechanism is the \textsc{Exhaustive BFS} checking mechanism. The parameter regime in which we prove that errors survive with positive probability is shaded in purple,
    while the proven error extinction region is shaded in orange.
    The heat map displays the percentage of trials that survived until time step 2000. We run 20 simulations for each $(m,p,k)$ choice, with an initialization of a chain of 25 nodes (one $\CF$ followed by 24 $\CT$ nodes) with $m$ edges between each node and its parent.}
    \label{fig:intro-plots}
\end{figure}

See Figure~\ref{fig:intro-plots} for a few simulations of CKPs with $M$ set to various constants which suggest that monotonicity holds in both contexts mentioned. More figures and information can also be found in the appendix~\ref{section:simulations}.

\paragraph{Parameters} We conjecture that there is a phase transition between error survival and elimination with respect to $p$ and $k$, for all processes in the family of knowledge accumulation processes considered. This phase transition may vary with the parameters for attachment, errors, and adversarial steps. Understanding the precise parameters that yield error survival and elimination requires further study. 

We created and ran CKP simulations for the special case where $\textsc{Attach} = (\mathbf{a}, M)$ sets $\mathbf{a}$ to be the preferential attachment function and $M = m$ to be constant. Experiments on these simulations indicate where the conjectured phase transition takes place with respect to $p$ and $k$ for different values of $m$. See Figure~\ref{fig:intro-plots} for the simple CKP, and Appendix~\ref{section:simulations} for the general CKP and further plots of both cases.

\paragraph{Structural properties of CKPs} Based on simulations we performed, we conjecture that CKPs evolving according to preferential attachment lose their preferential attachment structure when the checking probability $p$ is sufficiently large. We also conjecture that if the error survives in a CKP with preferential attachment and $p$ is sufficiently far away from $0$, the CKP becomes shallow, meaning every node is close in distance to all of its ancestor nodes. We remark that for $p$ that is very close to $0$, a CKP should retain its preferential attachment structure, due to the robustness of preferential attachment under small amounts of adversarial deletion~\cite{bollobas_riordan_2003, flaxman_frieze_vera_2007, Deijfen_2009, bollobas_riordan_2003_robustness}.

\subsection{Proof sketches}

We outline the key ideas used in the proofs of our main results.

\paragraph{Proof of Theorem \ref{thm:intro-error-elimination} (Error elimination)} The full proofs can be found in Section~\ref{section:error-elimination}.

In this proof sketch, we focus on error elimination in a simplified model where $X_0$ consists of one $\CF$ node and no additional $\CF$ nodes are added to the process $(\e = 0)$. The techniques used to prove the full error elimination results for the general CKP model are similar. To prove our error elimination results, we carefully design a potential over steps $X_t = (\G_t, L_t)$ of a CKP which on one hand we can prove forms a super-martingale converging to $0$, 
and on the other hand also upper bounds the number of false nodes in $\G_t$ (note that since the $X_0$ is \CF, the entire DAG is \False). 
We can then conclude that there exists a time $t'$ such that for all $t \geq t'$, the error effect in $\G_t$ is completely eliminated.

We define a potential that captures a natural quantity of a step of a CKP: the distances between nodes and their closest minimal false ancestor node. The closest minimal false ancestor node is the first node that would be found to be $\False$ by a check initiated at the node. Given a directed acyclic graph $\G_t$ with labels $L_t$ of the vertices of the graph, and $c > 1$, we define the \textit{minimum-distance potential} $\Phi(\G_t)$ as follows:
$$  \Phi(\G_t) = \sum_{v \in \G_t^{\PT,F}} \mathbf{a}(\deg(v)) \cdot c^{|v|},
$$
where $\G_t^{\PT,F} = \G_t^\PT$ is the sub-DAG of $\PT$ \False\ nodes in $\G_t$, $\mathbf{a}: \mathbb{Z}_{\geq 0} \to \mathbb{R}_{\geq 0}$ is the attachment function according to which the DAG evolves, and $|v|$ is the number of edges on the shortest path from $v$ to a minimal false node. Note that if $v$ is itself a minimal false node, then $|v| = 0$.

We prove error elimination under a checking mechanism that searches for the closest minimal false node (up to distance $k$ away from the new node). This has a natural correspondence with the minimum-distance potential, which measures the distance to this node. Let $m \sim M$ be drawn according to the combination factor. In the analysis, we consider the following cases.

\begin{itemize}
    \item When the next step is non-adversarial, there are three possible cases:
\begin{itemize}
    \item If the new node is $\CT$ and performs a successful check, then we can identify an entire sub-DAG of nodes whose minimum-distance values all reduce by at least one.
    \item If the new node is $\CF$ and performs a successful check, only the new node gets removed.
    \item If the new node does not perform a check or the check is unsuccessful, the new node contributes to the potential. Suppose the new node connects to parent nodes $u_1, \dots u_m$. Then $|v| = 1 + \min_{i \in [m]} |u_i|$. To obtain a dependence on $M$ in our error elimination result, we use that the minimum is less than the average: $ 1 + \min_{i \in [m]} |u_i| \leq  1 + \frac{1}{m}\sum_{i \in [m]} |u_i|$.
\end{itemize}
    \item Otherwise, an adversary determines the attachment of the new node (number of parent nodes and which parent nodes) as well as its truth value ($\CF$ or $\CT$). The worst-case contribution to the potential occurs when the adversary places the new node as far away as possible from a minimal-false node and connects the node to many parent nodes, though the increase in the potential is still bounded in this case.
\end{itemize}
We show that the expected change in the potential is negative at each time step in the CKP when the potential is not zero. The potential forming a super-martingale then allows us to show error elimination.

\paragraph{Proof of Theorem \ref{thm:intro-error-survival} (Error survival)} The full proofs for error survival can be found in Section \ref{section:error-survival}. 

We study error survival for generalized CKPs. These results indicate that the occurrence of error elimination in families of processes of knowledge accumulation is non-trivial. Our proofs also shed light on the components of knowledge accumulation processes that cause errors to propagate. First, for any CKP, we can prove that error survives whenever $p < \epsilon$; however, we also obtain more refined bounds depending on the other features of CKPs. In this section, we highlight the ideas of the proof of the more refined bounds. Our error survival results depend on a lower bound $\min (M)$ on the number of parent nodes that new nodes connect to. 

To prove error survival, we identify several key structural components of the CKP that propagate the error effect. The components identified are: $\CF$ nodes, root nodes ($\CT$ nodes with a $\PF$ parent), and leaf nodes. First, $\CF$ nodes introduce errors to a CKP. Root nodes (and the respective sub-DAGs rooted at these nodes) are $\False$ because they have a $\False$ ancestor; until marked as $\PF$, these nodes can yield future undetected $\False$ nodes. Lastly, leaf nodes propagate errors because there is a relatively low probability of connecting to a leaf node and checking if it has a $\False$ ancestor node. We use that the expected number of non-root leaf nodes that a new node with $m \sim M$ parents connects to is less than $1$. On the other hand, every new node is a leaf node, and thus the expected number of leaves grows over time.

Using these three structural components, we define a potential on DAGs which incorporates the number of minimal false ($\CF$ and root) nodes and leaf nodes. We show the expected change in the potential is positive at each time step in the CKP, which we then prove implies error survival.

\paragraph{Proof of Theorem \ref{thm:intro-monotonicity} (Monotonicity for simple tree-CKPs)} The full proofs for monotonicity can be found in Section \ref{section:monotonicity}. 

We prove monotonicity of error elimination with respect to the two checking parameters: $p$ and $k$. We separate the proof of the monotonicity of error elimination into two parts: monotonicity with respect to $p$, and monotonicity with respect to $k$. Since the two parts are similar, we now focus on the proof of monotonicity with respect to $p$. Below, we consider CKPs that evolve according to the same attachment function $a$ and perform checks of the same height $k$, and therefore only denote the simple tree-CKPs by their checking probability $p$, as CKP($p$).

For the proof, we construct a coupling of CKP($p_1$) and CKP($p_2$), where $p_2 \geq p_1$, such that CKP($p_2$) (which runs checks with higher probability and thus has fewer nodes) is embedded inside the coupled CKP($p_1$). At a high level, we ensure that all nodes in CKP($p_1$) are also nodes in CKP($p_2$) or have already been removed by the CKP($p_2$). When a new node is added to CKP($p_1$), if it connects to a node that is still alive (not caught to be \False) in the  CKP($p_2$), we also add it to the CKP($p_2$). If it connects to a node that was found to be \False\ in the CKP($p_2$), it is not added to the CKP($p_2$) and the CKP($p_2$) does not update. 

At each time step of the coupling, with probability $p_1$, both CKPs perform checks. With probability $p_2 - p_1$, only the CKP($p_2$) performs a check. We keep track of the nodes that are alive in the CKP($p_1$) but dead (caught to be \False) in the CKP($p_2$), which we call ``zombie'' nodes.

Because the CKP($p_2$) may not update when the CKP($p_1$) does, the two CKPs may evolve at different rates, but we can still always pair time steps in one process with the other to ensure that error elimination in the CKP($p_1$) implies error elimination in the corresponding CKP($p_2$). It is possible to generate every CKP($p_2$) through the mechanism considered in this coupling. Therefore, assuming the coupled CKP($p_1$) process eliminates all errors, we can conclude that error elimination is monotonic with respect to the checking probability. 

\section{The family of generalized Cumulative Knowledge Processes}\label{section:model-definitions}
We consider a family of processes that model knowledge accumulation. This family of processes includes and generalizes the Cumulative Knowledge Process of \cite{BMMS:23}, in which each new node depended on only one parent and attached according to a preferential attachment rule. Our family of processes extends the model simultaneously in several directions: it allows nodes to connect to a random number of parent nodes, allows more flexibility in the attachment procedure, and incorporates adversarial errors along with random (accidental) errors. We also consider different natural heuristics for checking for errors that are more suitable for this general family of processes.

\paragraph{Labels} Each node in a given CKP state is labeled according to $L_t$, where the $i$th element of $L_t$ is the label of the node added at the $i$th time step and indicates the private and public truth values of this node. Each label is from the set $\{\PF, \CF, \CT\}$, which respectively stand for ``Proclaimed False,'' ``Conditionally False,'' and ``Conditionally True.'' A node is $\False$ if it or any of its ancestors is $\CF$, and is considered $\True$ otherwise. Nodes that are labeled as $\CT$ and $\CF$ are both perceived as being $\True$ until they are checked. $\CF$ and $\CT$ nodes are also referred to as ``Proclaimed True'' ($\PT$) because of this. On the other hand, $\PF$ nodes have publicly been found to be $\False$. 

Each process in the family of processes considered is called a \textit{generalized Cumulative Knowledge Process} -- defined below -- and is specified by its features.

\begin{defn}
    A generalized Cumulative Knowledge Process (CKP) with features
    \begin{equation}\label{ckp-parameters}
        \left(\textsc{Attach} = (\mathbf{a}, M), \textsc{Error} = (\e), \textsc{Adversary} = (q, r), \textsc{Check} = (p, k) \right)
    \end{equation}
    consists of a sequence $X_0, X_1, X_2, \dots$, where at each time $t$, $X_t = (\G_t, L_t)$. Here, $\G_t$ is a directed acyclic graph of $t + 1$ and $L_t$ is an accompanying set of labels. Each node in $\G_t$ is labeled from the set $\L = \{\PF, \CF, \CT\}$; in other words, $L_t \in \L^t$. Also, $\mathbf{a}: \mathbb{Z}_{\geq 0} \to \mathbb{R}_{\geq 0}$ is a function, $M$ is a bounded random variable $(1 \leq M < \infty)$, and the parameters in the remaining features are all constants.

    The starting state $X_0$ consists of a $\CT$ node (though our analysis could allow for multiple $\CT$ nodes in $X_0$). Thus, $\False$ nodes are those with a $\CF$ ancestor added at some error introduction phase. See Figure~\ref{fig:evolution-general} for an example of an evolution step.

    The evolution of the generalized CKP can be broken up into the five following steps. Given a generalized CKP with features (\ref{ckp-parameters}) at state $X_t = (\G_t, L_t)$, the next state $X_{t+1}$ is determined as follows.
    \begin{enumerate}    
        \item \textbf{Non-adversarial step.} With probability $1 - q$, perform the following steps (non-adversarial step):
        \begin{enumerate}
            \item \textbf{Choosing the number of parents.} Draw $m \sim M$. This is the number of parent nodes that the new node connects to.
    
            \item \textbf{Selecting parents.} If the DAG is entirely $\PF$, i.e., $L_t = \{\PF, \dots, \PF\}$, then the process stops and $X_{t+1} = X_t$. Otherwise, $m$ random $\PT$ nodes $u_i \in \G_t^\PT$ are chosen with probability proportional to $\mathbf{a}(\deg_{\PT}(u_i))$; that is, according to the attachment function $\mathbf{a}$. (Note that the new node can choose the same parent multiple times.)
            \item \textbf{Creating a new node.} A new node $v$ is created, with parents $u_1, \dots u_m$. 
            \item \textbf{Error introduction.} Label $v$ as $\CF$ with probability $\e$ and $\CT$ with probability $1-\e$. 
            \item \textbf{Error checking.} With probability depending on $p$, a given checking mechanism is applied to $v$. See the definition of checking procedures below (Definition \ref{checking-procedures}) for the options of checking mechanisms we consider.
        \end{enumerate}
        \item \textbf{Adversarial step.} Otherwise (with probability $q$), an adversarial step is performed as follows.
        \begin{enumerate}
            \item An adversary chooses any number of parent nodes -- up to a maximum of $r$ --  and chooses which parent nodes the new node connects to (however they wish to). A new node $v$ is created with these parents, and the adversary chooses the truth value ($\CT$ or $\CF$) of this node.
        \end{enumerate}
            
    \end{enumerate}
\end{defn}

\begin{figure}[hbtp]
    \centering
    \input{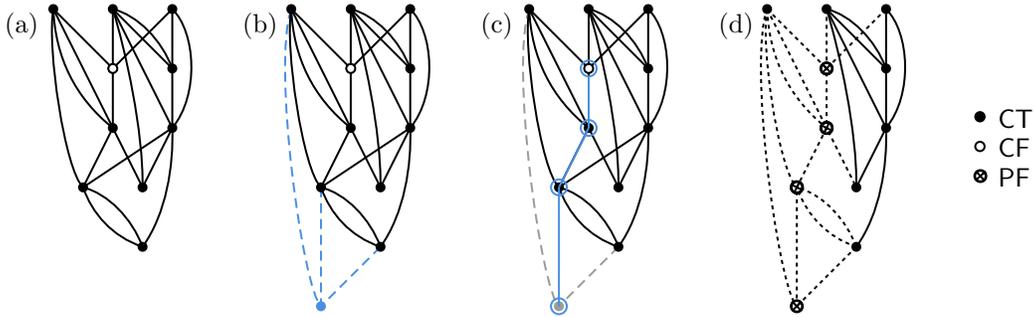}
    \caption{One non-adversarial evolution step of a CKP with $\textsc{Check} = (p, k=3)$ and $\textsc{Attach} = (\mathbf{a}, M=3)$ with checking mechanism \textsc{Stringy}. Node labels $\PF, \CF$, and $\CT$ are respectively represented by crossed, empty, and filled circles. (a) the initial CKP state $X_t$; (b) the result of steps (i)-(iv) in which a new \CT\ node is added; (c) step (v), a random path of length $k=3$ is checked and stops at a \CF\ node; (d) all visited descendants of the \CF\ node are marked $\PF$; this is $X_{t+1}$.}
    \label{fig:evolution-general}
\end{figure}

\begin{defn}[Checking procedure]\label{checking-procedures} Once the new node attaches to a state in a CKP, it performs a checking mechanism with probability $p$ (specified in the parameters of the model). This checking mechanism is \textit{local} in the sense that nodes access the hidden truth value of ancestors within a distance of $k$ (a small constant, also specified in the parameters of the model) when performing a check.

Consider the $t$-th step $X_t = (\G_t, L_t)$ of a CKP. Several of the checking mechanisms mentioned below check nodes until a \textit{minimal false} node is reached. Let the sub-DAG of roots ($\G_t^R$) consist of \CT\ nodes with one or more \PF\ parents. Let $\G_t^\CF$ denote the sub-DAG of nodes labeled $\CF$. Then the minimal false nodes $\F_t = \G_t^\CF \cup \G_t^R$ are the $\CF $ nodes and roots in $\G_t$.

We analyze the following checking mechanisms, listed roughly in order of the amount of nodes that can be deleted in one checking step. For nodes $u$ and $v$, let $\dist(u,v)$ denote the number of edges on the shortest path between $u$ and $v$. 

\begin{enumerate}
    \item \textbf{\textsc{Stringy}}: With probability $p$, check one random path of height $k$ up from $u$, stopping at the first \CF\ or \PF\ node reached. If one is found, flag it and its descendants along the path checked as \PF. 
    \item \textbf{\textsc{BFS}}: With probability $p$, perform a breadth-first search (BFS) starting from node $v$ up to depth $k$ (i.e., all nodes $u$ checked have $\dist(u,v) \leq k$), stopping at the first minimal false node reached.
    Flag as \PF\ this node and its descendants that were seen along the BFS from $v$.

    \item \textbf{\textsc{Exhaustive BFS}}: For each parent $u_i$ of $v$, one at a time, with probability $p$, check $v$ and then perform a BFS from $u_i$ up to depth $k-1$ (i.e., all checked nodes $u$ have $\dist(u,u_i) \leq k-1$). Stop the checking procedure at the first minimal false node reached. Flag as \PF\ this node and its descendants seen along the BFS from $u_i$. 
    
    \item \textbf{\textsc{Parent-wise BFS}}: Do the same thing as in the \textsc{Exhaustive BFS}, but instead of stopping the check completely as soon as a minimal false node is found, restart the check for the next parent $u_{i+1}$. A maximum of $m$ minimal false nodes can now be found (1 per parent $u_i$), rather than 1.  
    
    \item \textbf{\textsc{Complete}}: For each parent $u_i$ of $v$, with probability $p$, check $v$ and all nodes that are within distance $k-1$ of $u_i$. This is equivalent to doing the \textsc{Exhaustive BFS} but continuing after finding any minimal false node. Now, if each parent performs a check, all \CF\ and root nodes within distance $k$ can be found. 
\end{enumerate}
\end{defn}

\begin{figure}[hbtp]
    \centering
    \input{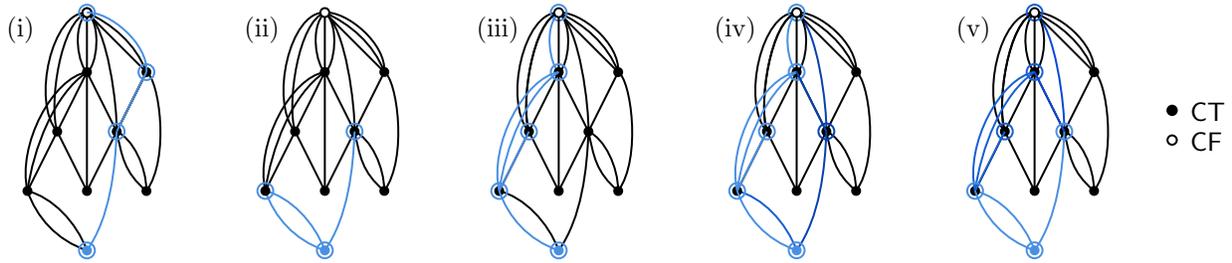}
    \caption{Visualizations of various checking mechanisms. (i) \textsc{Stringy} with $k=3$. (ii) \textsc{BFS} with $k=1$. (iii) \textsc{Exhaustive BFS} with $k=3$, which we note stopped as soon as it found the \CF\ node. (iv) \textsc{Parent-wise BFS} with $k=2$, which found one more node than the \textsc{Exhaustive} check. (v) \textsc{Complete} with $k=2$.
    }
    \label{fig:checking-mechanisms}
\end{figure}

While the details of these mechanisms differ, at a broader level we aim to theoretically justify that natural local checking heuristics with certain reasonable parameters can indeed eliminate all of the errors in a broad family of processes of knowledge accumulation, and also prove that errors survive forever under certain parameters even when local checking is performed.

Note that in our model, the discovery of a False node via a local check does not propagate to the (False) sub-DAG of this node. We believe this is a reasonable and important assumption: in a large network, if somebody finds an error in a paper, they may report it to the author of the erroneous paper and the few other papers that were checked in this procedure, but they will probably not have the time nor means to contact all papers that rely on this result (which potentially include very distant descendants); there also does not exist any centralized agency that would perform such a clean-up.

\paragraph{Simple CKP} Several results in this paper involve the following simplification of the model, in which there is no error introduction ($\e = 0$), no adversarial behavior $(q = 0$), and we begin with a $\CF$ node to avoid trivialities. In this simplification, the entire DAG is \False.
\begin{defn}
    A simple CKP with features $(\textsc{Attach} = (\mathbf{a}, M), \textsc{Check} = (p, k))$ sets $\e = 0$ and $q = 0$, and $X_0$ consists of one $\CF$ node. See Figure~\ref{fig:evolution-simple}.
\end{defn}
\begin{figure}[hbtp]
    \centering
    \input{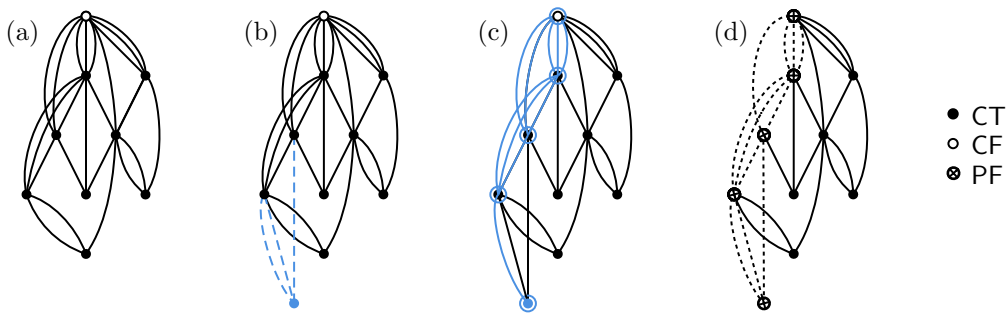}
    \caption{One non-adversarial evolution step of a simple CKP with $\textsc{Check} = (p, k=4)$ and $\textsc{Attach} = (\mathbf{a}, M=3)$ with \textsc{Exhaustive BFS} checking mechanism. Steps (a) and (b) are as in Figure~\ref{fig:evolution-general}; (c) the first parent hits the probability $p$ and a BFS is performed, which finds the original \CF\ node and stops; (d) all visited descendants of the \CF\ node are marked \PF. Note in this case, the \textsc{Parent-wise BFS} and \textsc{Complete} checks would have marked the same set of nodes as \PF.}
    \label{fig:evolution-simple}
\end{figure}

At a high level, the reason for considering this simplified model is that it can be connected to the smallest erroneous components within the generalized CKP in the family of processes considered. The smallest erroneous components consist of a $\CF$ or $\PF$ node and all of its descendants.

\section{Proof of Error Elimination} \label{section:error-elimination}

In this section, we study error elimination for generalized CKPs. The results here imply Theorem~\ref{thm:intro-error-elimination}.
We first describe the potential that is used to prove both the simple and general CKP extinction results. 

\paragraph{Minimum-distance potential} To specify the potential, we first define the sub-DAG of \textbf{\PT\ \False\ nodes} $\G_t^{\PT, F}$, \CF\ nodes and \CT\ nodes with some \CF\ or \PF\ ancestor (i.e., also some root ancestor). 
Now consider the following potential $\Phi$ on a CKP state $X_t = (\G_t, L_t)$:
\begin{equation}\label{minimum-distance-potential}
  \Phi(\G_t) = \sum_{v \in \G_t^{\PT,F}} \mathbf{a}(\deg(v)) \cdot c^{|v|},
\end{equation}
where $|v|$ is the number of edges in the shortest path from $v$ to a minimal false node (if $v$ is itself a minimal false node, then $|v| = 0$)
and $c > 1$, which we fix in the proofs of Lemmas~\ref{lemma:error-elimination-supermartingale-simple} and \ref{lemma:error-elimination-supermartingale-general}. This potential satisfies $\Phi(\G_t) \geq |\G_t^{\PT,F}| \geq |\G_t^{\PT, (u)}|$ for any $\CF$ node $u$, i.e., it upper bounds the number of $\PT$ descendants $|\G_t^{\PT, (u)}|$ of any $\CF$ node.
Note that in the simple CKP case, $\G_t^{\PT,F} = \G_t^\PT$ since all nodes are \False.

\paragraph{BFS-components}
The reason we choose this potential is that we can partition the \PT\ \False\ sub-DAG into disjoint components according to their closest minimal false node, and split up $\Phi$ accordingly. Recall from Definition \ref{checking-procedures} that minimal false nodes are $\CF$ nodes and roots (which are defined as $\CT$ nodes with one or more $\PF$ parents). The partitioning is thus indexed over $\F_t$, and we denote the components $\{\G_t[u]\}_{u \in \F_t}$. 

We now describe the assignment of \PT\ \False\ nodes into {BFS-components} $\{\G_t[u]\}_{u \in \F_t}$. 
For each $v \in \G_t^{\PT,F}$, let $u_F \in \F_t$ be the closest \CF\ or root node 
\[ \argmin_{u \in \F_t} \, \dist(u, v), \]
chosen uniquely by breaking ties according to the BFS ordering, i.e., if this minimum is attained for several $u \in \F_t$, we set $u_F$ to be the first one reached in a BFS search starting from $v$. 
See Figure~\ref{fig:BFS-partition} for visualizations of this partition. 

We note that each $\G_t[u]$ is connected, because if some $w$ is in $\G_t[u]$, all of its ancestor nodes along its shortest path to $u$ must also be in $\G_t[u]$.
Also, $|\G_t[u]| \geq 1$ for each $u \in \F_t$, as it at least contains $u$ itself. 

\begin{figure}[hbtp]
    \centering
    \input{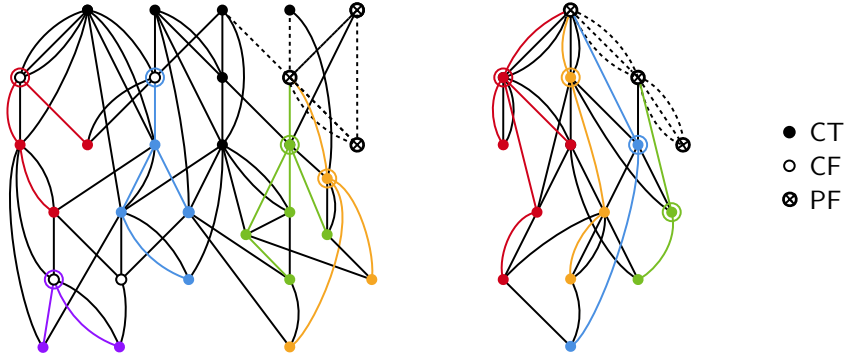}
    \caption{A partition of a general (left) and simple (right) CKP state into BFS-components, denoted in the figure by different colors. The circled nodes are the minimal false nodes in $\F_t$. Notice the tie-breaking based on the BFS left-to-right ordering.}
    \label{fig:BFS-partition}
\end{figure}

We can now write the minimum-distance potential equivalently as
\begin{equation}\label{eq:potential-partition}
  \Phi(\G_t) = \sum_{u \in \F_t} \Phi(\G_t[u]) \ \text{ where } \ 
  \Phi(\G_t[u]) = \sum_{v \in \G_t[u]} \mathbf{a}(\deg(v)) \cdot c^{\dist(u, v)}.
\end{equation}

\subsection{Error elimination in the simple CKP}

We begin by proving the error elimination (see Definition \ref{def:error-elimination}) in the simple CKP setting, which highlights several key ideas that we then use to analyze the generalized CKP model. 

\begin{thm}[Simple CKP Error Elimination] \label{thm:error-elimination-simple}
    For the \textsc{Exhaustive BFS}, \textsc{Parent-wise BFS} and \textsc{Complete} checking mechanisms, for every $b \in \mathbb{R}_{\geq 0}$, $(0, b)$-regular attachment function $\mathbf{a}$, bounded random variable $M \geq 1$, $p \in (0, 1]$ and $k \geq 2$ satisfying
    \begin{equation}
        \max \left(\frac{b + 3 \cdot \mathbf{a}(0) \cdot \E\{1/M\}}{b + 3 \cdot \mathbf{a}(0) \cdot \E\{1/M\} + 2/3}, \frac{b + 3 \cdot \mathbf{a}(0) \cdot \E\{1/M\}}{((k - 1) \mathbf{a}(1) + \mathbf{a}(0)) \cdot 2/3} \right) \leq p \leq 1,
    \end{equation}
    the error effect is eliminated in the $(\textsc{Attach} = (\mathbf{a}, M), \textsc{Check} = (p, k))$-simple CKP (see Definition~\ref{def:error-elimination}).
\end{thm}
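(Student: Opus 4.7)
The strategy is to use the minimum-distance potential $\Phi(\G_t)$ from~\eqref{minimum-distance-potential} with a carefully chosen constant $c > 1$ (specifically $c = 3$, which produces the factor $1 - 1/c = 2/3$ appearing in the hypothesis) as a nonnegative supermartingale. I would show $\E[\Phi(\G_{t+1}) - \Phi(\G_t) \mid \G_t] \leq 0$ with the strict decrease bounded uniformly away from zero whenever $\Phi(\G_t) > 0$. Since in the simple CKP every $\PT$ node is $\False$, one has $\Phi(\G_t) \geq \mathbf{a}(0) \cdot |\G_t^{\PT,F}| \geq |\G_t^{\PT,F}|$, and $|\G_t^{\PT,F}|$ is integer-valued, so the martingale convergence theorem forces $|\G_t^{\PT,F}| = 0$ in finite time almost surely, yielding error elimination in the sense of Definition~\ref{def:error-elimination}.

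The heart of the proof is the supermartingale inequality, which I would establish by decomposing $\E[\Phi(\G_{t+1}) - \Phi(\G_t) \mid \G_t]$ into three contributions: (i) the new node $v$, with $|v| = 1 + \min_i |u_i|$, contributing $\mathbf{a}(0) \cdot c^{|v|}$; (ii) the degree updates of the $m$ chosen parents $u_i$, each contributing at most $b \cdot c^{|u_i|}$ by $(0,b)$-regularity; (iii) the check, triggered per parent with probability $p$, which removes a sub-DAG containing the closest minimal false node when some $|u_i| \leq k-1$. Since parents are drawn with probability $\mathbf{a}(\deg(\cdot))/T(\G_t)$ where $T(\G_t) = \sum_{v \in \G_t^{\PT}} \mathbf{a}(\deg(v))$, each contribution factors through $\Phi(\G_t)/T(\G_t)$. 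Combining the min-vs-average bound $c^{\min_i |u_i|} \leq \frac{1}{m}\sum_i c^{|u_i|}$ with averaging over $m \sim M$, and amortizing the single new-node weight across the $m$ parents that can trigger a check, produces the factor $\E\{1/M\}$ in front of the $3\mathbf{a}(0) = c\mathbf{a}(0)$ term.

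For the check-induced decrease, when the check fires from some parent $u_i$ at distance $d = |u_i| \leq k-1$, the \textsc{Exhaustive BFS} mechanism (and \emph{a fortiori} \textsc{Parent-wise BFS} and \textsc{Complete}) marks $\PF$ at least the BFS path from $u_i$ to the closest minimal false node $u_F$. This removes from $\Phi$ at least $\mathbf{a}(0) \cdot c^0$ (the $u_F$ contribution) plus up to $k-1$ intermediate nodes of degree $\geq 1$ carrying weights $\mathbf{a}(1) \cdot c^{\dist(u_F,\cdot)}$, for a total contribution at least $(k-1)\mathbf{a}(1) + \mathbf{a}(0)$ before geometric weighting. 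Multiplying by the check probability $p$ and by the factor $(1 - 1/c) = 2/3$ arising from distance shortening $c^{|v|} \to c^{|v|-1}$ for surviving descendants recovers the shape of the hypothesis. The two bounds in the $\max$ correspond to two regimes: the first ensures the generic per-step balance $(2p/3) \geq (1-p)(b + 3\mathbf{a}(0)\E\{1/M\})$, and the second forces $k$ large enough that the guaranteed weight $(k-1)\mathbf{a}(1) + \mathbf{a}(0)$ accumulated over a deep check outpaces the worst-case increase.

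The main obstacle is correctly handling the re-creation of minimal false nodes when a check fires: marking a sub-DAG $\PF$ converts previously non-minimal descendants into new roots in $\F_{t+1}$, each re-entering $\Phi$ at distance zero with weight $\mathbf{a}(\deg)$. I would use the BFS-component decomposition~\eqref{eq:potential-partition} to show that these re-entries are strictly offset by the savings from the removed sub-DAG, since descendants previously at large distance $c^{|v|}$ from $u_F$ are now either removed along the BFS or re-attached to a \emph{closer} minimal false node (thereby decreasing their exponential weight). A secondary subtlety is that checks are triggered independently per parent and the \textsc{Exhaustive BFS} stops at the first found minimal false node, so the analysis must account for which parent triggers the (only) successful check while lower-bounding the decrease by considering the most favorable parent and averaging over $m \sim M$. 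Once these points are handled, coefficient matching with $c = 3$ yields precisely the stated bounds.
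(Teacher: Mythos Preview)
Your overall approach matches the paper's: the minimum-distance potential with $c=3$, the BFS-component decomposition of~\eqref{eq:potential-partition}, the min-vs-average inequality to split the new node's contribution across parents, and the martingale convergence conclusion. However, you have misidentified the origin of the quantity $(k-1)\mathbf{a}(1)+\mathbf{a}(0)$, and this confusion would prevent the computation from closing.

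In the paper's analysis, $(k-1)\mathbf{a}(1)+\mathbf{a}(0)$ is \emph{not} the potential weight removed by a successful check. It is a lower bound on the \emph{attachment weight} of the first $k$ nodes along the path from the minimal-false node $u_i$ of a BFS-component, and hence a lower bound on $\alpha_i Z_i$, where $\alpha_i$ is the conditional probability that the edge landing in component $i$ lands within distance $k-1$ of $u_i$ (so that the check can reach $u_i$). The check-induced decrease, on the other hand, is obtained purely from distance shortening: once $u_i$ is removed, every remaining node $v'$ in the old component has $|v'|$ drop by at least $1$ (its new closest minimal-false ancestor is some child of $u_i$), so the component's potential drops by at least a factor $c^{-1}$, i.e.\ the decrease is $(1-c^{-1})\Phi_i$. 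Thus the expected decrease from edge $i$ is $p\alpha_i(1-c^{-1})\Phi_i$, and after dividing through by $\Phi_i/Z_i$ one compares $p\cdot((k-1)\mathbf{a}(1)+\mathbf{a}(0))\cdot(2/3)$ against the increase $b+3\mathbf{a}(0)\E\{1/M\}$. Your description treats the path removal and the distance shortening as two separate effects to be multiplied together, and interprets $(k-1)\mathbf{a}(1)+\mathbf{a}(0)$ as a \emph{potential} contribution; as written this would not produce a term scaling with $\Phi_i$, and the two bounds in the hypothesis would not fall out. The ``main obstacle'' you flag (new roots created after a check) is exactly what the $(1-c^{-1})\Phi_i$ bound handles in one stroke, with no separate accounting for the removed path needed.

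A minor point: to pass from supermartingale convergence to $\Phi=0$ in finite time, what you need is not that the \emph{expected} decrease is bounded away from zero, but that the \emph{increments} satisfy $|\Phi(\G_{t+1})-\Phi(\G_t)|\geq 1$ whenever $\Phi(\G_t)>0$ (a new node either adds at least $\mathbf{a}(0)\geq 1$ or its check removes at least that much), so the a.s.\ limit cannot be positive.
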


For concreteness, we give some explicit values in the preferential attachment setting. For preferential attachment CKPs, $p \geq 6/7$ upper bounds the first term in the maximum for all $M \geq 1$. As for the second term, we have a non-trivial ($\leq 1$) upper bound if $k \geq 2$ and $\E \{1/M\} \leq 1/3$; and if $k \geq 3$ and $\E \{1/M\} \leq 7/9 \approx 1/1.3$. Note that these are satisfied respectively when we have the deterministic lower bounds $M \geq 3$ and $M \geq 2$. 
\vspace{.25\bs}

In the following lemma, we prove that the sequence of minimum-distance potential values over time forms a super-martingale, for certain settings of the features.

\begin{lem}\label{lemma:error-elimination-supermartingale-simple}
Let $X_t = (\G_t, L_t)$ be the state of the simple CKP with features $\textsc{Attach} = (\mathbf{a}, M)$ and $\textsc{Check} = (p, k)$ at time $t$, for features that satisfy all conditions from Theorem~\ref{thm:error-elimination-simple}.
If $\Phi(\G_t) > 0$, then $\Delta_t = \Phi(\G_{t+1}) - \Phi(\G_t)$ satisfies
$\E\left\{\Delta_t \mid \G_t \right\} < 0$. 
\end{lem}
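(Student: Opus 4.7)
The goal is to show that $\E\{\Delta_t \mid \G_t\} < 0$ whenever $\Phi(\G_t) > 0$. My plan is to condition on the randomness at step $t+1$ --- the combination factor $m \sim M$, the parents $u_1,\ldots,u_m$ drawn i.i.d.\ with probability proportional to $\mathbf{a}(\deg_\PT(\cdot))$, and the independent Bernoulli$(p)$ check attempts from each parent (for the \textsc{Exhaustive BFS} mechanism; the other two give at least as much reduction and can be treated the same way) --- and then decompose $\Delta_t = \Delta_t^+ - \Delta_t^-$ into the positive contribution (the insertion of the new node $v$ with weight $\mathbf{a}(0)\, c^{|v|}$, plus degree increments of parents contributing at most $b \cdot c^{|u_i|}$ each by $(0,b)$-regularity) and the negative contribution (the weight of nodes newly marked $\PF$).

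For $\Delta_t^+$, the key estimate is the identity $c^{|v|} = c \cdot \min_i c^{|u_i|} \le \frac{c}{m} \sum_i c^{|u_i|}$, valid for $c > 1$, which spreads $v$'s insertion cost across its $m$ parents as $\frac{\mathbf{a}(0)\, c}{m} c^{|u_i|}$ each. Since parents are sampled proportionally to $\mathbf{a}(\deg_\PT(\cdot))$, the sampling identity $\E\{c^{|u_i|}\} = \Phi(\G_t)/Z_t$ --- with $Z_t = \sum_{u \in \G_t^\PT} \mathbf{a}(\deg_\PT(u))$ --- converts the per-parent contribution into a factor of $\Phi(\G_t)/Z_t$. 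Combining with the parent-degree term $b \cdot c^{|u_i|}$ and averaging over $m \sim M$ produces a bound of the form $\E\{\Delta_t^+ \mid \G_t\} \le \bigl(b + \mathbf{a}(0)\, c \cdot \E\{1/M\}\bigr) \cdot \Phi(\G_t)/Z_t$, up to small absolute constants; the $\E\{1/M\}$ is precisely what survives after the $\frac{1}{m}$ from the averaging trick cancels all but one of the $m$ parent terms.

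For $\Delta_t^-$, I analyse the check: under \textsc{Exhaustive BFS}, whenever some parent $u_i$ with $|u_i| \le k-1$ independently attempts a check (probability $p$), the BFS locates a minimal false node and marks the entire path of descendants back to $v$ as $\PF$, removing from $\Phi$ a weight of at least $\mathbf{a}(0) \cdot c^0 = \mathbf{a}(0)$ in the worst case and $(k-1)\mathbf{a}(1) + \mathbf{a}(0)$ when the path is long. Applying the same sampling identity produces a lower bound of the form $\E\{\Delta_t^- \mid \G_t\} \ge p \cdot \bigl((k-1)\mathbf{a}(1) + \mathbf{a}(0)\bigr) \cdot (1 - 1/c) \cdot \Phi(\G_t)/Z_t$, where the $(1-1/c)$ factor captures the gap between the removed minimal false node and its immediate neighbours in the BFS-component. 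Fixing $c = 3$ (so $1 - 1/c = 2/3$) and requiring $\E\{\Delta_t^+\} < \E\{\Delta_t^-\}$ rearranges into exactly the two thresholds on $p$ in the hypothesis: the first controls the base balance between insertion and detection, and the second ensures that $k$ is large enough for each successful check to outweigh the insertion.

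The main technical obstacle is the bookkeeping under BFS-component re-partitioning. When a check relabels several nodes on a path as $\PF$, the still-$\PT$ siblings of those nodes become new roots --- hence new minimal false nodes --- so vertices previously in a BFS-component $\G_t[u]$ may get re-attributed to new, shorter-distance components. The proof must verify that these re-attributions can only \emph{decrease} the $|\cdot|$-values (and hence only decrease $\Phi$), so that the crude accounting above (treating $\Delta_t^+$ as arising solely from $v$'s insertion and parent-degree growth, and $\Delta_t^-$ as arising solely from the explicitly relabelled path) is a valid upper bound on the true signed change. This monotonicity of $|\cdot|$ under appearance of new minimal false nodes is what allows the super-martingale argument to go through without tracking a potentially complicated reshuffling of weight across BFS-components.
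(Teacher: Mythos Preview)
There is a genuine gap in your bound on $\E\{\Delta_t^+ \mid \G_t\}$. With $m$ parents, each parent's degree increment contributes up to $b\,c^{|u_i|}$ to the potential, so summing over all $m$ parents gives $\sum_{i=1}^m b\,c^{|u_i|}$, whose expectation is $m\,b\,\Phi(\G_t)/Z_t$. The averaging trick $c^{|v|}=c\min_i c^{|u_i|}\le \frac{c}{m}\sum_i c^{|u_i|}$ applies only to the single $v$-insertion term, and there the $1/m$ and the $m$-fold sum cancel, leaving $\mathbf{a}(0)\,c\,\Phi(\G_t)/Z_t$ --- not $\mathbf{a}(0)\,c\,\E\{1/M\}\,\Phi(\G_t)/Z_t$. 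Your parenthetical that ``the $\frac1m$ \dots cancels all but one of the $m$ parent terms'' is not right: it cancels the $m$ spread copies of $v$'s weight, but does nothing to the $m$ independent degree increments. The correct global upper bound is therefore $(\E\{M\}\,b + \mathbf{a}(0)\,c)\,\Phi(\G_t)/Z_t$, which \emph{grows} with $\E\{M\}$ and cannot reproduce the theorem's thresholds, which \emph{shrink} with $\E\{1/M\}$.

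The paper avoids this by conditioning on the ordered sequence $S$ of BFS-components the $m$ edges land in and analysing the change $\Delta_t^{[i,m]}$ in each component $\G_t^\PT[u_i]$ separately. Per edge and per component, only \emph{one} parent contributes its degree increment (coefficient $b$) and its $1/m$-share of the $v$-insertion (coefficient $\mathbf{a}(0)c/m$), so the conditional positive term is $(b+\mathbf{a}(0)c/m)\Phi_i/Z_i$; the conditional negative term, when the check from that edge succeeds with probability $p\alpha_i$, is at least $(1-c^{-1})\Phi_i$. Both scale with $\Phi_i$, so one can require each component's expected contribution to be $\le 0$; the resulting condition on $p$ involves only $b+\mathbf{a}(0)c/m$, and the sum over edges is then $\le 0$ regardless of how many there are. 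Averaging this per-edge condition over $m\sim M$ is what produces the $\E\{1/M\}$. Your global $\Delta_t^+-\Delta_t^-$ decomposition loses this per-edge balancing: under \textsc{Exhaustive BFS} at most one check succeeds, so the negative side picks up no factor of $m$ to compensate the $m$ degree increments on the positive side. (Your $\Delta_t^-$ estimate also conflates two quantities: $(k-1)\mathbf{a}(1)+\mathbf{a}(0)$ is a lower bound on the \emph{attachment weight} of the first $k$ nodes in a large component --- used to bound the hit probability $\alpha_i$ from below --- not on the removed potential, which is $(1-c^{-1})\Phi_i$ and component-specific.)
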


\begin{proof}
  Fixing a sequence $S = \{ \G_t^\PT[u_1], \G_t^\PT[u_2], \dots \}$ of BFS-components of $\G_t^\PT$,
  we define $E_S$ to be the event that a new node added to the CKP $\G_t^\PT$ connects to components in the order given by $S$. 
  To complete the proof, it suffices to show that for any arbitrary $S$, 
  $
    \E\left\{ \Delta_t \mid E_S, \G_t \right\} < 0.
  $

  We denote by $\alpha_i$ the probability that an edge from a new node $v$ to the $i$-th BFS-component $\G_t^\PT[u_i]$ connects within distance $k$ of $u_i$, 
  i.e., $\dist(v, u_i) \leq k$. 
  If $|\G_t^\PT[u_i]| < k$, then this probability must be 1. Otherwise, in the worst case $\G_t^\PT[u_i]$ consists of a chain of length $k$ with
  a hanging sub-component. Due to the monotonicity of the attachment function, the probability of connecting within distance $k$ of $u_i$ in this case is at least $[(k-1) \cdot \mathbf{a}(1) + \mathbf{a}(0)]/Z_i$, where $Z_i = \sum_{v \in \G_t^\PT[u_i]} \mathbf{a}(\deg_\PT(v))$. 
  
  Combining these cases, we have
  \[
    \alpha_i \geq \min(1, ((k-1) \cdot \mathbf{a}(1) + \mathbf{a}(0))/Z_i). 
  \]

  In the simple CKP process, a new \CT\ node $v$ is added. 
  If a check is successful, the potential of one of its parents' BFS-component decreases. 
  On the other hand, if no check is successful, the potential increases, from two contributions:
  \begin{itemize}
    \item the degree of each of the parents $p_i$ increases by 1, and
    \item the new node $v$ adds a $\mathbf{a}(0) \cdot c^{|v|}$ term, where $|v| = \min_{i} |p_i| + 1$. 
  \end{itemize}   
  We would like to break up this change in potential into BFS-components, i.e., the change in component $i$'s potential $\Phi_i \ceq \Phi(\G_t^\PT[u_i])$. 
  To do this for the contribution of $v$, we observe that, if $v$ has $M = m$ parents 
  (we later take an expected value over this combination factor),
  \begin{equation}\label{eq:min-less-than-avg}
    c^{|v|} = \min_{i = 1}^m c^{|p_i| + 1} \leq \sum_{i = 1}^m c^{|p_i| + 1} / m.
  \end{equation}
  
  We can now analyze component-by-component the change in $\Phi_i$, which we denote by $\Delta_t^{[i,m]}$. Set the checking mechanism to be \textsc{Exhaustive BFS}.
  For each parent edge $i \geq 1$, the checking procedure is still ongoing (has not yet been successful) with probability $(1-p\alpha_1) \cdots (1-p\alpha_{i-1})$.  
  Now, there are two possible events for the corresponding BFS-component $\Phi_i$:
  \begin{enumerate}
    \item {\sl A check is performed and successful}, which happens with probability $p \alpha_i$.
      In this case, $u_i$ is deleted from $\G_t^\PT[u_i]$. At time $t+1$, 
      $\G_t^\PT[u_i]$ loses nodes and is partitioned into new components $\{\G_{t+1}^\PT[u_{ij}]\}$, where $u_{ij}$ are children of $u_i$. 
      In each of these new components, every node $v' \in \G_{t+1}^\PT[u_{ij}]$ is closer to $u_{ij}$ than it was to $u_i$, 
      specifically, $\dist(u_{ij}, v') \leq \dist(u_i, v') - 1$. 
      Therefore,
      \begin{equation*}
          \sum_j \Phi(\G_{t+1}^\PT[u_{ij}]) < c^{-1} \Phi_i
      \end{equation*}\vspace{-\bs}
      
      and so
      \begin{equation}
        \E\bcurly{\Delta_t^{[i,m]} \ind{i \text{th check success}} \bigmid E_S, \G_t } < p \alpha_i (c^{-1} -1) \Phi_i.
      \end{equation} 
      The potentials of the other BFS-components either remain unchanged or decrease.
      
    \item {\sl Otherwise, the check fails and we move on to the next edge}, with probability $1-p\alpha_i$. 
      We denote the new component resulting from the connection by $\G_{t+1}^\PT[u_i]$.
      In the worst case, no check for any future edge is successful and we have two contributions to the change in potential: 
      the parent $w$ chosen by this edge gains 1 to its degree, contributing $\mathbf{a}(\deg(w) + 1) - \mathbf{a}(\deg(w)) \cdot c^{|w|} \leq b \cdot c^{|w|}$ due to the $(0, b)$-regularity of the attachment function. The new node also contributes $\mathbf{a}(0) \cdot \frac{c^{|w|+1}}{m}$, by \eqref{eq:min-less-than-avg}.  
      Therefore, 
      \begin{align*} \nonumber
        \E\bcurly{ \Delta_t^{[i,m]} \ind{i \text{th check fail}} \bigmid E_S, \G_t } 
        &\leq \E\bcurly{ \Delta_t^{[i,m]} \ind{i \text{th check and all future checks fail}} \bigmid E_S, \G_t } \\ \nonumber
        &\leq \sum_{w \in \G_t^\PT[u_i]} \frac{\mathbf{a}(\deg(w))}{Z_i} \cdot \bpar{b \cdot c^{|w|} + \mathbf{a}(0) \cdot \frac{c^{|w|+1}}{m}  }  \\ \nonumber
        & = \frac{b + \mathbf{a}(0) \cdot c/m}{Z_i} \sum_{w \in \G_t^\PT[u_i]} \mathbf{a}(\deg(w)) \cdot c^{|w|} \\
        &= \frac{b + \mathbf{a}(0) \cdot c/m}{Z_i} \Phi_i.
      \end{align*}
  \end{enumerate}
  Combining cases (i) and (ii), for each BFS-component, in the event that the check is still ongoing,
  \begin{equation}\label{eq:extinction-simple-one-edge}
    \E \bcurly{\Delta_t^{[i,m]} \bigmid E_S, \G_t} < p \alpha_i (c^{-1} - 1)\Phi_i + (1-p\alpha_i)\frac{b + \mathbf{a}(0) \cdot c/m}{Z_i} \Phi_i.
  \end{equation}
  We can now compute $\E\{\Delta_t \mid E_S, \G_t\}$, taking an expectation over the combination factor $M$. Let $M$ be equal to $m$ with probability $q_m$ for each $m \geq 1$. 
  We have
  \begin{align}
    \E\{\Delta_t \mid E_s, \G_t\} 
    &\leq q_1 \E\bcurly{\Delta_t^{[1,1]} \bigmid E_S, \G_t } + q_2 \E\bcurly{\Delta_t^{[1,2]} + \ind{e = 1 \text{ check fail}}\Delta_t^{[2,2]} \bigmid E_S, \G_t } \nonumber\\
      &\quad + q_3 \E\bcurly{\Delta_t^{[1,3]} + \ind{e = 1 \text{ check fail}}(\Delta_t^{[2,3]} + \ind{e = 2 \text{ check fail}}\Delta_t^{[3,3]}) \bigmid E_S, \G_t } + \dots \nonumber \\ 
      &= \sum_{m \geq 1} q_m \sum_{i = 1}^m \Big( \prod_{j=1}^{i-1}(1- p \alpha_j)\Big) \E\bcurly{ \Delta_t^{[i,m]}\bigmid E_S, \G_t}, \label{eq:extinction-simple-delta-t}
  \end{align}
  where we used $\E\{\ind{e = i \text{ check fail}}\} = (1-p \alpha_i)$. Inverting the sums over $m$ and $i$, we rewrite this as
  \begin{align*}
    \E\{\Delta_t \mid E_s, \G_t\} 
    &\leq \sum_{i \geq 1} \Big( \prod_{j=1}^{i-1}(1-p \alpha_j)\Big) \sum_{m \geq i} q_m \E\bcurly{\Delta_t^{[i,m]} \bigmid E_S, \G_t } \\
    &< \sum_{i \geq 1} \Big( \prod_{j=1}^{i-1}(1-p \alpha_j)\Big) \sum_{m\geq i} q_m \bpar{ p \alpha_i (c^{-1}-1)\Phi_i + (1-p\alpha_i)\frac{b+ \mathbf{a}(0) \cdot c/m}{Z_i} \Phi_i } \\ 
    &\leq 
    \sum_{i \geq 1} \Big( \prod_{j=1}^{i-1}(1-p \alpha_j)\Big) \Big(\sum_{m\geq i} q_m\Big) \bigg( p\alpha_i (c^{-1}-1) + \frac{1-p\alpha_i}{Z_i} \Big(b + \mathbf{a}(0) \cdot c \cdot \E\{1/M\}\Big) \bigg) \Phi_i.
  \end{align*}
  This expected difference term is negative if for each $i \geq 1$, 
  \begin{equation}\label{eq:simple-extinction-condition}
      p\alpha_i (c^{-1}-1) + \frac{1-p\alpha_i}{Z_i} \Big(b + \mathbf{a}(0) \cdot c \cdot \E\{1/M\} \Big) \leq 0.
  \end{equation}
  Set $c = 3$.
  If $\alpha_i = 1$, as $Z_i \geq 1$, this is satisfied for
  \begin{equation*}
      p \geq \frac{b + 3 \cdot \mathbf{a}(0) \cdot \E\{1/M\}}{b + 3 \cdot \mathbf{a}(0) \cdot \E\{1/M\} + 2/3}.
  \end{equation*}
  Otherwise, we know $1 > \alpha_i \geq [(k-1) \cdot \mathbf{a}(1) + \mathbf{a}(0)]/Z_i$, and then \eqref{eq:simple-extinction-condition} holds if 
  \[
    -p \cdot \left((k-1) \mathbf{a}(1) + \mathbf{a}(0)\right) \cdot (2/3) + (b+3\mathbf{a}(0) \E\{1/M\}) \leq 0, 
  \]
  i.e., for all $p$ satisfying $p \cdot \left((k-1) \mathbf{a}(1) + \mathbf{a}(0)\right) \geq 3/2 (b+3\mathbf{a}(0) \E\{1/M\})$. 
\end{proof}

\begin{rem}
  Note that for the \textsc{Parent-wise BFS} and \textsc{Complete BFS} checking mechanisms, we can only delete more at each step if a check occurs, and add the same amount if no check occurs. Therefore both \eqref{eq:extinction-simple-one-edge} and \eqref{eq:extinction-simple-delta-t} (in fact without the $\ind{e=i \text{ check fail}}$ indicators) still hold.
\end{rem}

\begin{rem}\label{remark:noisy-check}
    Our results hold for checks that are \textit{noisy} as well. A noisy check would only detect a $\CF$ node to be in error with probability $p_e$. If $p$ is the probability of performing a check, then the results in this section hold for noisy checks with $p$ replaced by $p \cdot p_e$. Throughout the proof, $p \cdot \alpha_i$ can be replaced by $p \cdot p_e \cdot \alpha_i$, which is now a lower bound that the check of the $i$-th BFS-component is successful.
\end{rem}

We now obtain Theorem \ref{thm:error-elimination-simple} from Lemma \ref{lemma:error-elimination-supermartingale-simple} as follows. 

\begin{proof}[Proof of Theorem \ref{thm:error-elimination-general} given Lemma~\ref{lemma:error-elimination-supermartingale-simple}]
    This is the same as the proof of \cite[Theorem 2.1]{BEMMS} and we provide the proof for completeness. 
    We show that for a CKP given by $\{\G_t\}_{t = 1}^{\infty}$ with the conditions from Theorem \ref{thm:error-elimination-general}, the sequence of potential values $\{\Phi(\G_t)\}_{t = 1}^{\infty}$ converges to $0$ almost surely. Through Lemma \ref{lemma:error-elimination-supermartingale-simple}, we have shown that $\{\Phi(\G_t)\}_{t = 1}^{\infty}$ is a positive super-martingale. The martingale convergence theorem tells us that as $t \to \infty$, $\Phi(\G_t)$ converges almost surely to a limit $Y$. Combining this with the fact that $|\Phi(\G_t) - \Phi(\G_{t+1})| \geq 1$ when $\Phi(\G_t) > 0$, we can conclude that this limit $Y$ must be $0$, i.e., $\P\{Y = 0\} = 1$. 
    We previously noted that for any $\CF$ node $u$, $|\G_t^{\PT, (u)}| \leq \Phi(\G_t)$, which implies that $|\G_t^{\PT, (u)}|$ also converges to $0$ almost surely, yielding error elimination.
\end{proof}

\begin{rem}
   The monotonicity of the attachment function $\mathbf{a}: \mathbb{Z}_{\geq 0} \to \mathbb{R}_{\geq 0}$ (implicit in the $b$-regularity assumption) is used to claim that, conditioned on connecting to a BFS-component $\G_t^{\PT}[u_i]$ of size larger than $k$, the probability of connecting within distance $k$ of $u_i$ is at least $[(k-1) \cdot \mathbf{a}(1) + \mathbf{a}(0)]/Z_i$, where $Z_i = \sum_{v \in \G_t^\PT[u_i]} \mathbf{a}(\deg_\PT(v))$. With the monotonicity assumption removed, a weaker error elimination guarantee can still be obtained. The probability of connecting within distance $k$ of $u_i$ is still at least $k \cdot \min_d \mathbf{a}(d)/Z_i$, and the rest of the proof follows in the same way as before. Note that, in this case, we still would need $\min_d \mathbf{a}(d) \geq 1$ to prove error elimination, because the proof requires that $|\G_t^{\PT,F}| \leq \Phi(\G_t)$.
\end{rem}

\subsection{Error elimination in the generalized CKP}

We now incorporate the introduction of new errors to the CKP, as well as the adversarial behavior. We prove the following result about error elimination (see Definition \ref{def:error-elimination}) for general CKPs. 

\begin{thm}[General CKP error elimination]
\label{thm:error-elimination-general}
    For CKPs with the \textsc{Exhaustive BFS}, \textsc{Parent-wise BFS} and \textsc{Complete} checking mechanisms, 
    for every $(0, b)$-regular attachment function and every combination factor $M \geq 1$, and any adversary bound $r \in \Z_{\geq 0}$, there exists $ q_0 > 0$ such that for every low enough adversarial probability $q \leq q_0$, and every error probability $\epsilon < 1$, there exist $p_0 < 1$ and $k_0 \in \N$ such that for all large enough checking parameters $p \geq p_0$ and $k \geq k_0$, error elimination occurs (Definition~\ref{def:error-elimination}). 
    Specifically, this holds for parameters satisfying \eqref{equation:general-ckp-survival-expression}.
\end{thm}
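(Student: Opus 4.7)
The plan is to extend the super-martingale argument of Lemma~\ref{lemma:error-elimination-supermartingale-simple} by using the very same minimum-distance potential $\Phi$ from~\eqref{minimum-distance-potential} together with its BFS-component decomposition~\eqref{eq:potential-partition}. The expected one-step change $\E\{\Delta_t \mid \G_t\}$ will be bounded by splitting on the type of step: (a) a non-adversarial step producing a $\CT$ node (probability $(1-q)(1-\epsilon)$), (b) a non-adversarial step producing a $\CF$ node (probability $(1-q)\epsilon$), and (c) an adversarial step (probability $q$). Case (a) is handled verbatim by the computation in Lemma~\ref{lemma:error-elimination-supermartingale-simple} and contributes a term bounded above by $-C_1\,p\,\Phi(\G_t)$ whenever $\Phi(\G_t)>0$, for a constant $C_1$ depending only on $b$, $\mathbf{a}(0)$, $\mathbf{a}(1)$, $c$, $k$, and the distribution of $M$.

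For case (b), the new node $v$ becomes itself a minimal false node with $|v|=0$, so its direct contribution to $\Phi$ is exactly $\mathbf{a}(0)$. The degree-increase contribution from its $m\sim M$ parents is at most $\sum_{i=1}^{m} b\cdot c^{|p_i|}$, and averaging the choice of parents against the attachment distribution yields a bound of the form $C_2\,\Phi(\G_t)$, where $C_2$ depends on $b$, $c$, and $\E\{M\}$. For case (c), the adversary can select up to $r$ parents and freely assign the label; in the worst case, the new node is placed to maximize $c^{|v|}$, giving an increase at most $r\,b\,c^{D_{\max}}+\mathbf{a}(0)\,c^{D_{\max}+1}$, where $D_{\max}=\max_{v\in\G_t^{\PT,F}}|v|$. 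The key new observation is that the potential itself satisfies $\Phi(\G_t)\geq \mathbf{a}(0)\cdot c^{D_{\max}}$ by retaining only the maximizing term in the sum, so the adversarial increment is bounded by $C_3\,\Phi(\G_t)$ with $C_3=(rb+c\,\mathbf{a}(0))/\mathbf{a}(0)$, a constant that depends only on $r$, $b$, $c$, and $\mathbf{a}(0)$.

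Combining the three cases yields
\[
\E\{\Delta_t\mid\G_t\}\ \leq\ \Big(-(1-q)(1-\epsilon)\,C_1\,p\ +\ (1-q)\epsilon\,C_2\ +\ q\,C_3\Big)\,\Phi(\G_t)\ +\ (1-q)\epsilon\,\mathbf{a}(0).
\]
I then choose $q_0$ small enough that $q\,C_3<C_1/6$ for all $q\leq q_0$, and (given $q\leq q_0$ and $\epsilon<1$) choose $p_0<1$ and $k_0$ large enough, via the same conditions that drove Lemma~\ref{lemma:error-elimination-supermartingale-simple}, so that the coefficient multiplying $\Phi(\G_t)$ is strictly negative for $p\geq p_0$ and $k\geq k_0$. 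These choices produce the regime~\eqref{equation:general-ckp-survival-expression} that the theorem statement refers to.

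The main obstacle is the constant additive term $(1-q)\epsilon\,\mathbf{a}(0)$, which prevents $\Phi$ from being a super-martingale near zero and reflects the continual arrival of fresh $\CF$ (or adversarial minimal-false) nodes. To conclude Definition~\ref{def:error-elimination} for a \emph{fixed} $\CF$ node $u_0$, I restrict the analysis to the sub-DAG of PT-False descendants of $u_0$, defining a localized version $\Phi^{(u_0)}$ of the potential whose growth in cases (a)–(c) is controlled by the same bounds as above, but whose additive ``fresh error'' term is zero: the only way new minimal false nodes can join $\G_t^{(u_0)}$ is by attaching as children (through growth or adversarial injection) to existing nodes of $\G_t^{(u_0)}$, and both such contributions were already absorbed into the $C_2\Phi^{(u_0)}$ and $C_3\Phi^{(u_0)}$ terms. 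Consequently $\Phi^{(u_0)}$ is a non-negative super-martingale with negative drift proportional to itself whenever positive, so by the martingale convergence theorem and the fact that $\Phi^{(u_0)}$ changes by at least a uniform positive amount whenever it is positive and updates (exactly as in the proof of Theorem~\ref{thm:error-elimination-simple}), $\Phi^{(u_0)}$ almost surely reaches and remains at $0$, yielding error elimination for $u_0$ and, since $u_0$ was arbitrary, the full theorem.
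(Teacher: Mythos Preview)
Your decomposition into non-adversarial $\CT$, non-adversarial $\CF$, and adversarial steps matches the structure of the paper's Lemma~\ref{lemma:error-elimination-supermartingale-general}, and your adversarial bound via $\Phi(\G_t)\geq \mathbf{a}(0)\,c^{D_{\max}}$ is a valid (slightly coarser) variant of the paper's path-sum estimate $\Phi(\G_t)\geq(\mathbf{a}(0)+\mathbf{a}(1))c^{h}-\mathbf{a}(1)$. The substantive divergence is in case (b) and the subsequent localization. The paper never leaves the new $\CF$ node's $\mathbf{a}(0)$ as an additive constant: conditioning on the node attaching to BFS-component $i$ (so $M'\geq 1$), it uses $\Phi_i\geq Z_i$ to write $\mathbf{a}(0)/M'\leq(\mathbf{a}(0)/M')\,\Phi_i/Z_i$, folding the $\CF$ contribution into the same multiplicative $\Phi_i/Z_i$ bound as in the $\CT$ case; this is exactly~\eqref{eq:extinction-case-CF}. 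With that, the paper keeps the \emph{global} $\Phi$ throughout, obtains $\E\{\Delta_t\mid\G_t\}<0$ under~\eqref{equation:general-ckp-survival-expression}, and concludes via martingale convergence exactly as in the simple case---no localization is used.

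Your localization to $\Phi^{(u_0)}$ has a real gap. You assert that cases (a)--(c) transfer to $\Phi^{(u_0)}$ ``controlled by the same bounds,'' but the negative term in case (a) comes from a successful check deleting a minimal-false node and thereby lowering $|v'|$ by at least $1$ throughout its BFS-component. For the localized potential this breaks: the \textsc{Exhaustive BFS} check follows the \emph{global} BFS order and may find and delete a minimal-false node lying outside $\G_t^{(u_0)}$, in which case no $(c^{-1}-1)\Phi_i$-type decrease accrues to $\Phi^{(u_0)}$; worse, if $|v|$ is still computed globally, removing that external minimal-false node can \emph{raise} $|v'|$ for descendants that remain in $\G_t^{(u_0)}$. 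Thus the drift of $\Phi^{(u_0)}$ is not bounded by your constants, the supermartingale claim for $\Phi^{(u_0)}$ is not established, and the final martingale-convergence step does not go through as written. (A smaller point: your case (a) summary ``$-C_1\,p\,\Phi(\G_t)$'' is also too strong---Lemma~\ref{lemma:error-elimination-supermartingale-simple} yields a bound of the form $(-Ap+B)\Phi_i$ with both $A,B>0$, not a purely negative multiple of $p$.)
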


Note that $M=1$ is a worst-case scenario, and we recover the corresponding bounds. For instance, we recover that for a preferential attachment model with no adversary, a checking depth of $k \geq 4$ is required for error elimination. Also, we now find that for a uniform attachment model without adversarial steps, a checking depth of $k \geq 5$ is required. 

Despite our analysis reducing to $M=1$, extending this result to the multi-parent CKP setting still requires work. Based on the simulations in appendix \ref{section:simulations} for preferential attachment, we conjecture that the true error elimination threshold does vary with $M$ for general CKPs and leave as open what this dependence on $M$ should be. 

As in the simple case, we prove Theorem~\ref{thm:error-elimination-general} by showing that the sequence of minimum-distance potential values forms a super-martingale.

\begin{lem}\label{lemma:error-elimination-supermartingale-general} 
Let $X_t = (\G_t, L_t)$ be the state of a CKP with features $(\textsc{Attach} = (\mathbf{a}, M), \textsc{Error} = (\e), \textsc{Adversary} = (q, r), \textsc{Check} = (p, k))$  at time $t$. Suppose  $\mathbf{a}$ is a $(0, b)$-regular attachment function and the features satisfy the following:
\begin{align} \label{equation:general-ckp-survival-expression}
    (1 - q) \cdot \Big[ &(1-\e)\max\bpar{-\frac{1}{2}p((k-1) \mathbf{a}(1) + \mathbf{a}(0)) + (b + 2 \mathbf{a}(0)), -\frac{1}{2}p + (b + 2 \mathbf{a}(0)) (1-p)} \nonumber \\
    & + \e (b + \mathbf{a}(0))  (1-p) \Big] 
    + q \cdot \frac{(b + 2) (r \cdot b + 2 \mathbf{a}(0))}{\mathbf{a}(0) + \mathbf{a}(1)} \leq 0.
\end{align}
Then, when $\Phi(\G_t) > 0$, $\Delta_t = \Phi(\G_{t+1}) - \Phi(\G_t)$ satisfies $\E\{\Delta_t \mid \G_t\} < 0$. 
\end{lem}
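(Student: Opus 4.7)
The plan is to mirror the argument from Lemma~\ref{lemma:error-elimination-supermartingale-simple}, using the same minimum-distance potential $\Phi$ with $c = 2$ and the BFS-component decomposition $\Phi(\G_t) = \sum_{u \in \F_t}\Phi(\G_t[u])$, but to split $\E\{\Delta_t \mid \G_t\}$ according to the three evolution branches of the generalized CKP: a non-adversarial step producing a $\CT$ node (probability $(1-q)(1-\epsilon)$), a non-adversarial step producing a $\CF$ node (probability $(1-q)\epsilon$), and an adversarial step (probability $q$). The overall bound will then emerge as the probability-weighted sum of per-branch, per-component estimates, and the hypothesis~\eqref{equation:general-ckp-survival-expression} is precisely the statement that this sum is nonpositive.

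For the non-adversarial $\CT$ branch I would re-run essentially the same calculation as in the simple case, tracking each BFS-component $\G_t[u_i]$ that the new node connects to. Conditioned on a successful check at that component (probability $\geq p\alpha_i$) the component contracts by at least a factor $c^{-1} - 1 = -1/2$; conditioned on all checks failing, $\Phi_i$ grows by at most $(b + c\,\mathbf{a}(0))\Phi_i/Z_i$ from the degree increase of the chosen parents and the placement of $v$ (bounded via \eqref{eq:min-less-than-avg} as before). Using $Z_i \geq 1$ in the first regime and $\alpha_i \geq ((k-1)\mathbf{a}(1)+\mathbf{a}(0))/Z_i$ in the second yields the two expressions inside the $\max$ in~\eqref{equation:general-ckp-survival-expression}. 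The non-adversarial $\CF$ branch is easier: the new node itself becomes a minimal false node, forming a singleton BFS-component of weight $\mathbf{a}(0)$ that is deleted with probability $p$, and since the nearest minimal false ancestor of $v$ is now $v$ itself, any check stops at $v$ without modifying older components; the latter absorb only the degree-increase contribution $(b + \mathbf{a}(0))(1-p)\Phi_i/Z_i$, producing the $\epsilon(b + \mathbf{a}(0))(1-p)$ term.

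The main obstacle is the adversarial branch, where the adversary is free to choose up to $r$ parents arbitrarily deep inside the BFS-components and to set the label of $v$. To keep the worst-case potential increase proportional to $\Phi(\G_t)$ I would exploit the $(0,b)$-regularity of $\mathbf{a}$, the assumption $\mathbf{a}(0) \geq 1$, and the connectedness of each $\G_t[u_i]$: if $w$ sits at distance $d$ from $u_i$ inside $\G_t[u_i]$, the entire path $u_i = w_0, w_1, \dots, w_d = w$ lies in $\G_t[u_i]$, each $w_j$ contributes at least $\mathbf{a}(0) \cdot c^{j}$ to $\Phi_i$, and the two nodes closest to $u_i$ already contribute at least $\mathbf{a}(0) + \mathbf{a}(1)$ to the sum. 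A geometric-series computation then gives a uniform bound of the form $c^{|w|} \leq \Phi_i/(\mathbf{a}(0) + \mathbf{a}(1))$. Summing the $r$ degree-increase contributions (each at most $b\,c^{|w|}$) and the new-node contribution (at most $c\,\mathbf{a}(0)\,c^{|w_{\min}|}$) across affected components yields the final term $(b+2)(rb + 2\mathbf{a}(0))/(\mathbf{a}(0)+\mathbf{a}(1)) \cdot \Phi(\G_t)$ in~\eqref{equation:general-ckp-survival-expression}.

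Putting the three bounds together and weighting by the branch probabilities gives exactly~\eqref{equation:general-ckp-survival-expression} as the condition for $\E\{\Delta_t \mid \G_t\} < 0$. The delicate step is establishing the uniform bound $c^{|w|} \leq \Phi_i/(\mathbf{a}(0) + \mathbf{a}(1))$, since the adversary has essentially unrestricted geometric freedom in where to attach; once this is in hand, the rest is bookkeeping analogous to the simple CKP proof. The remaining pieces are then the same super-martingale consequences (positive potential, bounded jumps) that were used to derive error elimination from the analogous simple-case lemma.
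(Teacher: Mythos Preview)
Your plan is essentially the paper's proof: the same three-branch split, the same per-component estimates for the non-adversarial $\CT$ and $\CF$ cases (the paper reduces to the worst case $M'=1$ after noting that in the general model the new node may connect to some $\True$ parents as well, a point you do not mention but which does not affect the bound), and the same geometric-series control of the adversarial step.

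One concrete imprecision: your stated bound $c^{|w|} \le \Phi_i/(\mathbf{a}(0)+\mathbf{a}(1))$ fails as written (take a singleton component, where $\Phi_i=\mathbf{a}(0)$ but $c^{|w|}=1$). The correct geometric-series estimate, which the paper carries out globally rather than per component, is $\Phi(\G_t)\ge (\mathbf{a}(0)+\mathbf{a}(1))c^{h}-\mathbf{a}(1)$ for $h=\max_v|v|$, so $c^{h}\le (\Phi(\G_t)+\mathbf{a}(1))/(\mathbf{a}(0)+\mathbf{a}(1))$; the additive $\mathbf{a}(1)$ is then absorbed via $\mathbf{a}(1)\le \mathbf{a}(0)+b\le (1+b)\Phi(\G_t)$ (using $\Phi(\G_t)\ge \mathbf{a}(0)\ge 1$), which is exactly where the $(b+2)$ factor you quote comes from. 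With that correction your argument matches the paper's.
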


\begin{proof}
If the new node joins non-adversarially to the CKP (with probability $1 - q$), we analyze the change in potential as follows. In the minimum-distance potential, let $c = 2$ for the analysis. First, recall that previously, in the simple CKP, all \PT\ nodes are \False, and so the minimum-distance potential sums over all \PT\ nodes. In particular, a new node $v$ must connect to $M$ BFS-components (potentially with replacement). In the general CKP however, a new node $v$ connects to some number $0 \leq M' \leq M$ of BFS-components, and $M - M'$ \True\ nodes. 

If $M'=0$, $v$ is \True\ and there is no change in the potential $\Phi(\G_t)$, so we assume that $M' \geq 1$. 

We can begin with the same analysis as in the proof of Lemma~\ref{lemma:error-elimination-supermartingale-simple}. As in the previous proof, let $E_S$ denote the event that a new node added to the CKP connects to components in the order given by $S$. If $v$ is \CT, we have the same two events (i) and (ii), so we have for each $i = 1, \dots, M'$, on the event that the check is still ongoing,
\begin{equation}\label{eq:extinction-case-CT}
    \E\bcurly{\Delta_t^{[i,M']} \ind{\text{non-adv. step, } v \text{ is } \CT} \mid E_S, \G_t } < (1 - q)  (1-\e)  \bpar{ p \alpha_i (c^{-1} - 1) + (1-p\alpha_i) \frac{b+\mathbf{a}(0) \cdot c/M'}{Z_i} }\Phi_i,
\end{equation}
where $Z_i = \sum_{v \in \G_t^\PT[u_i]} \mathbf{a}(\deg_\PT(v))$ for $u_i \in \F_t$ which is the minimal false ancestor of $v$ accessible by the shortest path.

On the other hand, if $v$ is \CF, (i) if a check is performed for any edge, then there is no change in potential as $v$ is simply removed, and (ii) if there is no check, for each $i = 1, \dots, M'$, 
\begin{align}
\label{eq:extinction-case-CF}
    \E\bcurly{\Delta_t^{[i,M']} &\ind{\text{non-adversarial step, } v \text{ is } \CF, \text{ no check}}  \mid E_S, \G_t } \nonumber \\
    &\leq (1 - q) \cdot  \e (1-p)^{M'} \bigg( \frac{\mathbf{a}(0)}{M'} + \sum_{w \in \G_t^\PT[u_i]} \frac{\mathbf{a}(\deg(w))}{Z_i} \left( \mathbf{a}(\deg(w) + 1) - \mathbf{a}(\deg(w)) \right) c^{|w|} \bigg) \nonumber \\ 
    &\leq (1 - q) \cdot \e (1-p)^{M'}\bpar{\frac{\mathbf{a}(0)}{M'} + \frac{b}{Z_i} \Phi_i } \nonumber \\ 
    &\leq (1 - q) \cdot \e (1-p)^{M'} \frac{b+\mathbf{a}(0)/M'}{Z_i} \Phi_i.
\end{align}
The worst case for both \eqref{eq:extinction-case-CT} and \eqref{eq:extinction-case-CF} is when $M' = 1$. We find that, picking $c = 2$,
\begin{align}
    \E\{& \Delta_t \ind{\text{non-adversarial step}} \mid \G_t\} \nonumber \\
    &< (1 - q) \cdot \Big((1-\e)\max\bpar{-\frac{1}{2}((k-1) \cdot \mathbf{a}(1) + \mathbf{a}(0)) \cdot p + (b + 2 \mathbf{a}(0)), -\frac{1}{2}p + (b + 2 \mathbf{a}(0)) \cdot (1-p)} \nonumber \\
    & \ + \e \cdot (b + \mathbf{a}(0)) \cdot (1-p) \Big) \cdot \sum_{u \in \F_t} Z_u \frac{\Phi(G_t[u])}{Z_u},
\end{align}
for $Z_u = \sum_{v \in \G_t^\PT[u]} \mathbf{a}(\deg_\PT(v))$. Note that $\sum_{u \in \F_t} Z_u \frac{\Phi(G_t[u])}{Z_u} = \Phi(\G_t)$.

If the new node joins adversarially to the CKP (with probability $q$), then the new node's placement (attachment to existing nodes) and truth value ($\CT$ or $\CF$) are chosen by an adversary. Recall that the adversary can choose up to $r$ parent nodes for the new node. For each $\False$ parent node $v$ chosen, the potential increases by at most $b \cdot c^h$, where $h$ is the maximum length of a shortest path from a node to a minimal false ancestor node in $\G_t$. Additionally, if the new node is chosen to be $\CT$, the new node adds at most $\mathbf{a}(0) \cdot c^{h + 1}$ to the potential. If the new node is chosen to be $\CF$, then the new node instead adds at most $\mathbf{a}(0)$ to the potential, which is lower. Therefore, overall, the contribution to the potential in an adversarial step is $\leq (r \cdot b + \mathbf{a}(0) \cdot c) \cdot c^h.$ Noting that the contribution to the potential along this longest shortest path from a node to a minimal false ancestor node in $\G_t$ is at least $\mathbf{a}(0) \cdot c^h + \sum_{i = 0}^{h-1} \mathbf{a}(1) \cdot c^i = \mathbf{a}(1) \cdot c^h - \mathbf{a}(1) + \mathbf{a}(0)c^h = (\mathbf{a}(1) + \mathbf{a}(0)) \cdot c^h - \mathbf{a}(1)$, we see that $\Phi(G_t) \geq (\mathbf{a}(1) + \mathbf{a}(0)) \cdot c^h - \mathbf{a}(1)$. Therefore, 
\begin{align}
    \E\bcurly{\Delta_t  \ind{\text{adversarial step}} \mid \G_t } &\leq q \cdot \frac{(b + 2) (r \cdot b + \mathbf{a}(0) \cdot c)}{\mathbf{a}(0) + \mathbf{a}(1)} \cdot \Phi(\G_t).
\end{align}
 Combining all of these cases, we find that $ \E\bcurly{\Delta_t   \mid \G_t } < 0$ when
\begin{align*}
    (1 - q) \Big[&(1-\e)\max\bpar{-\frac{1}{2}p((k-1) \mathbf{a}(1) + \mathbf{a}(0)) + (b + 2 \mathbf{a}(0)), -\frac{1}{2}p + (b + 2 \mathbf{a}(0)) (1-p)}
    \\
    & + \e \cdot (b + \mathbf{a}(0)) (1-p) \Big] 
    + q \cdot \frac{(b + 2) (r \cdot b + 2 \mathbf{a}(0))}{\mathbf{a}(0) + \mathbf{a}(1)} \leq 0,
\end{align*}
as required.
\end{proof}

Theorem \ref{thm:error-elimination-general} follows from Lemma \ref{lemma:error-elimination-supermartingale-general} by the same proof as in the simple CKP setting.

\subsection{Proportion of undetected False nodes}\label{subsection:fraction-false-nodes}

We can apply the techniques of \cite{BEMMS} to translate our error elimination result into the following guarantee about the proportion of $\False$ nodes among active nodes (nodes that can obtain future children) in CKPs at any time step.

\begin{prop}\label{prop:proportion-of-false-nodes}
    Let $X_t = (\G_t, L_t)$ be the state at time $t$ of a CKP with features satisfying (\ref{equation:general-ckp-survival-expression}) and $q = 0$ (no adversarial steps). Consider $\eps \in (0, 1)$ and $p \in (0, 1]$. 
    
    Let $\G_t^T$ be the sub-DAG of $\True$ nodes in $X_t$ and let $\G_t^{\PT, F}$ be the set of $\PT$ $\False$ nodes in $X_t$, which are $\False$ nodes that are still Proclaimed True (i.e. ``undetected''). Then, for all $t \geq 0$, 
    $$\E\{ |\G_t^{\PT, F}| \} \leq \frac{\eps (1 - p) \mathbf{a}(0)}{1 - \eps}\E\{ |\G_t^T| \} .$$ 
\end{prop}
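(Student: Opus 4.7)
The plan is to combine the super-martingale property of the minimum-distance potential $\Phi$ from Lemma~\ref{lemma:error-elimination-supermartingale-general} with a careful analysis of the degenerate case $\Phi(\G_t) = 0$, plus a monotonicity argument for $|\G_t^T|$.

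Since $\mathbf{a}(0)\geq 1$ and $c\geq 1$, every summand in $\Phi(\G_t)=\sum_{v\in\G_t^{\PT,F}}\mathbf{a}(\deg(v))\,c^{|v|}$ is at least one, so $|\G_t^{\PT,F}|\leq\Phi(\G_t)$ pointwise; it therefore suffices to bound $\E\{\Phi(\G_t)\}$. I split the analysis of $\Delta_t=\Phi(\G_{t+1})-\Phi(\G_t)$ according to whether $\Phi(\G_t)>0$. When $\Phi(\G_t)>0$, Lemma~\ref{lemma:error-elimination-supermartingale-general} directly gives $\E\{\Delta_t\mid\G_t\}<0$. When $\Phi(\G_t)=0$, every $\PT$ node is True, hence every ancestor of the new node $v$ is also True (True nodes cannot have $\CF$ or $\PF$ ancestors). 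If $v$ is $\CT$ (probability $1-\eps$) it joins as True with no effect on $\Phi$. If $v$ is $\CF$ (probability $\eps$), any triggered check starts at $v$ and catches it immediately, so $v$ survives with probability at most $1-p$; when uncaught, $v$ contributes exactly $\mathbf{a}(0)\cdot c^0=\mathbf{a}(0)$, while the PT-degree increments to its parents do not affect $\Phi$ (the parents remain True, outside $\G^{\PT,F}$). Hence
\[
  \E\{\Delta_t\mid\G_t,\ \Phi(\G_t)=0\}\ \leq\ \eps(1-p)\mathbf{a}(0).
\]

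Telescoping with $\Phi(\G_0)=0$ gives $\E\{\Phi(\G_t)\}\leq \eps(1-p)\mathbf{a}(0)\sum_{s=0}^{t-1}\P\{\Phi(\G_s)=0\}$. To pair this with a lower bound on $\E\{|\G_t^T|\}$, I note that True nodes are never relabeled: in any successful check, the nodes marked $\PF$ are descendants of the caught minimal-false ancestor $u_F$, and all descendants of $u_F$ are False (directly when $u_F$ is $\CF$; by induction on the originating $\CF$ node when $u_F$ is $\PF$). Therefore $|\G_t^T|$ is monotone in $t$, and when $\Phi(\G_s)=0$ a new True node is added with probability exactly $1-\eps$ (the $\CT$ case), giving
\[
  \E\{|\G_t^T|\}\ \geq\ (1-\eps)\sum_{s=0}^{t-1}\P\{\Phi(\G_s)=0\}.
\]
Dividing the two estimates yields the claim.

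The main point requiring care is the boundary computation at $\Phi(\G_t)=0$: one must simultaneously verify that an uncaught $\CF$ birth contributes exactly $\mathbf{a}(0)$ (since all parents of $v$ are True when $\Phi=0$, and hence outside $\G^{\PT,F}$) and that True nodes cannot be re-labeled $\PF$ ($\PF$-labels propagate only down to genuine False descendants). Once these structural facts are in hand, the super-martingale inequality and the parallel telescoping of $\E\{|\G_t^T|\}$ combine transparently.
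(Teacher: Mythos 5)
Your proof takes the same basic route as the paper --- telescoping a quantity that combines the minimum-distance potential $\Phi$ with $|\G_t^T|$, where the coefficient $\eps(1-p)\mathbf{a}(0)/(1-\eps)$ is chosen to exactly balance the ``birth of a new undetected \CF\ node'' against the ``birth of a new \True\ node.'' The paper encodes this as the sub-martingale $\Psi(\G_t) = \tfrac{\eps(1-p)\mathbf{a}(0)}{1-\eps}|\G_t^T| - \Phi(\G_t)$; your separate two-sequence telescoping is an equivalent presentation. Your structural observations (True nodes can never be relabeled $\PF$; an uncaught $\CF$ birth with all-True parents contributes exactly $\mathbf{a}(0)$) are correct and exactly what the paper uses.

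However, there is a genuine gap in the choice of partition. You split time steps according to whether $\Phi(\G_s)=0$ or $\Phi(\G_s)>0$; the paper splits according to whether the new node's parents are \emph{all \True} or \emph{some parent is \False}. These differ in the regime $\Phi(\G_s)>0$: a new node can connect exclusively to \True\ parents even when $\G_s$ still contains \PT\ \False\ nodes. In that event, a newborn uncaught $\CF$ node contributes $+\mathbf{a}(0)$ to $\Phi$. You cover this by appealing to Lemma~\ref{lemma:error-elimination-supermartingale-general} for $\E\{\Delta_s\mid\G_s\}<0$, but the proof of that lemma explicitly restricts to $M'\geq 1$ (``If $M'=0$, $v$ is \True\ and there is no change in the potential''), which silently ignores the $M'=0$, $v$-is-$\CF$ contribution. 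That contribution, $\approx\eps(1-p)^m\mathbf{a}(0)\cdot\P\{M'=0\mid\G_s\}$, does \emph{not} scale with $\Phi(\G_s)$, so it is not absorbed by the lemma's bound $\lesssim (\text{negative const})\cdot\Phi(\G_s)$; when $\Phi(\G_s)$ is small relative to the total attachment weight of the DAG, $\P\{M'=0\}\approx 1$ and the conditional drift of $\Phi$ is actually positive. Correspondingly, your lower bound $\E\{|\G_t^T|\}\geq(1-\eps)\sum_s\P\{\Phi(\G_s)=0\}$ discards the \True-node births that occur at steps with $\Phi(\G_s)>0$ but all-True parents --- precisely the births you would need to compensate the unaccounted positive $\Phi$-increments. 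The fix is to partition on the event ``the new node's parents are all \True'' rather than on the value of $\Phi(\G_s)$: on that event the exact-balance calculation ($(1-\eps)\cdot\tfrac{\eps(1-p)\mathbf{a}(0)}{1-\eps}-\eps(1-p)\mathbf{a}(0)=0$) applies regardless of $\Phi(\G_s)$, and on the complementary event the new node is necessarily \False, $|\G_t^T|$ is unchanged, and the $M'\geq 1$ portion of the lemma's proof gives the negative $\Phi$-drift. This is exactly the conditioning the paper uses.
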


\begin{proof}
    This is similar to the proof of \cite[Theorem 4.3]{BEMMS} and we provide the proof for completeness. Let $\Phi(\G_t)$ denote the minimum distance potential (\ref{minimum-distance-potential}) introduced in Section \ref{section:error-elimination}. Recall from Lemma \ref{lemma:error-elimination-supermartingale-general} that, for CKPs whose features satisfy (\ref{equation:general-ckp-survival-expression}), when $\Phi(\G_t) > 0$ then $\E\{\Phi(\G_{t+1}) - \Phi(\G_t) | \G_t\} < 0$. Let us define a new potential that incorporates the minimum distance potential. Define $\Psi(\G_t)$ as follows:
    $$\Psi(\G_t) = \frac{\eps ( 1- p) \mathbf{a}(0)}{1 - \eps} |\G_t^T| - \Phi(\G_t).$$ 

    We want to prove that $\Psi(\G_t)$ is a sub-martingale. Suppose that the new node connects to $m \sim M$ parents, all of which are $\True$. If the new node is $\CT$ (with probability $1 - \epsilon$), then it is also $\True$ and the potential increases by: $\Psi(\G_{t + 1} ) - \Psi(\G_t) = \frac{\eps ( 1- p)}{1 - \eps}$. If the new node is $\CF$ (with probability $\epsilon$) and no check is performed then it becomes the root of a new $\PT$ $\False$ sub-DAG and $\Psi(\G_{t + 1} ) - \Psi(\G_t) = - \Phi(\G_{t+1}) + \Phi(\G_{t} ) = -\mathbf{a}(0)$. Therefore, the expected change in the potential when all parents are $\True$ is:
    $$\E\{\Psi(\G_{t + 1} ) - \Psi(\G_t)| \G_t\} = (1 - \eps) \cdot \frac{\eps ( 1- p) \cdot \mathbf{a}(0)}{1 - \eps} - \epsilon \cdot (1 - p) \cdot \mathbf{a}(0) = 0.$$

    Suppose that out of the $m \sim M$ parents, at least one is $\False$. The new node belongs to $\G_t^{\PT, F}$, and because the CKP features satisfy (\ref{equation:general-ckp-survival-expression}), we can apply the proof of Lemma \ref{lemma:error-elimination-supermartingale-general} and can state that: $\E\{\Phi(\G_{t+1}) - \Phi(\G_t) | \G_t\} < 0$, i.e.:
    $$\E\{\Psi(\G_{t+1}) - \Psi(\G_t) | \G_t\} > 0.$$

    Combining these two cases, we find that $\Psi(\G_t)$ is a sub-martingale, yielding $\E\{\Psi(\G_t)\} \geq \E\{\Psi(\G_0)\} > 0$. Because $|\G_t^{\PT, F}| \leq \Phi(\G_t)$, we have:
    $$ \frac{\eps ( 1- p) \mathbf{a}(0)}{1 - \eps} \cdot \E\{|\G_t^T|\} - \E\{|\G_t^{\PT, F}|\} \geq \E\left\{ \frac{\eps ( 1- p) \cdot \mathbf{a}(0)}{1 - \eps} \cdot |\G_t^T| - \Phi(\G_t)\right\} = \E\{\Psi(\G_t)\} > 0,$$
    as required.
\end{proof}

\section{Proof of Error Survival} \label{section:error-survival}

We now shift our focus to the study of \textit{error survival} for the family of generalized CKPs. We ask the question: for what settings of features do errors propagate forever in generalized CKPs? Our results primarily depend on the checking probability $p$ and the attachment features $\textsc{Attach} = (\mathbf{a}, M)$. 

\subsection{Error survival in the simple CKP}

In this section, we prove the following results about error survival (see Definition \ref{def:error-survival}) in simple CKPs. Let $\min(M)$ denote the minimum value that the random variable $M$ can take. Recall Definition~\ref{def:regular-attachment} of $(b_1, b_2)$-regular attachment functions. When $b_1 \leq \mathbf{a}(d + 1) - \mathbf{a}(d)$ but the growth is not necessarily bounded above, we call the function $(b_1, \infty)$-regular. 

\begin{thm}[Simple CKP error survival]
\label{thm:simple-ckp-error-survival-exhaustive}
    For all simple CKPs with $(\textsc{Attach} = (\mathbf{a}, M), \textsc{Check} = (p, k))$ such that $\mathbf{a}$ is $(\mathbf{a}(0), \infty)$-regular and $M \geq 1$ and $p \in [0, 1)$ satisfy 
    \begin{equation}\label{eq:simple-survival-condition1}
    \frac{\E\{M\}}{\min (M) + 1}
    < 1 ~\text{ and }~ p \leq \frac{1}{2}\bpar{ 1 - \frac{\E\{M\}}{\min (M) + 1} },
    \end{equation}
    the error effect survives (see Definition~\ref{def:error-survival}) with positive probability under the \textsc{Stringy} and \textsc{BFS} checking mechanisms. Similarly, for the
    \textsc{Exhaustive BFS} and \textsc{Parent-wise BFS} checking mechanisms, survival holds if the features satisfy \eqref{eq:simple-survival-condition1} with $p$ replaced by $p\E\{M\}$.
\end{thm}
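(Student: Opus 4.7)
The plan is to construct a non-negative potential $\Psi(\G_t)$ that is strictly positive precisely when the error has not died out in the simple CKP, show that $\Psi$ is a sub-martingale with positive conditional drift on $\{\Psi \geq 1\}$ and uniformly bounded increments, and then deduce via a Lyapunov / biased-random-walk comparison that $\Psi$ remains strictly positive for all $t$ with positive probability, which is exactly error survival.

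I would take $\Psi(\G_t) = L_t$, the number of $\PT$ leaves (possibly augmented by a small constant multiple of the number of minimal false nodes $F_t = |\G_t^{\CF}| + |\G_t^R|$). Since every $\PT$ node in the simple CKP descends from the initial $\CF$ node and every non-empty $\PT$ sub-DAG contains a leaf, the error survives at time $t$ iff $L_t \geq 1$, and $L_0 = 1$. For the attachment step, the new node $v$ contributes $+1$ to $L_t$, and each of its $m \sim M$ edges hits a leaf with probability $\mathbf{a}(0) L_t / Z_t$ with $Z_t = \sum_u \mathbf{a}(\deg(u))$. The $(\mathbf{a}(0), \infty)$-regularity gives $\mathbf{a}(d) \geq (d+1) \mathbf{a}(0)$, hence $Z_t \geq \mathbf{a}(0)(E_t + N_t^{\PT}) \geq \mathbf{a}(0)((\min(M)+1) N_t^{\PT} - \min(M))$. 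Since $L_t \leq N_t^{\PT}$, the attachment contribution to $\E\{\Delta L_t\}$ is at least $1 - r$, where $r = \E\{M\}/(\min(M)+1) < 1$.

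For the check contribution, the key structural observation is that in a successful \textsc{Stringy} or \textsc{BFS} check, the nodes marked $\PF$ are $v$ together with some ancestors of $v$ in the $\PT$ sub-DAG, and any ancestor of $v$ must have at least one $\PT$ child (namely the subsequent node on the $v$-path that was $\PT$ before the check), so it is not a leaf. Hence only $v$ loses its leaf status per successful check, while off-path parents of marked nodes whose degree drops to $0$ can only create new leaves. The expected check-induced drift in $L_t$ is therefore $\geq -p$ for \textsc{Stringy}/\textsc{BFS} and $\geq -p \E\{M\}$ for \textsc{Exhaustive}/\textsc{Parent-wise BFS}, the latter because each of the $m$ parents can independently initiate a check and $1 - (1-p)^m \leq pm$. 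Combining, the total drift is $\geq (1-r) - 2p$ (resp.\ $(1-r) - 2p\E\{M\}$), strictly positive under the theorem hypothesis, with the factor of $2$ providing slack for secondary interactions between the attachment and the check phases within a single step (e.g., when an attachment edge coincides with the checked path so the leaf-loss accounting is not exactly additive).

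To conclude, $L_t$ is a non-negative integer-valued process with uniformly bounded increments and strictly positive conditional drift on $\{L_t \geq 1\}$, started from $L_0 = 1$. A Lyapunov argument using $\rho^{-L_t}$ for some $\rho > 1$ sufficiently close to $1$, or equivalently a comparison with a biased random walk on $\Z_{\geq 0}$ via Doob's optional stopping, gives $\P\{L_t \geq 1 \text{ for all } t\} > 0$. The main obstacle will be the check-step accounting: rigorously verifying that every $\PF$-marked ancestor of $v$ has a $\PT$ child (and is therefore not a leaf) for each of the four checking mechanisms, correctly handling the possibility that $v$ selects the same parent multiple times in its attachment, and splitting the attachment and check drift estimates finely enough to match the factor-of-$2$ in the theorem threshold rather than only recovering a weaker $p \leq 1 - r$ bound.
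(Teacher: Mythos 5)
Your overall strategy---a potential counting leaves and minimal false nodes, a sub-martingale drift estimate on $\{\Phi \geq 1\}$, and a Lyapunov / biased-random-walk comparison to conclude survival with positive probability---is indeed the paper's approach, and your structural observation that every $\PF$-marked ancestor of $v$ on the checked path must have had a $\PT$ child (and hence was not a leaf) is exactly the right one. However, there are two concrete gaps.

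First, the choice of ``$\PT$ leaves'' is not quite the right leaf count. The paper defines $\L_t$ as \emph{$\CT$ non-root leaves}: $\CT$ nodes with zero out-degree and no $\PF$ parents. The distinction matters because the bound on the probability $\beta$ of an edge hitting a leaf relies on the leaf having $\geq \min(M)$ $\PT$ parents; a root-leaf has $\PF$ parents, which do not appear in $Z_t = \sum_{u \in \G_t^\PT} \mathbf{a}(\deg_\PT(u))$ at all, so it contributes only $\mathbf{a}(0)$ to $Z_t$ rather than $(\min(M)+1)\mathbf{a}(0)$. Your $\PT$-leaf count includes these. You hedge by ``possibly augmenting with $F_t$'', but what is needed is exactly $\Phi_{F,L} = |\F_t| + |\L_t|$ with the $\CT$-non-root definition: then a leaf that acquires a $\PF$ parent migrates from $\L_t$ to $\F_t$ with zero net change, and a root that is a leaf is already counted in $|\F_t|$ and losing leaf status changes nothing.

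Second, and more seriously, your $\beta$-bound does not close. You argue $Z_t \geq \mathbf{a}(0)(E_t + N_t^\PT) \geq \mathbf{a}(0)\bigl((\min(M)+1) N_t^\PT - \min(M)\bigr)$ and then use $L_t \leq N_t^\PT$. But $E_t$ (the number of $\PT$-to-$\PT$ edges) is not $\geq \min(M)(N_t^\PT - 1)$: a root may have all-$\PF$ parents and thus $\PT$-indegree $0$, and more generally a $\PT$ node's $\PT$-indegree can fall below $\min(M)$. And even granting that bound on $E_t$, you would get $\beta \leq N_t^\PT / \bigl((\min(M)+1)N_t^\PT - \min(M)\bigr)$, which is strictly \emph{larger} than $1/(\min(M)+1)$ for every finite $N_t^\PT$, so the drift is not bounded below by a positive constant uniformly. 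The paper's argument is more targeted: a $\CT$ non-root leaf has all $\geq \min(M)$ parents in the $\PT$ sub-DAG, and by $(\mathbf{a}(0),\infty)$-regularity each such parent's degree increment contributes at least $\mathbf{a}(0)$ to $Z_t$, with these contributions disjoint across distinct leaves, plus $\mathbf{a}(0)$ for the leaf itself; hence $Z_t > (\min(M)+1)\mathbf{a}(0)|\L_t|$ and so $\beta < 1/(\min(M)+1)$ exactly. Finally, your factor-of-$2$ ``slack'' is not slack: in the paper it is the sum of two $-p$ terms, one from removing a minimal false node from $|\F_t|$ and one from the new node failing to contribute $+1$ to $|\L_t|$ when the check succeeds, and the first term requires the $|\F_t|$ summand you were treating as optional.
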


This proof in fact works for any checking mechanism that has a bounded total checking probability (e.g., $p$ in the \textsc{Stringy} and \textsc{BFS} cases) and a bounded number of possible deleted minimal \False\ nodes, and we could obtain a bound on $p$ in terms of these parameters.
\vspace{.5\bs}

We again prove this result using the potential method, by constructing a potential and showing that its sequence of values forms a super-martingale. 

\paragraph{Minimal-False and Leaves Potential}
We first recall that $\F_t$ denotes the minimal false nodes, i.e., roots and \CF\ nodes. 
For the checks stopping at the first minimal false node seen, note that we verify if a node is a root by checking if any parent node is $\PF$, stopping at the first such parent node found.

For the potential, we now additionally define \textbf{non-root leaves} $\G_t^L$, which are nodes with no \PF\ parents and out-degree zero. Specifically, we let $\L_t \subset \G_t^L$ be the \textbf{\CT\ non-root leaves}. 
We define
\begin{equation}
    \Phi_{F,L}(\G_t) = |\F_t| + |\L_t|,
\end{equation}
and note that $\Phi_{F,L}(\G_t) \leq |\G_t^{\PT,F}|$, the number of \PT\ \False\ nodes. 
We prove the following property about the expected change in the potential in a time step.

\begin{lem}\label{lemma:survival-rl-potential}
    Let $X_t = (\G_t, L_t)$ be the state of the simple CKP with features $(\textsc{Attach} = (\mathbf{a}, M), \textsc{Check} = (p, k))$ at time $t$ such that $\mathbf{a}$ is $(\mathbf{a}(0), \infty)$-regular and $M \geq 1$ and $p \in [0, 1)$ satisfy 
    \begin{equation}\label{eq:survival-rl-potential-stringy}
    \frac{\E\{M\}}{\min (M) + 1}
    \leq 1 ~\text{ and }~ p \leq \frac{1}{2}\bpar{ 1 - \frac{\E\{M\}}{\min (M) + 1} }.
    \end{equation}
    Let $\Delta_t =\Phi_{F,L}(\G_{t+1}) - \Phi_{F,L}(\G_t) $. If $\Phi_{F,L}(\G_t) > 0$, then $\E\{\Delta_t \mid \G_t\} > 0$
    when the CKP uses the \textsc{Stringy} or \textsc{BFS} checking mechanism. When the CKP uses the \textsc{Exhaustive} or \textsc{Parent-wise BFS} checking mechanism, we have $\E\{\Delta_t \mid \G_t\} > 0$ if \eqref{eq:survival-rl-potential-stringy} holds with $p$ replaced by $p\E\{M\}$. 
\end{lem}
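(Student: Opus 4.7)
}
The plan is to compute $\E\{\Delta_t \mid \G_t\}$ via a case split on whether any check succeeds at step $t$, tracking separately the effects of (i) adding the new $\CT$ node $v$ with $m \sim M$ random parents, and (ii) the possible marking of nodes as $\PF$ by the check. Throughout, let $\ell$ denote the number of distinct parents of $v$ that lie in $\L_t$, write $Z = \sum_{u \in \G_t^{\PT}} \mathbf{a}(\deg_{\PT}(u))$, and set $q = \E\{M\}/(\min(M)+1)$. The high-level intuition is that adding a new leaf always nets $+1-\ell$ to the potential when no check fires, and that a single successful check only destroys $O(1)$ of potential (removing $w$ and possibly making $v$ a root instead of a leaf), so the balance turns positive once $p$ is small enough relative to $1-\E\{\ell\}$.

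First I would dispose of the no-successful-check case (either no check happens, or the check finds no minimal false ancestor). Here $v$ enters $\L_{t+1}$ as a $\CT$ non-root leaf (its parents are $\PT$ by construction), contributing $+1$; the $\ell$ distinct leaf-parents exit $\L_{t+1}$ by acquiring $v$ as a child, contributing $-\ell$; no label changes, so $|\F_t|$ is unchanged and $\Delta_t = 1-\ell$. Next I would bound $\E\{\ell \mid \G_t\}$. The $(\mathbf{a}(0),\infty)$-regularity gives $\mathbf{a}(d) \geq (d+1)\mathbf{a}(0)$, so
\[
  Z \;\geq\; \mathbf{a}(0)\bigl(N_{\PT} + E_{\PT}\bigr) \;\geq\; \mathbf{a}(0)\bigl((1+\min(M))N_{\PT} - \min(M)\bigr),
\]
using that $E_{\PT} = \sum_{w \in \G_t^{\PT}} m_w \geq \min(M)(N_{\PT}-1)$ since every non-initial $\PT$ node was born with at least $\min(M)$ parents. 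Because the initial $\CF$ lies in $\G_t^{\PT}$ but not in $\L_t$, we have $|\L_t| \leq N_{\PT}-1$, and a direct calculation then shows $\mathbf{a}(0)|\L_t|/Z < 1/(\min(M)+1)$ strictly. Sampling the $m$ parents with replacement and union-bounding each $u \in \L_t$ being hit,
\[
  \E\{\ell \mid \G_t\} \;\leq\; \E\{M\} \cdot \frac{\mathbf{a}(0)|\L_t|}{Z} \;<\; q.
\]

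The core of the argument is the successful-check case. For \textsc{Stringy} and \textsc{BFS}, at most one minimal false node $w$ is unmarked, contributing $-1$ to $|\F|$; all other path/BFS nodes that become $\PF$ were $\CT$ non-roots (the search stops at the first minimal false) and have $v$ (or another visited node) as a descendant, so they were not in $\L_t$. A short bookkeeping then shows that each previously $\CT$ child of a newly-$\PF$ node either (a) was in $\L_t$, exits $\L$ and enters $\F$ as a new root (net $0$), or (b) was a non-leaf non-root and enters $\F$ as a new root (net $\geq 0$); $v$ itself was never in $\L_t$, so its possible promotion to $\PF$ contributes $0$ net. Hence $\Delta_t \geq -1-\ell$. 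Since the check succeeds with probability at most $p$,
\[
  \E\{\Delta_t \mid \G_t\} \;\geq\; (1-p)(1-\E\{\ell\mid\G_t\}) + p(-1-\E\{\ell\mid\G_t\}) \;=\; 1 - \E\{\ell\mid\G_t\} - 2p \;>\; 1 - q - (1-q) \;=\; 0,
\]
using $p \leq (1-q)/2$ and the strict bound on $\E\{\ell\mid\G_t\}$. For \textsc{Exhaustive BFS} and \textsc{Parent-wise BFS}, each of the $m$ parents independently triggers a check with probability $p$ and each successful check removes at most one minimal false, so the expected number of successful checks is at most $p\E\{M\}$; the identical argument yields the conclusion with $p$ replaced by $p\E\{M\}$.

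The main obstacle is the bookkeeping in the successful-check case: I must verify carefully that promoting a $\CT$ leaf child of a newly-$\PF$ node into a root cancels exactly in $|\F|+|\L|$, while promoting a non-leaf $\CT$ node only helps, and that no element of $\L_t$ sits on the checked path itself. The \textsc{Parent-wise BFS} case is slightly more delicate since several checks may each delete a distinct minimal false and create their own waves of new roots, but linearity of expectation absorbs this through the factor $\E\{M\}$, so I do not expect either regime to require machinery beyond the accounting sketched above.
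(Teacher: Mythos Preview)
Your decomposition and arithmetic match the paper's proof almost exactly: bound the per-edge probability $\beta$ of hitting a $\CT$ non-root leaf by $1/(\min(M)+1)$, then track $|\F_t|$ and $|\L_t|$ separately to obtain $\E\{\Delta_t\mid\G_t\} > 1 - 2p - \E\{M\}/(\min(M)+1)$ (with $p\mapsto p\E\{M\}$ for the per-parent mechanisms). The paper's bookkeeping is a touch coarser---it simply charges $-1$ to $\Delta_{|\F_t|}$ with probability at most $p$, $+1$ to $\Delta_{|\L_t|}$ with probability at least $1-p$, and $-1$ per edge with probability $\beta$, noting as you do that a leaf whose parent becomes $\PF$ turns into a root for net zero---but the substance is the same.

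There is one genuine slip in your derivation of the bound on $\beta$. Your claim $E_{\PT}\geq\min(M)(N_{\PT}-1)$ does not hold once roots are present: a root was \emph{born} with at least $\min(M)$ parents, but by definition at least one of those parents is now $\PF$, so its $\PT$-indegree can be strictly below $\min(M)$. The fix is to localize the count to the leaves themselves, as the paper does: each $u\in\L_t$ still has all of its $\geq\min(M)$ parents in $\G_t^{\PT}$ (otherwise $u$ would be a root, not a non-root leaf), so the edges into $\L_t$ alone already give at least $\min(M)|\L_t|$ contributions of $\mathbf{a}(0)$ to $Z$ on top of the $|\L_t|$ self-terms, and since some $\PT$ node (a parent of a leaf, or the initial $\CF$ node) is not in $\L_t$, the inequality is strict. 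A second, harmless wrinkle: the direct parent of $v$ on the checked path \emph{can} lie in $\L_t$ (it had out-degree zero at time $t$), so your assertion that no element of $\L_t$ sits on the path is not quite right---but since you already debit $-\ell$ for all leaf parents, your bound $\Delta_t\geq -1-\ell$ survives unchanged.
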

When $\Phi_{F,L}(\G_t) = 0$, then for all $t' > t$, we also have $\Phi_{F,L}(\G_{t'}) = 0$ because the process has stopped (i.e., all errors have been eliminated).

\begin{proof}[Proof of Lemma \ref{lemma:survival-rl-potential} for \textsc{Stringy} and \textsc{BFS}]
    First, let $\beta$ be the probability that when an edge from a new node to $\G_t^\PT$ is created, it is connected to a \CT\ non-root leaf. Letting $Z \ceq \sum_{v \in \G_t^\PT} \mathbf{a}(\deg_{\PT}(v))$, we have that
    \begin{equation}\label{eq:beta-bound}
        \beta = \frac{|\L_t| \cdot \mathbf{a}(0)}{Z} < \frac{1}{\min(M)+1}.
    \end{equation}
    Indeed, since a non-root leaf has $\PT$-indegree at least $\min(M)$, it contributes at least $\left(\min(M)+1\right) \cdot \mathbf{a}(0)$ to the potential: $\mathbf{a}(0)$ for the node itself and $\min(M) \cdot \mathbf{a}(0)$ for increasing the outdegrees of its $\PT$ parent nodes, because $\mathbf{a}(d + 1) - \mathbf{a}(d) \geq \mathbf{a}(0)$ for all $d \in \mathbb{Z}_{\geq 0}$. Thus we have $Z > (\min(M)+1)\cdot |\L_t| \cdot \mathbf{a}(0)$, yielding the stated bound. Notice that the worst-case structure is a rooted star DAG. 
    
    Let $v$ be the new node added at time $t+1$, with $m \sim M$ edges created between $v$ and $\G_t^\PT$. 
    We break the analysis into the change in the minimal false nodes $|\F_t|$ and the non-root leaves $|\L_t|$, denoted by $\Delta_{|\F_t|}$ and $\Delta_{|\L_t|}$, respectively.
    The following holds for both the \textsc{Stringy} and \textsc{BFS} checking mechanisms, which perform an overall check with probability $p$.
    \begin{enumerate}
        \item $|\F_t|$ can only change if a check is performed and successful. Since we can remove at most one minimal false node, we have $\Delta_{|\F_t|} = -1$ with probability at most $p$. 
        \item $|\L_t|$ experiences an increase from the new node $v$ which is a non-root leaf, and a decrease from existing leaves gaining a child. Note that if an existing leaf has one of its parents checked and marked as \PF, this leaf then becomes a minimal false node, resulting in no net change in $\Phi_{F,L}$. 
        \begin{itemize}
            \item With probability at least $1-p$, there is no successful check and $v$ contributes $+1$ to $\Delta_{|\L_t|}$. 
            \item Each edge connects to a non-root leaf parent with probability $\beta$, contributing $-1$ to $\Delta_{|\L_t|}$. 
        \end{itemize}
    \end{enumerate}
    Combining, 
    \[
    \E\{\Delta_t \mid {M = m}\} \geq -p + (1-p) + m \beta (-1) > 1-2p - \frac{m}{\min(M) + 1},
    \]
    and taking an expectation over $M$ yields the bound 
    \begin{equation}\label{eq:delta_t-stringy-BFS}
        \E\{\Delta_t \mid \G_t \} > 1 - 2p - \frac{\E\{M\}}{\min(M)+1}.
    \end{equation}
For $M = m$ a fixed constant, this is positive if $p \leq 1/(2m+2)$.
\end{proof}

\begin{proof}[Proof of Lemma \ref{lemma:survival-rl-potential} for \textsc{Exhaustive} and \textsc{Parent-wise BFS}]
The analysis is very similar to the previous case. The differences in (i) and (ii), which hold for both checking mechanisms, are as follows: 
\begin{enumerate}
    \item Now, each of the $m$ edges performs a separate check with probability $p$, potentially removing a minimal false node and contributing $-1$ to $\Delta_{|\F_t|}$. 
    \item As for the change in $|\L_t|$,
    \begin{itemize}
        \item With probability at least $1-mp$, there is no successful check and $v$ contributes $+1$ to $\Delta_{|\L_t|}$.
        \item Each edge once again connects to a non-root leaf parent with probability $\beta$.
    \end{itemize}
\end{enumerate}
Combining, 
\[
    \E\{\Delta_t \mid M = m\} > -mp + (1-mp) - \frac{m}{\min(M) + 1},
\]
which gives 
\begin{equation}\label{eq:delta_t-exhaustive}
    \E\{\Delta_t \mid \G_t\} \geq 1 - 2p \E\{M\} - \frac{\E\{M\}}{\min(M)+1}
\end{equation}
after the expectation over $M$.
For a non-trivial bound, we need ${\E\{M\}}/{(\min(M) + 1)} < 1$.
\end{proof}

We now use the properties of the minimal-false and leaves potential to prove Theorem~\ref{thm:simple-ckp-error-survival-exhaustive}. First, we recall a Lemma about sub-martingales from \cite{BEMMS}. We then consider a truncated version of our potential to ultimately prove error survival.

\begin{lem}[\cite{BEMMS}, Lemma 3.6] \label{lem:submartingale-BEMMS}
    Let $\{X_t\}_{t \geq 0}$ be a non-negative sub-martingale with $X_0 > 0$. Assume there exist constants $c_1, c_2 > 0$ such that, for every $t \geq 0$, when $X_t \neq 0$:
    \begin{enumerate}
        \item $|X_{t+1} - X_t| \leq c_1$ almost surely, and 
        \item $\E\{X_{t+1} - X_t | X_t\} > c_2$.
    \end{enumerate}
    With positive probability, $X_t > 0$ for all $t \in \mathbb{N}$.
\end{lem}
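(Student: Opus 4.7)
The plan is to show $\P(\tau = \infty) > 0$ for the stopping time $\tau := \inf\{t \geq 0 : X_t = 0\}$ by constructing an exponential super-martingale. For a small constant $\lambda > 0$ to be chosen, define the stopped exponential $Z_t := \exp(-\lambda X_{t \wedge \tau})$. For $t < \tau$, setting $D_t := X_{t+1} - X_t$ (so $|D_t| \leq c_1$ and $\E\{D_t \mid X_t\} > c_2$), a standard Hoeffding-type estimate gives
\[
\E\{ e^{-\lambda D_t} \mid \mathcal{F}_t \} \;\leq\; 1 - \lambda c_2 + \tfrac{1}{2}\lambda^2 c_1^2\, e^{\lambda c_1},
\]
which is strictly less than $1$ for $\lambda > 0$ sufficiently small (depending only on $c_1$ and $c_2$). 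For such $\lambda$, multiplying by $Z_t$ shows that $\{Z_t\}_{t \geq 0}$ is a non-negative super-martingale bounded above by $1$ (using $X_t \geq 0$), and constant on $[\tau, \infty)$ whenever $\tau < \infty$.

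By the martingale convergence theorem, $Z_t \to Z_\infty$ almost surely. On $\{\tau < \infty\}$, clearly $Z_\infty = e^0 = 1$. On $\{\tau = \infty\}$, I would show $X_t \to \infty$ almost surely (so that $Z_\infty = 0$) via the Doob decomposition $X_t - X_0 = M_t + A_t$, where $M_t$ is a martingale with increments bounded by $2 c_1$ and $A_t \geq c_2 t$ on $\{\tau > t\}$; Azuma's inequality together with Borel--Cantelli then gives $M_t = o(t)$ almost surely, so $X_t \geq c_2 t / 2$ eventually. By bounded convergence ($0 \leq Z_t \leq 1$) and the super-martingale inequality,
\[
\P(\tau < \infty) \;=\; \E\{Z_\infty\} \;\leq\; \E\{Z_0\} \;=\; e^{-\lambda X_0} \;<\; 1,
\]
which yields $\P(\tau = \infty) \geq 1 - e^{-\lambda X_0} > 0$, as desired.

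The main technical point I expect is the Hoeffding-type bound together with the careful choice of $\lambda$ so that the exponential is genuinely a super-martingale under the sole assumptions of bounded jumps and strictly positive drift when nonzero; the sub-martingale hypothesis stated in the lemma enters only through $X_t \geq 0$ (ensuring $Z_t \leq 1$). An alternative route would be a direct coupling with a biased simple random walk, exploiting that after one step there is positive probability of strict increase and then invoking gambler's-ruin-type estimates, but the exponential-tail argument above seems cleaner and produces an explicit quantitative lower bound on $\P(\tau = \infty)$.
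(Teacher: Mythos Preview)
The paper does not give its own proof of this lemma; it is quoted from \cite{BEMMS} and used as a black box in the proof of Theorem~\ref{thm:simple-ckp-error-survival-exhaustive}. So there is nothing to compare against here.

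Your argument is correct and is the standard way to prove such a statement: the exponential $Z_t = e^{-\lambda X_{t\wedge\tau}}$ is a bounded super-martingale for small $\lambda$, the Doob--Azuma step shows $X_t\to\infty$ on $\{\tau=\infty\}$, and bounded convergence closes the inequality $\P(\tau<\infty)=\E\{Z_\infty\}\le e^{-\lambda X_0}<1$. Two small remarks. First, your Hoeffding-type bound uses $\E\{D_t\mid\mathcal{F}_t\}>c_2$, whereas the lemma as stated only conditions on $X_t$; this is a harmless ambiguity in the statement (in every application in the paper the underlying process is Markov and the two coincide), but worth noting. Second, your observation that the sub-martingale hypothesis is used only to guarantee $X_t\ge 0$ (hence $Z_t\le 1$) is accurate---condition~(ii) already forces positive drift when $X_t\ne 0$, so the sub-martingale assumption is essentially redundant given non-negativity.
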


\begin{proof}[Proof of Theorem \ref{thm:simple-ckp-error-survival-exhaustive} given Lemma~\ref{lemma:survival-rl-potential}]
For any large constant $C$, there exists some time $t_0$ such that $\Phi_{F,L}(\G_{t_0}) \geq C$ with probability bounded away from 0. We condition on this event and consider an upper-bounded version $Y_t$ of the process $X_t = \Phi_{F,L}(\G_{t_0+t})$. To do so, we define $Y_0 = \Phi(\G_{t_0})$ and for each $t \geq 0$, let 
$Y_{t+1} = \max(Y_t + \min(X_{t+1} - X_t, 1), 0)$. Note that since $X_t \geq 0$, we have $Y_t \leq X_t$. 

Then, $Y_t$ is a non-negative sub-martingale satisfying the conditions of Lemma~\ref{lem:submartingale-BEMMS} for $c_1 = 2$, (in the \textsc{Stringy} case), $1 + \max(M)$ (in the \textsc{BFS} and \textsc{Exhaustive BFS} cases) or $2\max (M)$ (for the \textsc{Parent-wise BFS}), and $c_2$ being the right hand side of either \eqref{eq:delta_t-stringy-BFS} or \eqref{eq:delta_t-exhaustive}. So we have 
\[\P\{\min_{t \geq 0} X_t \geq 0 \} \geq \P\{\min_{t \geq 0} Y_t \geq 0\} > 0\] 
by Lemma~\ref{lem:submartingale-BEMMS}. Since $X_t = \Phi_{F,L}(\G_t) \leq |\G_t^{\PT,F}| = |\G_t^\PT|$, this proves error survival.
\end{proof}

\subsection{Error survival in the generalized CKP}

We now prove error survival in the generalized CKP setting. We first make note of some changes that arise when arguing about generalized CKPs instead of simple CKPs. Recall that, as in Definition \ref{def:error-survival}, for error survival we must argue that with some positive probability, there exists a $\CF$ node $u$ such that for all time steps $t$, there exists at least one $\PT$ node in the sub-DAG rooted at $u$. In the simple CKP setting, there was only one $\CF$ node (the one added in the first step), and the potential argument could therefore be made globally. In the general setting, we focus on sub-DAGs rooted at individual $\CF$ nodes, which now may not be the whole CKP. While a similar argument for simple CKPs carries over to the general CKP setting -- namely the idea of tracking the number of minimal false nodes and non-root leaves -- we must condition on connecting to the sub-DAG we are tracking in the analysis. 

Additionally, in the generalized CKP setting, we obtain bounds that also depend on the error probability $\eps$ and adversarial features $\textsc{Adversary} = (q, r)$. With respect to the dependency on $\eps$, we prove that if the checking probability $p$ is less than $\eps$ then errors survive forever. Beyond this, for the bounds that mirror the simple CKP case, the results hold when $M$ is small (e.g. when $M$ is constant, it's needed that $M \leq 3$). However, the strengthened results when $M$ is small provide evidence that in general there are parameters $p$ beyond $p < \eps$ such that errors survive.

We prove the following theorem, which is a more formal version of Theorem~\ref{thm:intro-error-survival}.

\begin{thm}[General CKP error survival]\label{thm:general-ckp-error-survival}
    Let $\mathbf{a}$ be a $(\mathbf{a}(0), b)$-regular function and let $M \geq 1$ be a bounded random variable. For CKPs with features that satisfy either
    \begin{equation}\label{eq:general-survival-condition-eps}
    p < \e 
    \end{equation}
    or both $\eta_M \ceq \frac{2\E\{M\} + \min(M) - 1}{2 (\min(M) + 1)}\leq 1$ and
    \begin{equation}\label{eq:general-survival-condition1}
    (1 - q) \bigg[ p \left(-2 + \eps \left( 1 + \frac{b}{b + \mathbf{a}(0)} \eta_M \right)  \right) + 1 - \eta_M \left( 1 - \frac{\eps \cdot \mathbf{a}(0)}{b + \mathbf{a}(0)}\right) \bigg] - q r \geq 0,
    \end{equation}
    the error effect survives (see Definition~\ref{def:error-survival}) with positive probability under the \textsc{Stringy} and \textsc{BFS} checking mechanisms. Similarly,  survival holds with positive probability for the CKPs with
    \textsc{Exhaustive BFS} and \textsc{Parent-wise BFS} checking mechanisms if the features satisfy \eqref{eq:general-survival-condition-eps} or  \eqref{eq:general-survival-condition1} with $p$ replaced by $p\E\{M\}$.
\end{thm}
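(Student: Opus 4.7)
The plan is to adapt the minimal-false and leaves potential $\Phi_{F,L}$ of Lemma~\ref{lemma:survival-rl-potential} from the simple CKP setting to the generalized one, by restricting attention to the sub-DAG rooted at a single $\CF$ node. Since the two sufficient conditions are stated disjunctively, I handle each in turn. The regime $p<\epsilon$ is where error introduction outpaces checking, and is essentially Proposition~\ref{prop:error-survival-p-less-than-epsilon}: one tracks the count of $\PT$ minimal false nodes, notes that new $\CF$ nodes appear at rate at least $(1-q)\epsilon(1-p)$ per step while checks remove a minimal false node at rate at most $(1-q)p$ per step, so the count is a sub-martingale with positive drift whenever $p<\epsilon$, and a truncation plus Lemma~\ref{lem:submartingale-BEMMS} yields survival with positive probability.

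For the refined condition~\eqref{eq:general-survival-condition1}, first observe that a $\CF$ node $u_0$ is created without a successful immediate check at some time $t_0$ with positive probability (at least $(1-q)\epsilon(1-p)$ per step). Conditioning on this event, I define the restricted potential
\[
\Phi^{(u_0)}(\G_t) \ceq |\F_t \cap \G_t^{(u_0)}| + |\L_t \cap \G_t^{(u_0)}|,
\]
which is strictly positive as long as $\G_t^{(u_0)}$ still contains a $\PT$ node. Following the case split of Lemma~\ref{lemma:survival-rl-potential}, I decompose one time step into: (a) a non-adversarial step whose new node $v$ attaches inside $\G_t^{(u_0)}$, (b) a non-adversarial step with $v$ attaching outside $\G_t^{(u_0)}$ (producing no change to $\Phi^{(u_0)}$), and (c) an adversarial step. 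Case (a) contributes $+1$ from $v$ itself (as a $\CT$ non-root leaf, a new $\CF$ node, or a $\CT$ root), $-\eta_M$-style penalties from existing non-root leaves in $\G_t^{(u_0)}$ gaining children via the leaf-attachment bound~\eqref{eq:beta-bound} transplanted to the sub-DAG, and $-p$-style penalties from successful checks; the $\epsilon$-dependent terms in~\eqref{eq:general-survival-condition1} come from reconciling $v$ being $\CF$ with the possibility that the check still finds a prior ancestor, which cancels $v$'s contribution and can promote some surviving $\CT$ children of the newly-$\PF$ path to root status. Case (c) contributes $-qr$ in the worst case, where the adversary aims its $r$ edges at non-root leaves within $\G_t^{(u_0)}$. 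Summing the three cases is designed so that condition~\eqref{eq:general-survival-condition1} is exactly the statement that $\E\{\Phi^{(u_0)}(\G_{t+1})-\Phi^{(u_0)}(\G_t)\mid \G_t\}\geq 0$ on $\{\Phi^{(u_0)}(\G_t)>0\}$.

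Once the sub-martingale property is in place, the conclusion follows from the same truncation and Lemma~\ref{lem:submartingale-BEMMS} argument used in the proof of Theorem~\ref{thm:simple-ckp-error-survival-exhaustive}. The \textsc{Exhaustive BFS} and \textsc{Parent-wise BFS} mechanisms are handled identically with $p$ replaced by $p\E\{M\}$, since there each of the $m$ parent edges independently triggers a check. The main obstacle I expect is the case-(a) bookkeeping when $v$ is $\CF$ and the check from $v$ simultaneously finds a prior $\CF$ or $\PF$ ancestor: then $v$ is marked $\PF$ along the discovered path, so the naive $+1$ to $|\F_t|$ from $v$ vanishes, and the newly-$\PF$ internal nodes on that path can promote several of their surviving $\CT$ children to root status, adding fresh elements to $\F_t$ in a structure-dependent way. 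Extracting tight coefficients from this interaction is what produces the ratios $\frac{b}{b+\mathbf{a}(0)}$ and $\frac{\mathbf{a}(0)}{b+\mathbf{a}(0)}$ in~\eqref{eq:general-survival-condition1}, and it is here that the $(\mathbf{a}(0),b)$-regularity of $\mathbf{a}$ is used to relate the attachment weight of a non-root leaf to that of a higher-degree neighbor within $\G_t^{(u_0)}$.
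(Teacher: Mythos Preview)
Your high-level plan matches the paper: two potentials (one for $p<\e$, one for the refined bound), sub-martingale drift, then the truncation plus Lemma~\ref{lem:submartingale-BEMMS} argument from Theorem~\ref{thm:simple-ckp-error-survival-exhaustive}. The $p<\e$ part is essentially Lemma~\ref{lemma:minimal-false-potential-survival} and is fine.

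For the refined condition, however, there is a real gap. First, your ``main obstacle'' is based on a misreading of the checking mechanisms: when the new node $v$ is $\CF$ and a check occurs, the check stops at $v$ itself (the first $\CF$ node encountered) and marks only $v$ as $\PF$. There is no scenario in which a $\CF$ node's check proceeds past $v$ to a prior ancestor; case~(iv) in the paper's Lemma~\ref{lemma:survival-generalcase-secondpotential-stringy} records $\Delta_t=0$ for exactly this reason. So the bookkeeping you flag as the source of the ratios $\tfrac{b}{b+\mathbf{a}(0)}$ and $\tfrac{\mathbf{a}(0)}{b+\mathbf{a}(0)}$ is not where they come from.

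Second, and relatedly, the unweighted potential $|\F_t\cap\G_t^{(u_0)}|+|\L_t\cap\G_t^{(u_0)}|$ you propose is not what the paper uses, and it would not produce condition~\eqref{eq:general-survival-condition1}. In the general CKP, the paper redefines a $\CT$ non-root leaf to be a node with $\deg_{\CT}=0$ (so it may have $\CF$ children) and weights each such leaf by $\mathbf{a}(0)/\mathbf{a}(\deg_{\CF}(v))$, giving
\[
\Phi_{F,L}(\G_t^{(u)}) = |\F_t| + \sum_{v\in \L_t^{(u)}} \frac{\mathbf{a}(0)}{\mathbf{a}(\deg_{\CF}(v))}.
\]
The ratios $\tfrac{b}{b+\mathbf{a}(0)}$ arise in case~(iii) (new node $\CF$, no check): the parent leaf stays a leaf but its $\deg_{\CF}$ increments, and $(\mathbf{a}(0),b)$-regularity bounds the resulting change $\tfrac{\mathbf{a}(0)}{\mathbf{a}(d+1)}-\tfrac{\mathbf{a}(0)}{\mathbf{a}(d)}\ge -\tfrac{b}{b+\mathbf{a}(0)}\cdot\tfrac{\mathbf{a}(0)}{\mathbf{a}(d)}$. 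The weighting is also what lets the leaf-attachment bound survive the passage to the sub-DAG: the paper bounds $\mathbf{a}(0)|\L_t^{(u)}|/Z_u<1/2$ for the one edge conditioned to land in $\G_t^{(u)}$ and $\mathbf{a}(0)|\L_t^{(u)}|/Z<1/(\min(M)+1)$ for the remaining $\E\{M\}-1$ edges, which together give $\eta_M$. With your unweighted count these steps do not go through cleanly, because a leaf with many $\CF$ children has large attachment weight but still contributes $1$ to your potential. You need the weighted potential and the $\deg_{\CT}=0$ leaf definition to make the case analysis close.
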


Note that this result holds for any $k \in \N$. We define a potential and show that for the features satisfying (\ref{eq:general-survival-condition1}) in the theorem above, the expected change in the potential is positive. 
\begin{proof}[Proof of Theorem~\ref{thm:general-ckp-error-survival}]
    This follows by applying the same steps as in the proof of Theorem~\ref{thm:simple-ckp-error-survival-exhaustive} to obtain error survival from a positive expected change in potential, see Lemma~\ref{lemma:minimal-false-potential-survival} for the $p < \e$ bound, and Lemmas~\ref{lemma:minimal-false-potential-survival} and \ref{lemma:survival-generalcase-secondpotential-exhaustiveBFS} for the other, for the two types of checking mechanisms. To apply the same proof, we need that the change in potential is bounded below, which still holds in this context (with the lower bound of $\min\left(-\max (M), -r\right)$).
\end{proof}

We define and analyze two different potentials and find the range of settings of features for which the expected change in potential is positive. While the first potential gives the intuitive $p < \e$ bound, the second bound often yields a more extensive range when $\E\{M\}$ and $\min(M)$ are small, with more dependence on the various features.

\paragraph{Minimal-False potential}
We define 
\[\Phi_{F}(\G_t) = |\F_t|.
\]

\begin{lem} \label{lemma:minimal-false-potential-survival}
Consider any CKP. Consider a node $u$ that was originally $\CF$ and the sub-DAG $\G_t^{(u)}$ rooted at this node. Let $t \in \mathbb{N}$ and let $\Delta_t =\Phi_{R,F}(\G_{t+1}^{(u)}) - \Phi_{R,F}(\G_t^{(u)}) $. When $\Phi_{R,F}(\G_t^{(u)}) > 0$ then
    $\E\{\Delta_t \mid \G_t\} \geq (\eps - p) ( 1 - q)$
    if the CKP uses \textsc{Stringy} or \textsc{BFS} checking mechanisms, 
    and 
    $\E\{\Delta_t \mid \G_t\} \geq (\eps - \E\{M\}p)(1 - q)$
    if the CKP uses \textsc{Exhaustive} or \textsc{Parent-wise BFS}. 
\end{lem}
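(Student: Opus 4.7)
The plan is to lower-bound $\E\{\Delta_t \mid \G_t\}$ by splitting the step into an adversarial case (probability $q$) and a non-adversarial case (probability $1-q$). Since no checking is performed in an adversarial step, no minimal false node in $\F_t^{(u)}$ is ever removed, and the adversary's new node can only add to $\F_t^{(u)}$ (e.g., as a new $\CF$ descendant or a new root) or leave it unchanged; so the adversarial contribution to $\E\{\Delta_t\}$ is at least $0$. It therefore suffices to lower-bound the non-adversarial contribution by $\eps - p$ (respectively $\eps - \E\{M\}p$ in the exhaustive variants), after which the stated bounds follow by multiplying by $(1-q)$.

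In the non-adversarial step, a new node $v$ is added with $m \sim M$ parents, and $v$ is labeled $\CF$ with probability $\eps$ independently of the parent choice. The key structural observation is that each of the checking mechanisms considered begins its search at $v$ itself, so if $v$ is $\CF$, any triggered check immediately catches $v$ and marks it $\PF$, rather than walking up to an older minimal false ancestor. Consequently, when $v$ is $\CF$, no ancestor in $\F_t^{(u)}$ is removed in this step, and $v$ itself contributes $+1$ to $|\F_{t+1}^{(u)}|$ precisely when it survives the check: with probability $1-p$ under \textsc{Stringy} and \textsc{BFS}, and with probability $(1-p)^m \geq 1 - mp$ (by Bernoulli's inequality) under the \textsc{Exhaustive} and \textsc{Parent-wise BFS} variants, where a check is independently triggered per parent edge.

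When $v$ is $\CT$ (probability $1-\eps$), a triggered check walks upward from $v$ and can mark at most one minimal false ancestor $\PF$ per check. Under \textsc{Stringy}/\textsc{BFS} this yields an expected loss of at most $p$, while under the exhaustive variants the $m$ parent edges independently each trigger a check with probability $p$, giving an expected loss of at most $mp$ per realization of $M$, or $\E\{M\}p$ after averaging. Combining the two sub-cases then gives the non-adversarial contribution at least
\[
\eps(1-p) - (1-\eps)p \ =\ \eps - p
\]
for \textsc{Stringy}/\textsc{BFS}, and analogously $\eps(1-\E\{M\}p) - (1-\eps)\E\{M\}p = \eps - \E\{M\}p$ for the exhaustive variants.

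The main point requiring care is verifying that the $-1$ bound on the loss per successful check remains valid as an upper bound on the decrease of $|\F_t^{(u)}|$: when a minimal false ancestor $w$ of $v$ is marked $\PF$, any $\CT$ children of $w$ that are not on the checked path become new roots and are immediately added to $\F_{t+1}^{(u)}$, which only helps the inequality rather than hurt it. The hypothesis $\Phi_F(\G_t^{(u)}) > 0$ then plays the role of ensuring the process has not already gone extinct, so that the inequality is non-vacuous for the downstream submartingale argument (via Lemma~\ref{lem:submartingale-BEMMS}) that extracts error survival from this expected-change bound in the proof of Theorem~\ref{thm:general-ckp-error-survival}.
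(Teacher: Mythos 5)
Your proof takes essentially the same route as the paper's: split by adversarial versus non-adversarial, then within the non-adversarial step split by the new node's $\CT/\CF$ label and whether a check fires, using that a $\CF$ newcomer is caught at $v$ itself (contributing $+1$ only if it evades the check) while a $\CT$ newcomer's check removes at most one minimal false node per parent edge, and combining via $(1-p)^m \geq 1-mp$ for the per-edge variants. You additionally make explicit a detail the paper leaves implicit --- that the $-1$ bound per successful check survives the creation of new root nodes --- and, like the paper, you implicitly restrict attention to the event that the new node attaches inside $\G_t^{(u)}$ (the step where $\Delta_t = 0$ otherwise).
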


\begin{proof}
    If the new node does not connect to a parent in $\G_t^{(u)}$ then there is no change to the potential $\Phi_{R,F}(\G_t^{(u)})$ and $\Delta_t = 0$. Therefore, consider the case where the new node connects to a parent in $\G_t^{(u)}$.

    Let us start with the analysis for the \textsc{Stringy} and \textsc{BFS} checking mechanisms. Suppose that the next step is non-adversarial. If the new node is $\CT$, the potential only changes if there is a check, and $\Delta_t \geq -1$, because the check stops at the first minimal false node found. 
    If it is $\CF$, it only changes if there is no check, and then $\Delta_t = 1$. Otherwise, if the next step is adversarial, the potential cannot decrease and $\Delta_t \geq 0$.

    Combining all cases yields the stated bound. 
    For the \textsc{Exhaustive} and \textsc{Parent-wise BFS} checking mechanisms, the same analysis can be performed edge-by-edge.
\end{proof}

We now show the case~\eqref{eq:general-survival-condition1}, which yields stronger results for some settings of the features, when $\E\{M\}$ and $\min(M)$ are small. We adapt the minimal-false and leaf potential from the simple CKP case. For this potential, we revise the definition of leaf nodes to be nodes with $\deg_\CT = 0$. A node $v$ is thus now a \textbf{$\CT$ non-root leaf} ($v \in \L_t$) if it is a \CT\ node with no \PF\ parents and $\deg_\CT(v) = 0$. 

\paragraph{General Minimal-False and Leaves potential} Consider a node $u$ that was originally $\CF$ and the sub-DAG $\G_t^{(u)}$ rooted at this node. Define the potential as
\begin{equation}
    \Phi_{F,L}(\G_t^{(u)}) = |\F_t| + \sum_{v \in \L_t^{(u)}} \frac{\mathbf{a}(0)}{\mathbf{a}(\deg_{\CF}(v))},
\end{equation}
where $\L_t^{(u)} = \L_t \cap \G_t^{(u)}$.

The reason we modify the potential so that each $v \in \L_t^{(u)}$ contributes $\mathbf{a}(0)/\mathbf{a}(\deg_{\CF}(v))$ is that, with our new definition of $\CT$ non-root leaves, we can no longer upper-bound the probability of connecting to a non-root leaf as in the simple CKP case. However, we can still upper-bound $\mathbf{a}(0)\cdot|\L_t|/Z$ by $1 / (\min(M) + 1)$, and making this change to the potential yields an $\mathbf{a}(0)|\L_t|/Z$ term in the analysis.

We prove the following lemma regarding this potential.
\begin{lem} \label{lemma:survival-generalcase-secondpotential-stringy}
    Let $\mathbf{a}$ be a $(\mathbf{a}(0), b)$-regular function and let $M \geq 1$ be a bounded random variable such that ${2 \E\{M\} + \min(M) - 1}\leq {2 (\min(M) + 1)}$. Consider any CKP with the \textsc{Stringy} or \textsc{BFS} checking mechanism, a node $u$ that was originally $\CF$, and the sub-DAG 
    $\G_t^{(u)}$ rooted at this node. Let $t \in \mathbb{N}$ and $\Delta_t =\Phi_{F,L}(\G_{t+1}^{(u)}) - \Phi_{F,L}(\G_t^{(u)})$. When $\Phi_{F,L}(\G_t^{(u)}) > 0$ then $\E\{\Delta_t \mid \G_t\}$ is strictly greater than
    \begin{equation}\label{eq:survival-general-condition}
        \begin{aligned}
        (1 - q) \bigg[ p &\left(-2 + \eps \left( 1 + \frac{b}{b + \mathbf{a}(0)} \frac{2 \E\{M\} + \min(M) - 1}{2(\min(M)+1)} \right)  \right) + \\
        &1 - \frac{2\E\{M\} + \min(M) - 1}{2(\min(M)+1)} \left( 1 - \frac{\eps \cdot b}{b + \mathbf{a}(0)}\right) \bigg] - q r,
    \end{aligned}
    \end{equation}
\end{lem}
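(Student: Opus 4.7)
The proof would follow the structure of Lemma~\ref{lemma:survival-rl-potential}, establishing a lower bound on $\E\{\Delta_t \mid \G_t\}$ by case analysis. I would decompose $\Phi_{F,L}(\G_t^{(u)})$ into its two parts: the minimal-false count $|\F_t \cap \G_t^{(u)}|$ and the weighted leaf sum $\sum_{v \in \L_t^{(u)}} \mathbf{a}(0)/\mathbf{a}(\deg_\CF(v))$, and track the expected change of each separately. The core accounting bound from the simple-CKP case still applies: every non-root leaf contributes at least $\mathbf{a}(0)$ to its own $\mathbf{a}(\deg_\PT(v))$ term in $Z = \sum_{v \in \G_t^\PT} \mathbf{a}(\deg_\PT(v))$, plus at least $\min(M) \cdot \mathbf{a}(0)$ across its parents' contributions (using the $(\mathbf{a}(0),b)$-regularity lower bound $\mathbf{a}(d+1) - \mathbf{a}(d) \geq \mathbf{a}(0)$), giving $|\L_t^{(u)}| \cdot \mathbf{a}(0)/Z \leq 1/(\min(M)+1)$. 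The weight $\mathbf{a}(0)/\mathbf{a}(\deg_\CF(v))$ was chosen precisely so that the expected per-edge decrement arising from connecting to a leaf telescopes into this quantity.

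I would then split the step type into non-adversarial (probability $1-q$) and adversarial (probability $q$). In the non-adversarial case, I would condition further on whether the new node $v$ is $\CT$ (probability $1-\eps$) or $\CF$ (probability $\eps$) and whether a check is performed. If $v$ is $\CT$ with no check, $v$ contributes $+1$ to the leaf sum whenever it lands in $\G_t^{(u)}$, while each of the $M$ edges that hits a leaf $\ell \in \L_t^{(u)}$ removes $\mathbf{a}(0)/\mathbf{a}(\deg_\CF(\ell))$ from the potential (the leaf leaves $\L_t^{(u)}$). If $v$ is $\CT$ and a successful check occurs, one minimal-false node is removed from $\F_t$ and the new node is marked $\PF$ instead of becoming a leaf, which is exactly the source of the $-2p$ coefficient (relative to the no-check baseline of $+1$). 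If $v$ is $\CF$, it still contributes $+1$ to $|\F_t \cap \G_t^{(u)}|$ (its label is $\CF$ regardless of whether the check found an ancestor), and each edge into a leaf $\ell$ now leaves $\ell \in \L_t^{(u)}$ intact but increases $\deg_\CF(\ell)$; the weight decrement is bounded via $(\mathbf{a}(0),b)$-regularity by $\mathbf{a}(0)(\mathbf{a}(\deg_\CF(\ell)+1)-\mathbf{a}(\deg_\CF(\ell)))/(\mathbf{a}(\deg_\CF(\ell))\mathbf{a}(\deg_\CF(\ell)+1))$, and splitting the per-edge contribution into the $\CT$-child weight $\mathbf{a}(0)$ and the $\CF$-child weight $\leq b$ (inside $\mathbf{a}(d+1) - \mathbf{a}(d)$) yields the $\frac{\mathbf{a}(0)}{b+\mathbf{a}(0)}$ and $\frac{b}{b+\mathbf{a}(0)}$ factors. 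Summing the per-edge leaf decrements over the $M$ edges, taking expectation over $M$, and combining with the $|\L_t^{(u)}| \cdot \mathbf{a}(0)/Z \leq 1/(\min(M)+1)$ bound produces the coefficient $\eta_M = (2\E\{M\}+\min(M)-1)/(2(\min(M)+1))$, which sharpens the simple-case $\E\{M\}/(\min(M)+1)$ via a pairwise-coincidence correction accounting for multiple edges possibly hitting the same leaf.

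For the adversarial case (probability $q$), the adversary adds at most $r$ edges with an adversarially chosen label; the worst-case damage to $\Phi_{F,L}$ is a loss of at most $1$ per edge (for instance by turning a distinct leaf into a non-leaf), yielding the $-qr$ term. Combining the non-adversarial and adversarial contributions gives exactly the expression~\eqref{eq:survival-general-condition}. The main obstacle will be the simultaneous bookkeeping required for a successful check: it marks a whole path $\PF$, so it simultaneously removes a node from $\F_t$, may create new roots among the orphaned $\CT$ children of the newly $\PF$ nodes (adding to $\F_t$), and may also remove some leaves from $\L_t^{(u)}$. Aligning all these coupled contributions, and correctly localizing every term to $\G_t^{(u)}$ even though the new node may not land in $\G_t^{(u)}$ at all (giving $\Delta_t = 0$) while also avoiding double-counting when several of its edges do land in $\G_t^{(u)}$, will be the most delicate part of matching the constants $-2$, $\eta_M$, $\frac{b}{b+\mathbf{a}(0)}$, and $\frac{\mathbf{a}(0)}{b+\mathbf{a}(0)}$ in the stated bound.
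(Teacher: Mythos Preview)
Your overall architecture matches the paper's: split into adversarial versus non-adversarial steps, then into the four sub-cases $\CT$/no check, $\CT$/check, $\CF$/no check, $\CF$/check, and bound the per-edge leaf decrement via the weight $\mathbf{a}(0)/\mathbf{a}(\deg_\CF(v))$. But two of your accounting explanations are wrong, and one of them is exactly the point where the constant $\eta_M$ is produced.

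First, the origin of $\eta_M$ is not a ``pairwise-coincidence correction''. In the paper the analysis is performed \emph{conditionally on the new node having at least one parent in $\G_t^{(u)}$}; otherwise $\Delta_t = 0$. Under this conditioning, one distinguished edge is known to land in $\G_t^{(u)}$, and its parent is distributed according to the \emph{local} normalizer $Z_u = \sum_{w \in \G_t^{\PT,(u)}} \mathbf{a}(\deg(w))$, for which the leaf bound is $\mathbf{a}(0)|\L_t^{(u)}|/Z_u < 1/2$ (each non-root leaf in $\G_t^{(u)}$ has at least one parent inside $\G_t^{(u)}$). The remaining $\E\{M\}-1$ edges use the global normalizer $Z$, for which the leaf bound is $1/(\min(M)+1)$ as you state. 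Summing gives $\tfrac{1}{2} + \tfrac{\E\{M\}-1}{\min(M)+1} = \eta_M$. Your proposal never introduces $Z_u$ or the $1/2$ bound, and without them you cannot recover $\eta_M$; the single bound $\mathbf{a}(0)|\L_t^{(u)}|/Z \le 1/(\min(M)+1)$ would only give $\E\{M\}/(\min(M)+1)$.

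Second, your $\CF$ case is mis-stated. Under \textsc{Stringy} or \textsc{BFS}, if the new $\CF$ node performs a check it immediately discovers itself and is marked $\PF$; the contribution of ``$\CF$ and check'' to $\Delta_t$ is exactly $0$, not $+1$. The $+1$ to $|\F_t|$ appears only in the ``$\CF$ and no check'' branch (probability $\eps(1-p)$), which is also where the $\frac{b}{b+\mathbf{a}(0)}$ factor arises via $\mathbf{a}(0)/\mathbf{a}(d+1) - \mathbf{a}(0)/\mathbf{a}(d) \ge -\frac{b}{b+\mathbf{a}(0)}\cdot \mathbf{a}(0)/\mathbf{a}(d)$.
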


\begin{proof}
We first prove this lemma for the checking mechanisms that perform an overall check with probability $p$.

Consider the case where the next step is non-adversarial (with probability $1 - q$). Here, we break the analysis into four cases, depending on whether the new node is $\CT$ or $\CF$ and whether it runs a check. Recall that the number of edges $m$ that the new node creates is chosen according to the random variable $M$. Let $Z = \sum_{w \in \G_t^\PT} \mathbf{a}(\deg(w))$ and $Z_u = \sum_{w \in \G_t^\PT(u)} \mathbf{a}(\deg(w))$.

Note that, if the new node does not connect to $\G_t^{(u)}$, then $\Delta_t = 0$. Therefore, we condition on the case that it does connect to $\G_t^{(u)}$.
\begin{enumerate}
    \item {\sl New node is $\CT$ and no check:} Since the new node is a $\CT$ non-root leaf, it contributes $+1$ to the potential. 
    For each edge, the edge may connect to a non-root leaf $v$ with probability proportional to $\mathbf{a}(\deg(v)) = \mathbf{a}(\deg_{CF}(v))$. In this case, there is a change of $-\mathbf{a}(0)/\mathbf{a}(\deg_{\CF}(v))$ to the potential because adding the $\CT$ node to the leaf node makes it no longer a leaf. 
    Since we condition on at least one edge connecting to a parent in $\G_t^{(u)}$, for one of the edges, the probability of connecting to a non-root leaf is $\sum_{v \in \L_t^{(u)}} \frac{\mathbf{a}(\deg_{\CF}(v))}{Z_u}$, while for the rest of the edges, this probability is $\sum_{v \in \L_t^{(u)}} \frac{\mathbf{a}(\deg_{\CF}(v))}{Z}$ as the parent may be chosen among any node in $\G_t^{\PT}$.
    Therefore,
    \begin{align*}
        \E\{&\Delta_t \ind{\text{non-adversarial step, new node is } \CT \text{, no check}} \mid \G_t \} \\
        &= (1 - \eps) (1-p) \bigg( 1 + \left(\E\{M\} - 1 \right) \sum_{v \in \L_t^{(u)}} \frac{\mathbf{a}(\deg_{\CF}(v))}{Z} \frac{-\mathbf{a}(0)}{\mathbf{a}(\deg_{\CF}(v))} \\ 
        & \hspace{3cm} + \sum_{v \in \L_t^{(u)}} \frac{\mathbf{a}(\deg_{\CF}(v))}{Z_u} \frac{-\mathbf{a}(0)}{\mathbf{a}(\deg_{\CF}(v))}\bigg) \\
        &= (1-\eps) (1-p) \left(1 - \left(\E\{M\} - 1 \right) \mathbf{a}(0) \frac{|\L_t^{(u)}|}{Z} - \textbf{a}(0) \frac{|\L_t^{(u)}|}{Z_u}\right).
    \end{align*}
    Now, ${\mathbf{a}(0) |\L_t^{(u)}|}/{Z} < 1/(\min(M)+1)$ because every non-root leaf has at least $\min(M)$ parent nodes for which it contributes at least $\mathbf{a}(0)$ to the attachment weight. Additionally, ${\mathbf{a}(0) |\L_t^{(u)}|}/{Z_u} < 1/2$ because every non-root leaf in $\G_t^{(u)}$ has at least one parent in $\G_t^{(u)}$ for which it contributes at least $\mathbf{a}(0)$ to the attachment weight.
    Therefore, we obtain
    $$\E\{\Delta_t \ind{\text{non-adversarial step, new node is } \CT \text{, no check}} \mid \G_t \} >  (1-\eps) (1-p) \left( 1 - \frac{2\E\{M\} + \min(M) - 1}{2 (\min(M)+1)}\right).
    $$
    
    \item {\sl New node is $\CT$ and check:} We assume the check is successful, since 
    an unsuccessful check will result in an expected increase in the potential, while a successful check will result in a decrease. We can remove at most one minimal false node, and additionally, if any of the parent nodes is a $\CT$ non-root leaf, it could be removed. 
    Therefore,
    \begin{align*}
        \E\{&\Delta_t \ind{\text{non-adversarial step, new node is } \CT \text{, check}} \mid \G_t \} \\
        &= (1- \eps) p \Big(- 1 + \left(\E\{M\} - 1 \right) \sum_{v \in \L_t^{(u)}} \frac{\mathbf{a}(\deg_{\CF}(v))}{Z} \frac{-\mathbf{a}(0)}{\mathbf{a}(\deg_{\CF}(v))} + \sum_{v \in \L_t^{(u)}} \frac{\mathbf{a}(\deg_{\CF}(v))}{Z_u} \frac{-\mathbf{a}(0)}{\mathbf{a}(\deg_{\CF}(v))}\Big) \\
        &> - (1-\eps) p \left(1 + \frac{2\E\{M\} + \min(M) - 1}{2 (\min(M)+1)} \right)
    \end{align*}

    \item {\sl New node is $\CF$ and no check:} The new node contributes $+1$ to the count of $\CF$ nodes in the potential. 
    Each edge connects to a non-root leaf $v$ with probability proportional to $\mathbf{a}(\deg_{CF}(v))$. In this case, since a leaf node with an additional $\CF$ child remains a leaf, the potential changes by 
    \[\frac{\mathbf{a}(0)}{\mathbf{a}(\deg_{\CF}(v) + 1)} - \frac{\mathbf{a}(0)}{\mathbf{a}(\deg_{\CF}(v))}  
    \geq - \frac{b}{b+\mathbf{a}(0)} \cdot \frac{\mathbf{a}(0)}{\mathbf{a}(\deg_{\CF}(v))}.\] 
    
    Therefore,
    \begin{align*}
    \E\{ &\Delta_t  \ind{\text{non-adversarial step, new node is } \CF \text{, no check}} \mid \G_t \} \\
    &= \eps (1-p)  \bigg( 1 + \left(\E\{M\} - 1 \right) \sum_{v \in \L_t^{(u)}} \frac{\mathbf{a}(\deg_{\CF}(v))}{Z}  \frac{-b}{b + \mathbf{a}(0)}\frac{\mathbf{a}(0)}{\mathbf{a}(\deg_{\CF}(v))} \\ 
    &\hspace{2cm} + \sum_{v \in \L_t^{(u)}} \frac{\mathbf{a}(\deg_{\CF}(v))}{Z_u}  \frac{-b}{b + \mathbf{a}(0)}\frac{\mathbf{a}(0)}{\mathbf{a}(\deg_{\CF}(v))}\bigg) \\
    &> \eps  (1-p) \left(1 - \frac{b}{b + \mathbf{a}(0)} \cdot \left( \frac{2\E\{M\} + \min(M) - 1}{2(\min(M)+1)}\right)\right).
    \end{align*}
    
    \item {\sl New node is $\CF$ and check:} The check removes only the new node, and $$\E\{\Delta_t \ind{\text{non-adversarial step, new node is } \CF \text{, check}} \mid \G_t \} = 0.$$
\end{enumerate}

Next, consider the case where the next step is adversarial (with probability $q$). The worst-case decrease for the potential occurs when the new node is $\CT$ and attaches to $r$ parent nodes that are $\CT$ non-root leaves, which causes these parent nodes to no longer be $\CT$ non-root leaves. Therefore, 
$$\E\{\Delta_t \ind{\text{adversarial step}}\} \geq -r.$$

Combining all four cases yields \eqref{eq:survival-general-condition} and the associated bound on $p$. 
\end{proof}

We now state the comparable Lemma for the \textsc{Exhaustive BFS} and \textsc{Parent-wise BFS} checking mechanisms, which are the ones that run separate checks with probability $p$ for each parent node.

\begin{lem} \label{lemma:survival-generalcase-secondpotential-exhaustiveBFS}
Let $\mathbf{a}$ be a $(\mathbf{a}(0), b)$-regular function and let $M \geq 1$ be a bounded random variable such that ${2 \E\{M\} + \min(M) - 1}\leq {2 (\min(M) + 1)}$. Consider any CKP with the \textsc{Exhaustive BFS} or \textsc{Parent-wise BFS} checking mechanism, a node $u$ that was originally $\CF$, and the sub-DAG 
    $\G_t^{(u)}$ rooted at this node. Let $t \in \mathbb{N}$ and $\Delta_t =\Phi_{F,L}(\G_{t+1}^{(u)}) - \Phi_{F,L}(\G_t^{(u)})$. When $\Phi_{F,L}(\G_t) > 0$ then 
    $\E\{\Delta_t \mid \G_t\}$ is greater than
    \begin{equation}\label{eq:survival-general-condition-exhaustiveBFS}
    \begin{aligned}
        (1 - q) \cdot \bigg[ & p\E\{M\} \left(-2 + \eps \left( 1 + \frac{b}{b + \mathbf{a}(0)} \frac{2 \E\{M\} + \min(M) - 1}{2(\min(M)+1)} \right)  \right) \\ 
        &+ 1 - \frac{2 \E\{M\} + \min(M) - 1}{2(\min(M)+1)} \left( 1 - \frac{\eps \cdot b}{b + \mathbf{a}(0)}\right) \bigg] - qr.
    \end{aligned}
    \end{equation}
\end{lem}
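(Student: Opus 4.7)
The plan is to adapt the case analysis from the proof of Lemma~\ref{lemma:survival-generalcase-secondpotential-stringy} to the per-edge checking setting. The essential structural difference is that in \textsc{Exhaustive BFS} and \textsc{Parent-wise BFS}, each of the $m \sim M$ parent edges of a new node $v$ independently attempts a check with probability $p$, rather than a single overall check occurring with probability $p$. As before, I would condition on the event that $v$ connects to $\G_t^{(u)}$ (otherwise $\Delta_t = 0$), split on adversarial (contributing $-qr$) vs.\ non-adversarial, and further split the non-adversarial case according to whether $v$ is $\CT$ or $\CF$, mirroring cases (i)--(iv) of the earlier lemma.

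The core modification is that the event ``no edge performs a successful check'' has probability $(1-p)^m$ rather than $1-p$, while ``at least one check succeeds'' has probability at most $1 - (1-p)^m$. By Bernoulli's inequality, $(1-p)^m \geq 1 - mp$, so the former is at least $1 - mp$ and the latter at most $mp$; moreover, the expected number of minimal false nodes removed in a single step is at most $mp$ (for \textsc{Exhaustive BFS}, at most one per step regardless; for \textsc{Parent-wise BFS}, at most one per successful edge). In each of the four subcases, I would substitute these bounds in place of the corresponding $(1-p)$ and $p$ factors, keeping the same per-edge worst-case attribution used in the stringy proof (each edge's attachment to a non-root leaf uses probability $\mathbf{a}(\deg_{\CF}(v))/Z$ for all-but-one edge and $\mathbf{a}(\deg_{\CF}(v))/Z_u$ for the edge that must land in $\G_t^{(u)}$). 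Taking the expectation over $M$, each occurrence of $p$ in the stringy bound becomes $\E\{M\}p$, yielding \eqref{eq:survival-general-condition-exhaustiveBFS} after summing the four non-adversarial cases and subtracting $qr$.

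The main bookkeeping subtlety will be the coupling between $v$ being caught by a check (which marks $v$ as $\PF$ and thus prevents $v$'s leaf parents from losing leaf status or having $\deg_{\CF}$ incremented) and the leaf contributions to $\Delta_t$. As in the stringy case, I expect this is handled by the same pessimistic worst-case accounting: simultaneously attribute the $-1$ decrease from a caught minimal false node, the $-\eta_M$ leaf-to-non-leaf conversion term, and the $-\frac{b}{b+\mathbf{a}(0)}\eta_M$ leaf $\deg_{\CF}$-increment term, even though at most one of these types of decrease actually occurs. This over-counts the decrease but still yields a valid lower bound on $\E\{\Delta_t \mid \G_t\}$, matching the strategy of Lemma~\ref{lemma:survival-generalcase-secondpotential-stringy}. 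I do not anticipate any genuinely new technical obstacle beyond this per-edge bookkeeping.
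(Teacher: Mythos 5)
Your proposal is correct and takes essentially the same approach as the paper. The paper's proof of this lemma is a three-line sketch that says exactly what you describe: argue the change in potential edge-by-edge (mirroring how Lemma~\ref{lemma:survival-rl-potential} handles the \textsc{Exhaustive}/\textsc{Parent-wise BFS} case for the simple CKP), obtain the analogues of cases (i)--(iv) from Lemma~\ref{lemma:survival-generalcase-secondpotential-stringy} with the overall-check probability $p$ replaced via union bound by $mp$ for the "at least one check succeeds" events and $1-mp$ for the "no check succeeds" events, and take expectation over $M$ so that $p$ becomes $p\E\{M\}$ while the $\eta_M$-dependent leaf terms are unchanged.
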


\begin{proof}
As for Lemma \ref{lemma:survival-rl-potential} for the simple CKP, we can modify the previous proof by arguing about the change in potential one edge at a time. We obtain similar expressions for the various cases in the proof of Lemma~\ref{lemma:survival-generalcase-secondpotential-stringy} for the new node being \CT\ or \CF; writing the expressions out gives 
\eqref{eq:survival-general-condition-exhaustiveBFS}.
\end{proof}

\subsection{Error survival for some irregular attachment functions}

\begin{lem}\label{lem:growing-a}
    Suppose $\mathbf{a}(d) \gtrsim d^3$. Then, for any adversarial parameters, any error parameters, and any checking parameters $p < 1$, $k \in \N$, the error survives forever with positive probability.
\end{lem}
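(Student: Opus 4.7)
The plan is to exploit the classical condensation phenomenon for super-linear preferential attachment. When $\mathbf{a}(d) \gtrsim d^{3}$, we have $\sum_{d \geq 1} 1/\mathbf{a}(d) < \infty$, and standard results on super-linear preferential attachment (Oliveira--Spencer, Rudas--Toth--Valko, Krapivsky--Redner) yield that a single vertex captures all but finitely many of the future edges with positive probability. I would combine this with a buffering argument ensuring that the dominant vertex sits at depth greater than $k$ inside a $\False$ sub-DAG, so that local checks of radius $k$ cannot reach the originating $\CF$ node.

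First, I would condition on a positive-probability initial event on which (i) a $\CF$ node $u$ is created early (either non-adversarially with probability at least $(1-q)\eps$ per step or via an adversarial insertion; the case $\eps = 0 = q$ is vacuous since then no $\CF$ node ever appears), and (ii) over the next $L = 2k+1$ non-adversarial, unchecked, non-erroneous steps, the process builds a chain $u \to v_{1} \to \cdots \to v_{L}$ of $\CT$ descendants by having each new node direct all of its $m \sim M$ edges to the most recently added node. Every requirement is a finite conjunction of events each with probability bounded away from $0$, since all factors $(1-q)$, $(1-\eps)$, $(1-p)$, and the attachment probability $\mathbf{a}(\deg(v_{i}))/Z_{t}$ are strictly positive. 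By further extending the good event I can force $\deg_{\PT}(v_{L}) \geq D_{0}$ for any prescribed threshold $D_{0}$. Starting from such a configuration, I would run the condensation argument: with $W(t) = \mathbf{a}(\deg_{\PT}(v_{L}))$ and $R(t) = Z_{t} - W(t)$, each step adds at most a bounded amount to $R(t)$ (even accounting for the at most $r$ adversarial edges, whose combined contribution is bounded since the degrees they increase are each affected by at most $b$), so $R(t) = O(t)$, while $W(t) \gtrsim \deg_{\PT}(v_{L})^{3}$. On the event that $\deg_{\PT}(v_{L}) \gtrsim t^{1/3}$ eventually, which holds with positive probability for large $D_{0}$, the probability that step $t$'s edge misses $v_{L}$ is $O(R(t)/W(t)) = O(t^{-2})$, and Borel--Cantelli implies that only finitely many future edges miss $v_{L}$.

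Since $L = 2k+1 > k$, the finitely many stray edges can trigger only finitely many successful checks that mark some ancestors $v_{j}$ as $\PF$, and the chain from $v_{L}$ back to the nearest remaining minimal false node always has length strictly greater than $k$. Hence the overwhelming majority of new nodes attached at or below $v_{L}$ perform checks of radius $k$ that never reach any minimal false node, and they remain $\PT$ and $\False$ forever, which is exactly the survival event of Definition~\ref{def:error-survival}. The main obstacle is making the condensation step rigorous in the presence of the checking mechanism: if $v_{L}$ itself is eventually marked $\PF$ by one of the rare stray checks, one needs a self-bootstrapping argument that condensation transfers to a child of $v_{L}$ (which, by super-linear growth of $\mathbf{a}$, inherits a large fraction of $v_{L}$'s weight the moment it is detached) and the whole analysis proceeds inductively on this deeper descendant. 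The adversarial and checking perturbations each modify $Z_{t}$ by $O(1)$ per step, so they are absorbed by the $R(t) = O(t)$ estimate and do not break the summability driving the argument.
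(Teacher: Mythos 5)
Your intuition---super\mbox{-}linear attachment forces one vertex to capture essentially all future edges, and buffering that vertex deep below the $\CF$ node keeps radius-$k$ checks blind---is the right one and is exactly what drives the paper's argument. But the paper's execution is much leaner: it builds the chain $u\to v_1\to\cdots\to v_k\to v$ and conditions on the event $E$ that \emph{every} future edge goes to $v$, not merely all but finitely many. Then $\P\{E\}=\prod_{n\ge1}\bigl(1-\tfrac{(k+1)\mathbf{a}(\min(M))+n}{\mathbf{a}(n)+(k+1)\mathbf{a}(\min(M))+n}\bigr)>0$ precisely because $\sum_n n/\mathbf{a}(n)<\infty$ once $\mathbf{a}(n)\gtrsim n^3$. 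On $E$ there are no stray edges, so no check ever reaches a minimal false node, and survival is immediate---no Oliveira--Spencer condensation machinery, no stray-edge bookkeeping, and no bootstrapping.

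Taken on its own terms, your proposal has concrete gaps. First, $L=2k+1$ does not deliver the claimed invariant that the chain from $v_L$ to the nearest minimal false node always has length $>k$: each stray edge landing on a chain node within distance $k$ of the current minimal-false frontier and then checking can advance that frontier by up to $k$, so two stray checks can already push the frontier to within distance $1$ of $v_L$; and while the number $N$ of stray edges is a.s.\ finite, it is not bounded, so no fixed $L$ suffices without further conditioning on $N$. Second, the Borel--Cantelli exponent is wrong: with $\mathbf{a}(d)\asymp d^3$, getting $R(t)/W(t)=O(t^{-2})$ needs $W(t)\gtrsim t^3$, hence $\deg_{\PT}(v_L)\gtrsim t$ (a positive \emph{fraction} of all edges), not $\gtrsim t^{1/3}$, and establishing this linear growth in the presence of checks is itself the crux and is not proved. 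Third, you explicitly leave the bootstrapping step (when $v_L$ itself is marked $\PF$) open. All of these difficulties vanish once you condition on zero stray edges, which is affordable exactly because $\mathbf{a}$ grows at least cubically; that is the paper's proof.
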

\begin{proof}
    With positive probability, the CKP evolves to consist of one \CF\ node, followed by a path of $k$ \CT\ nodes (if $\min(M) > 1$, this is a path of multi-edges), followed by a \CT\ node $v$ with 1 \CT\ child. We denote this as the starting state. Consider the event $E$ that for each future step of the process, the new node connects to $v$. This event implies error survival since the \CF\ node will then never be reached; we prove it occurs with positive probability. 
    
    Let $E_n$ denote the event that the new node connects to $v$ at step $n$ after initialization, where without loss of generality, we proceed edge-by-edge. Conditioned on events up to $n-1$, $v$ has out-degree $n$, so $\P\{E_n \mid E_1, \dots, E_{n-1}\} = \mathbf{a}(n) / \bbr{\mathbf{a}(n) + (k+1)\mathbf{a}(\min (M)) + n\mathbf{a}(0)}$. Then, 
    \begin{equation}
        \P\{E\} = \prod_{n=1}^\infty \P\{E_n \mid E_1, \dots, E_{n-1}\} = \prod_{n=1}^\infty \bpar{1 - \frac{(k+1)\mathbf{a}(\min (M)) + n}{\mathbf{a}(n) + (k+1)\mathbf{a}(\min (M)) + n}}
    \end{equation}
    converges to a non-zero value if $\mathbf{a}(n) \gtrsim n^3$, completing the proof.
\end{proof}
\begin{rem}\label{rem:holes}
    An even simpler argument can be made for distributions with holes. Suppose there exists some $\Tilde{d} \in \N$ such that $\mathbf{a}(\Tilde{d}) = 0$. Then, with positive probability, the CKP evolves to consist of $k+1$ levels of nodes each with $\Tilde{d}$ children, where the root is $\CF$ and the rest of the nodes are $\CT$. (If $\min(M) = 1$, we can consider a complete $\Tilde{d}$-ary tree; if $\min(M) > 1$, an analogous structure can be constructed.) Then, new nodes added from this point on can only connect below the leaves of this constructed structure, and the $\CF$ node cannot be reached.
\end{rem}

\section{Monotonicity of error elimination for the tree-like simple CKP} \label{section:monotonicity}

We now present couplings of tree-like simple CKP processes that have different checking parameters $\textsc{Check} = (p, k)$ and follow the same attachment procedure $\mathbf{a}: \mathbb{Z}_{\geq 0} \to \mathbb{R}_{\geq 0}$, through which we obtain the following monotonicity result with respect to the checking probability $p$ or the checking depth $k$. In the sections below, $M = 1$ always. We prove the following theorem.

\begin{thm}[Monotonicity of error elimination for checking parameters] \label{thm:monotonicity-p-k}
    For all $p_2 \geq p_1$ and $k_2 \geq k_1$, if the error effect is completely eliminated in the simple CKP with features $(\textsc{Attach} = (\mathbf{a}, M=1),  \textsc{Check} = (p_1, k_1))$, then the error effect is completely eliminated in the simple CKP with features $(\textsc{Attach} = (\mathbf{a}, M=1),  \textsc{Check} = (p_2, k_2))$.
\end{thm}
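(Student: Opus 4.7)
The plan is to split the theorem into two one-parameter monotonicity statements—(i) monotonicity in $p$ with $k$ fixed, and (ii) monotonicity in $k$ with $p$ fixed—and chain them via CKP$(p_1,k_1) \to$ CKP$(p_2,k_1) \to$ CKP$(p_2,k_2)$. For each sub-claim I would construct an explicit coupling of the two tree-like simple CKPs on a common probability space so that the process with the stronger checking parameter is realized as a sub-DAG of the process with the weaker one. Because $M=1$ every new node has a unique parent, so this sub-DAG embedding has a clean combinatorial meaning, and once it is preserved at every step, error elimination—i.e., the disappearance of all $\PT$ descendants of every $\CF$ node—immediately transfers from the outer process to the embedded inner one.

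For (i), I would drive the coupling by the CKP$(p_1,k)$ dynamics from a common $\CF$ seed. At each step sample a parent $v$ in $\G_t^{(1)}$ according to the standard attachment rule on $\deg_{\PT}^{(1)}$; if $v$ is still $\PT$ in $\G_t^{(2)}$, also add the child to $\G_t^{(2)}$, otherwise ($v$ is a ``zombie'') leave $\G_t^{(2)}$ unchanged. Couple the check-triggers by a single $U\sim\mathrm{Unif}[0,1]$: for $U\le p_1$ both processes perform a depth-$k$ check on the new node, for $p_1<U\le p_2$ only $\G_t^{(2)}$ does, and otherwise neither does. For (ii), the coupling is more direct: share the attachment randomness and the Bernoulli$(p)$ check-trigger between CKP$(p,k_1)$ and CKP$(p,k_2)$, and whenever a check is triggered sample one random ancestor path of length $k_2$ from the new node, using its length-$k_1$ prefix for the CKP$(p,k_1)$ check. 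A deeper check can only reach a minimal false ancestor at least as soon as a shallower one, so every path marked $\PF$ by the shallower check is also marked $\PF$ by the deeper one, and the sub-DAG inclusion is preserved.

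The main technical obstacle is to verify that the embedded process produced by coupling (i) has the correct marginal law, namely that of a bona fide CKP$(p_2,k)$. The difficulty is that the attachment weight $\mathbf{a}(\deg_{\PT}^{(1)}(v))$ used by CKP$(p_1)$ at a node $v$ alive in $\G_t^{(2)}$ generally strictly exceeds the intrinsic CKP$(p_2)$ weight $\mathbf{a}(\deg_{\PT}^{(2)}(v))$, because it counts zombie children of $v$; hence naively conditioning on $v\in\G_t^{(2)}$ does not reproduce the intended CKP$(p_2)$ attachment law. I would resolve this by inserting a Metropolis-style acceptance step with probability $\mathbf{a}(\deg_{\PT}^{(2)}(v))/\mathbf{a}(\deg_{\PT}^{(1)}(v))\le 1$ (using monotonicity of $\mathbf{a}$): conditional on acceptance the chosen parent has exactly the CKP$(p_2)$ distribution, and rejected attachments simply produce additional zombies in $\G_t^{(1)}\setminus\G_t^{(2)}$. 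A parallel care is required to preserve the sub-DAG inclusion through the checking steps, since a CKP$(p_2)$ check may stop at an ancestor that is $\PF$ only in CKP$(p_2)$ and therefore leave still-higher ancestors unmarked; I would handle this by coupling the random ancestor path explored by the two checks and, if necessary, extending the CKP$(p_2)$ check to absorb any extra $\PF$ labels produced by the CKP$(p_1)$ check. Once both the marginal correctness and the step-by-step sub-DAG preservation are established, error elimination in the outer CKP transfers automatically to the embedded inner one, proving monotonicity.
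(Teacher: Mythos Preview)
Your overall plan—split into monotonicity in $p$ and in $k$, chain them, and prove each via an embedding coupling driven by the weaker-checking process—matches the paper exactly. But the ``main technical obstacle'' you flag is a phantom, and your proposed Metropolis fix for it actually breaks the argument. In the natural coupling (no rejection step), a node $v$ that is still $\PT$ in the $p_2$-CKP cannot have any zombie children: since $M=1$, any check path that relabels a child $w$ of $v$ must continue from $w$ to its unique parent $v$ before reaching a stopping label (as $w$ itself is $\CT$), so $v$ is relabeled on the same step. Hence live nodes have only live children, the $\PT$-degree of $v$ is the same in the two processes, and conditioning on ``the sampled parent is alive in the $p_2$-CKP'' already yields the correct $p_2$ attachment law. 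The paper records exactly this as a short structural lemma and uses it directly; no acceptance step is needed.

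Inserting the Metropolis step is not merely redundant—it destroys the sub-tree inclusion you rely on. A rejected attachment places a zombie $w$ directly below a live node $v$. When $w$ (which exists only in the $p_1$-CKP) later draws $U\le p_1$ and performs its check, that check walks up through $v$ and can mark $v$ and higher live ancestors $\PF$ in the $p_1$-CKP, while those same nodes remain $\PT$ in the $p_2$-CKP (where $w$ never existed and no check occurred). This breaks $\G^{(2),\PT}\subseteq\G^{(1),\PT}$, so error elimination in the outer $p_1$-process no longer transfers to the inner $p_2$-process. The correct route is to drop the Metropolis step, prove the ``live nodes have only live children'' lemma, and use it to see that the degrees already agree; your vaguer worry about ``extending the $p_2$ check'' then also dissolves, since once the $p_2$ check has stopped at a zombie, every further ancestor on the $p_1$ check path is already a zombie (hence already $\PF$ in the $p_2$-CKP) by the same lemma.
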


Therefore, if the error effect in the simple CKP with $\textsc{Check} = (p_2, k_2)$ survives with positive probability, then the error effect in the simple CKP with $\textsc{Check} = (p_1, k_1)$ and the same attachment procedure survives as well with positive probability.
\vspace{.25\bs}

We prove this theorem by separately fixing $p$ and $k$ in Lemmas~\ref{lemma:monotonicity-proof-for-p} and \ref{lemma:monotonicity-proof-for-k}, noting that we can then chain error elimination from the simple CKP with $\textsc{Check} = (p_1, k_1)$ to the simple CKP with $\textsc{Check} = (p_2, k_1)$ and finally to the simple CKP with $\textsc{Check} = (p_2, k_2)$ (where all three CKPs follow the same attachment procedure $\mathbf{a}$).

Remark that, while we can obtain monotonicity results for each of $p$ and $k$, there is no monotonicity with respect to the product $p \cdot k$. That is, there exist parameters satisfying $p_2 \cdot k_2 > p_1 \cdot k_1$ such that a simple CKP with $\textsc{Check} = (p_1, k_1)$ eliminates all errors but errors survive with positive probability in the simple CKP with $\textsc{Check} = (p_2, k_2)$ that evolves according to the same attachment procedure. For example, \cite[Theorems~2.1, 2.2]{BMMS:23} imply that the error effect in the simple CKP that evolves according to preferential attachment with $\textsc{Check} = (6/7, 4)$ is completely eliminated, while the error effect in the 
simple CKP that evolves according to preferential attachment with $\textsc{Check} = (1/4, 20)$ survives with positive probability.

\subsection{Overview of the couplings} \label{section:coupling-monotonicity}

We first describe the motivation and high-level ideas of the coupling. This coupling is inspired by the coupling given in Section 5 of \cite{flaxman_frieze_vera_2007}, which couples two preferential attachment processes: one with adversarial deletions at each time step $G_s$
and one with adversarial deletions at a fixed ending time step $G_s^*$. 
The process keeps track of which nodes and edges are ``alive'' in both processes, and which ones are alive only in $G_{s}^{*}$ and not $G_{s}$, to form their coupling and argue about the impact of adversarially deleted vertices and edges. 
Here, we also keep track of nodes that are ``alive'' in either both processes or only one, and
similarly 
ensure that the two processes are each individually generated according to preferential attachment. 

First, recall that a tree-like CKP with checking probability $p$ and checking depth $k$ and attachment procedure $\mathbf{a}$ is defined as a series of states $X_t = (\T_t, L_t)$ where each $\T_t$ is a finite rooted tree and labels are chosen from 
$\L = \{\CT, \CF, \PF\}$.

Consider the coupling with $k$ and $\mathbf{a}$ fixed. Then, $p$ is our only variable so we can simplify the notation and write $p_1$-CKP and $p_2$-CKP. 
For the purpose of the coupling, we define a new process that is given as a series of states $Z_0, Z_1, \dots$, where each $Z_t$ is given by a finite rooted tree $\mathcal{T}_t$ with each of the vertices labeled from $\L' = \{\CT, \CF, \PF, \ZCF, \ZCT, \ZNCT, \ZNPF\}$:
\begin{itemize}
    \item $\ZCF$ and $\ZCT$ stand for zombie-$\CF$ and zombie-$\CT$: these nodes are alive (not yet found to be \False) in the $p_1$-CKP but are dead (found to be \False) in the $p_2$-CKP.
    \item $\ZNCT$ nodes are nodes that should never have been added to the $p_2$-CKP (as their parent was already found to be \False). They are also zombie nodes: alive in the $p_1$-CKP but dead in the $p_2$-CKP. These nodes will ultimately be removed in the construction of the $p_2$-CKP. Note that there are no \ZNCF\ nodes since we are in the simple process and all added nodes are \CT.
    \item \ZNPF\ nodes are those that were once 
    \ZNCT, then are found to be \False\ in the $p_1$-CKP. These are also removed in the $p_2$-CKP as they never existed in the first place.
\end{itemize}
See Figure~\ref{fig:evolution-monotonicity} for a visualization.
At a high level, 
we design this coupling such that 
\begin{itemize}
    \item the $p_1$-CKP is generated by relabeling $\ZCF
    \Rightarrow \CF$ and $\{\ZCT, \ZNCT\} \Rightarrow \CT$.
    \item the $p_2$-CKP is generated as a ``slowed down'' subset of $\{Z_i\}$ by relabeling $\{\ZCF, \ZCT\} \Rightarrow \PF$, and removing all 
    $\ZNCT$ and $\ZNPF$ vertices.
\end{itemize}
This coupling allows us to track the error elimination in both processes together.
The coupling with $p$ fixed and $k$ varying is similar, and in fact, uses all the same labels in $\L'$. 

We note that this proof only works for the simple tree-CKP because of issues that arise when $\CT$ or $\CF$ nodes have $\CF$ children (for extending from simple to general) or when $\CF$ nodes have multiple paths to various $\False$ ancestors (for extending to the DAG CKP). We leave as an open question whether a variation of this technique can be used to prove monotonicity more generally.

\begin{SCfigure}[.8][hbtp]
    \centering
    \input{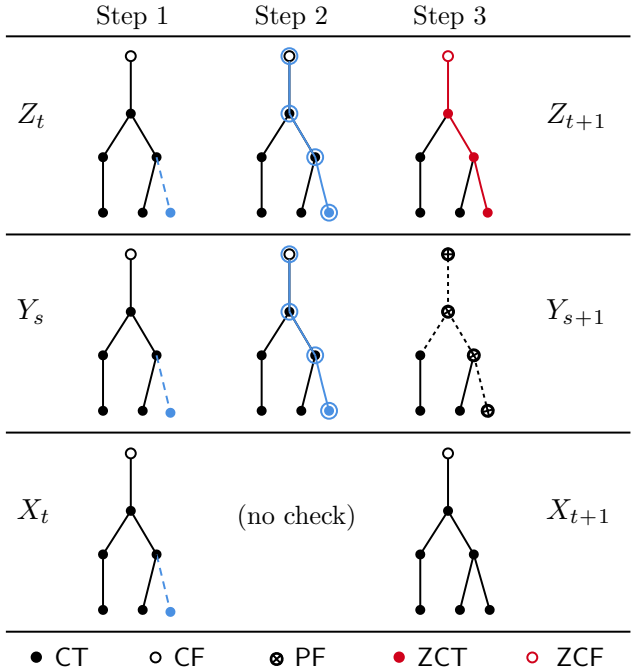}
    \caption{Evolution of $Z_t$ and the coupled processes $X_t$ and $Y_s$ when performing a check in $Y_s \sim p_2$-CKP but not in $X_t \sim p_1$-CKP, i.e., $U \in (p_1, p_2]$. 
    Recall that 
    the $Y$ process does not update at each time step of the $Z$ process; let 
    $s = \textsc{Y-time}(t)$ be the time step of the $Y$ process corresponding to time $t$ in the $Z$ process. 
    If at a future time, a new node picks one of the red nodes in $Z_{t+1}$ as its parent, it will then be labeled \ZNCT.
    }
    \label{fig:evolution-monotonicity}
\end{SCfigure}

\subsection{Monotonicity with respect to $p$} \label{subsection:coupling-p-values}
We describe the coupling of the $p_1$-simple CKP and $p_2$-simple CKP, where $p_2 \geq p_1$ and both CKPs have checking depth $k$ and attachment function $\mathbf{a}$. Let the $p_1$-CKP be given by the series of states $\{X_i\}$, and the $p_2$-CKP be given by the series of states $\{Y_i\}$. We define the coupling $(\{Z_i\},\{X_i\},\{Y_i\}) $ as follows. 
Each $Z_t$ is given by a finite rooted tree, labels to each of the vertices, and a mapping $\textsc{Y-time}(t)$ that indicates the time step in the $Y$ process corresponding to time step $t$ in the $Z$ process.

\paragraph{Initial state} $Z_0$ is given by a single \CF\
node. Let $X_0 = Y_0 = Z_0$, and $\textsc{Y-time}(0) = 0$.

\paragraph{State evolution} Given $\{Z_0, Z_1, \dots, Z_t\}$, we define $Z_{t+1}$ through the following series of steps.
\begin{itemize}
    \item \textbf{Choose parent node:} If every node in $\mathcal{T}_t$ is $\PF$, the process stops. Set $Z_{t+1} = Z_t$. Otherwise, choose a node $u \in \mathcal{T}_t$ according to the attachment function $\mathbf{a}$ among nodes with 
    label in $T = \{\CT, \CF, \ZCT, \ZCF, \ZNCT\}$. In this setting, the function $\mathbf{a}$ takes in as input $\deg_{T}(u)$, which is the number of children of $u$ with label in $T$. 
    \item \textbf{Add new node:}
    Let $\mathcal{T}_{t+1}$ be $\mathcal{T}_t$ with the new leaf node $v$ connected to parent $u$.
    
    \item \textbf{Specify the} \textsc{Update} \textbf{value:} If the parent $u$ has label in $\{\CT,\CF\}$, set \textsc{Update} $\gets 1$. This means that the $Y$ process updates. Otherwise, if $u$ has label in 
    $\{\ZCT, \ZCF, \ZNCT\}$, 
    set \textsc{Update} $\gets 0$. 
    \item \textbf{Label the new node:} If the parent $u$ has label in $\{\CT,\CF\}$, label the $v$ as $\CT$. 
    Otherwise, if $u$ has label in $\{\ZCT, \ZCF, \ZNCT\}$, label $v$ as $\ZNCT$. 
    \item \textbf{Error detection phase:} Choose $U \sim \mathrm{Unif}([0, 1])$. We denote $v_0 = v$, and let $v_{i+1}$ be the parent of $v_i$ for each $i = 0, \dots, k$.
    \begin{enumerate}
        \item If $U \in [0, p_1]$: a check is performed in both CKPs. 
        
        If $v$ has label $\CT$,
        then perform the following check of a path with up to $k$ edges. Define and set a variable \textsc{Zombie} $\gets 0$. For $i = 0, \dots, k$,
        \begin{itemize}
            \item If $v_i$ is $\PF$, $\CF$ 
            or $\ZCF$ ($\CF$ in the $X$ process and $\PF$ in the $Y$ process), relabel all of the nodes $\{v_0, ..., v_i\}$ as $\PF$ and end the error detection phase.
            \item If $\textsc{Zombie}=0$, and if $v_i$ is $\ZCT$ ($\CT$ in the $X$ process and $\PF$ in the $Y$ process), relabel the nodes $\{v_0, ..., v_i\}$ by replacing $\CT$ labels with $\ZCT$ labels. (Note that by construction, all nodes encountered in this case up until the $\ZCT$ node must have been $\CT$.) Set $\textsc{Zombie} \gets 1$ and continue the for loop.
        \end{itemize}
        
        Otherwise, if $v$ has label $\ZNCT$, 
        for $i = 0, \dots, k$,
        \begin{itemize}
            \item 
            If $v_i$ is $\CF$, $\ZCF$, or $\PF$, relabel $\{v_0, ..., v_i\}$ as $\PF$ and end the error detection phase. 
            \item If $v_i$ is 
            $\ZNPF$, then all nodes on the path checked so far were all $\ZNCT$. Relabel all of these nodes as $\ZNPF$ and end the error detection phase.
        \end{itemize}

        \item Otherwise, if $U \in (p_1, p_2]$: a check is performed in the $p_2$-CKP but not in the $p_1$-CKP.

        If $v$ has label $\CT$, 
        then for $i= 0, \dots, k$:
        \begin{itemize}
            \item If $v_i$ is in $\{\CF,\ZCT,
            \PF\}$, relabel the nodes $\{v_0, ..., v_i\}$ by replacing $\CF$ labels with $\ZCF$ labels and replacing $\CT$ labels with $\ZCT$ labels. End the error detection phase.
        \end{itemize}
        
        Otherwise, if $v$ has label $\ZNCT$, 
        end the error detection phase.

        \item Finally, if $U \in (p_2, 1]$: no check in either CKP, and we do not relabel anything.
    \end{enumerate}
    \end{itemize}
\paragraph{Updating $\{X_i\}$, $\{Y_i\}$ from $Z_t$} We also update the mapping $\textsc{Y-time}$.
    \begin{enumerate}
        \item $X_{t+1}$ is obtained from $Z_{t+1}$ by relabeling $\ZCF
        \Rightarrow \CF$, $\{\ZCT, \ZNCT\}\Rightarrow \CT$ and $\ZNPF \Rightarrow \PF$.
        \item If $\textsc{Update} = 0$, we do not add a state to the list of $\{Y_i\}$. Let $\textsc{Y-time}(t+1) = \textsc{Y-time}(t)$.
        
        If $\textsc{Update} = 1$, 
        let $\textsc{Y-time}(t+1) = \textsc{Y-time}(t)+1$. 
        Add to the $Y$-sequence a new state $Y_{\textsc{Y-time}(t+1)}$, which is defined from $Z_{t+1}$ by relabeling $\{\ZCT, \ZCF\}\Rightarrow \PF$ and removing all $\ZNCT$
        and $\ZNPF$ nodes.
        Note that if a node is labeled \ZNCT\ or \ZNPF, 
        then so will all of the vertices in the subtree rooted at this node.
\end{enumerate}

We now prove that this coupling has the correct marginals, i.e., the two processes $\{X_i\}$ and $\{Y_j\}$ are indeed CKPs with the correct parameters.

\begin{lem} \label{lemma:monotonicity-coupling-p-X-correctness}
    Let $\{X_i\}$ be generated according to the process above. Then $\{X_i\} \sim (\textsc{Attach} = (\mathbf{a}, M), \textsc{Check} = (p_1, k))$-simple CKP. 
\end{lem}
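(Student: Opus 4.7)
The plan is to proceed by induction on $t$, showing that $\{X_s\}_{s \leq t}$ has the joint law of the first $t+1$ states of an independent $(\textsc{Attach} = (\mathbf{a}, M=1), \textsc{Check} = (p_1, k))$-simple CKP. The base case is immediate since $X_0$ is a single $\CF$ node in both processes. For the inductive step, the central device is a ``forgetting'' map $\pi: \L' \to \{\CT, \CF, \PF\}$ defined by $\pi(\CT) = \pi(\ZCT) = \pi(\ZNCT) = \CT$, $\pi(\CF) = \pi(\ZCF) = \CF$, and $\pi(\PF) = \pi(\ZNPF) = \PF$. Direct inspection of the update rule for $X$ shows that $X_t$ is exactly $Z_t$ with labels pushed through $\pi$. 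Crucially, the attachment set $T = \{\CT, \CF, \ZCT, \ZCF, \ZNCT\}$ used in $Z$'s parent-selection step satisfies $T = \pi^{-1}(\PT)$, and therefore $\deg_T(u)$ computed in $Z_t$ equals $\deg_\PT(u)$ computed in $X_t$ for every node $u$. Hence the new parent is chosen with probability proportional to $\mathbf{a}(\deg_\PT(u))$ over $\PT$-nodes of $X_t$, matching the CKP's preferential attachment rule. The new node's $Z$-label is always $\CT$ or $\ZNCT$, both mapping to $\CT$ under $\pi$, consistent with the simple CKP requirement $\e = 0$.

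It remains to verify that the error-detection phase, when projected through $\pi$, implements a depth-$k$ check with probability exactly $p_1$ on $X$. Since $U \sim \mathrm{Unif}([0,1])$, I would split into three subcases. If $U \in (p_2, 1]$ no relabeling occurs, so the only change in $X$ is the freshly added $\CT$ child. If $U \in (p_1, p_2]$, the coupling may relabel $\CT \to \ZCT$ and $\CF \to \ZCF$ along an upward path, but both maps are $\pi$-identities, so again $X$ only sees the trivial extension. Together, these give the correct ``no check'' probability $1 - p_1$. If $U \in [0, p_1]$, which happens with probability exactly $p_1$, I would case on whether $v$ is $\CT$ or $\ZNCT$ in $Z$; in both cases, the upward walk of length at most $k$ terminates at the first ancestor belonging to $\{\CF, \ZCF, \PF, \ZNPF\}$, which under $\pi$ are precisely the first $\CF$ or $\PF$ ancestor seen in $X$. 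The visited path is then relabeled $\PF$ or $\ZNPF$ in $Z$, both mapping to $\PF$ in $X$, reproducing exactly the successful \textsc{Stringy} check marking.

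The main obstacle is the $U \in [0, p_1]$ subcase when $v$ is $\CT$ in $Z$ and the walk first encounters a $\ZCT$ ancestor before reaching any terminator in $\{\CF, \ZCF, \PF\}$. The coupling sets $\textsc{Zombie} \gets 1$, relabels the traversed prefix $\CT \to \ZCT$, and continues upward. I need to verify three points: (a) the $\CT \to \ZCT$ relabeling is invisible under $\pi$, which is immediate; (b) the continuation reaches exactly the same terminating ancestor that an independent $X$-side check would reach, which follows because all intermediate nodes map to $\CT$ under $\pi$ and the terminator set $\{\CF, \ZCF, \PF\}$ maps onto $\{\CF, \PF\}$; and (c) the final relabeling of the whole visited path as $\PF$ in $Z$ produces the correct $\PF$-marking in $X$. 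With these in place, combining all subcases yields the CKP transition rule for $X_{t+1} \mid X_t$ and closes the induction, establishing that $\{X_i\}$ has the marginal law of a $(p_1, k)$-simple CKP.
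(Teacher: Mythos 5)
Your argument is correct and follows essentially the same route as the paper: you verify that the attachment step has the right marginal by matching $T$-degrees in $Z$ with $\PT$-degrees in $X$, and you verify that the error-detection phase collapses to a probability-$p_1$ depth-$k$ check on $X$ by splitting on $U \in [0,p_1]$, $(p_1,p_2]$, $(p_2,1]$ and observing that all relabelings in the latter two regimes (and the $\CT\to\ZCT$ relabelings encountered along a successful check in the first regime) are invisible after projection. The only cosmetic difference is that you package the projection $Z_t \mapsto X_t$ as an explicit forgetting map $\pi$ with $T = \pi^{-1}(\PT)$, which is a cleaner phrasing of the same bookkeeping the paper does informally.
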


\begin{proof}
The initial state is consistent with the simple CKP process with $\textsc{Attach} = (\mathbf{a}, M = 1)$ and $\textsc{Check} = (p_1, k)$. We now verify this for the evolution, where we note that unlike $\{Y_i\}$, for every time step that the $Z$-process is updated, the $X$-process is also updated. 
   \begin{itemize}
       \item \textbf{Choose parent node:} We note that $Z_t^T$, the set of $T = \{\CT, \CF, \ZCT, \ZCF, \ZNCT\}$ nodes in $Z_t$, is the same as $X_t^\PT$, the set of \PT\ nodes in $X_t$. This also implies that the $T$-degree of a node $u$ in $Z_t$ is equal to the \PT-degree of this node in $X_t$: $\deg_T^{Z_t}(u) = \deg_\PT^{X_t}(u)$. Therefore, a parent $u$ is chosen among $Z_t^T = X_t^\PT$ with probability 
       $$\frac{\mathbf{a}(\deg_T^{Z_t}(u))}{\sum_{v \in Z_t^{T}} \mathbf{a}(\deg_T^{Z_t}(v))} = \frac{\mathbf{a}(\deg_\PT^{X_t}(u))}{\sum_{v \in X_t^{\PT}} \mathbf{a}(\deg_\PT^{X_t}(v))},$$ as required.
       \item \textbf{Add and label the new node:} Same process as the $(\textsc{Attach} = (\mathbf{a}, M), \textsc{Check} = (p_1, k))$-simple CKP. 
       \item \textbf{Error detection phase:} In the $\{Z_j\}$ process, 
       with probability $p_1$, a check is performed which checks a path of length up to $k$ and stops at the first node found that is labeled $\PF$, $\CF$, $\ZCF$, 
       or $\ZNPF$ (i.e., $\PF$ or $\CF$ in the $\{X_i\}$ process); once such a node is found the entire path is labeled $\PF$. Along the way, nodes may be relabeled from $\CT$ to $\ZCT$, but this does not change the labeling of nodes in the $\{X_i\}$ process. Therefore, with probability $p_1$, a check is performed in the $\{X_i\}$ process that is consistent with how a check is performed in a $(\textsc{Attach} = (\mathbf{a}, M), \textsc{Check} = (p_2, k))$-simple CKP, as required.
       
       With probability $p_2 - p_1$, the $\{Z_j\}$ process also performs a check. However, this check can only re-label $\CT$ nodes as $\ZCT$ nodes and $\CF$ nodes as $\ZCF$ nodes, which does not change the labeling of the coupled $\{X_i\}$ process.
   \end{itemize} 

We have verified that the initialization and state evolution with the checking procedure all align with the $(\textsc{Attach} = (\mathbf{a}, M), \textsc{Check} = (p_1, k))$-simple CKP process.
\end{proof}

We now show that $\{Y_j\} \sim (\textsc{Attach} = (\mathbf{a}, M), \textsc{Check} = (p_2, k))$-simple CKP. 
Recall that for any time step $t$ of the $\{Z_i\}$ process, $\textsc{Y-time}(t)$ is equal to the number of steps $i \leq t$ for which $\textsc{Update} = 1$. 
That is, the states $Z_t$ and $X_t$ correspond to the state $Y_s$ in the coupling, for $s := \textsc{Y-time}(t)$.
To justify the correctness of the checking procedure for the $\{Y_j\}$ process, we first state the following lemma that follows simply by construction.

\begin{lem}\label{zombie-ancestors-are-zombies}
    If a node $v$ is labeled $\ZCT$ or $\ZCF$ in $Z_t$, all ancestor nodes of $v$ in the path to the nearest $\PF$ node must be labeled $\ZCT$ or $\ZCF$. 
    Furthermore, if a node $v$ is labeled $\ZNCT$,
    there exists an $\ell \in \N$ such that the closest $\ell$ ancestor nodes from $v$ are labeled $\ZNCT$
    and all subsequent ancestor nodes of $v$ on the path to the nearest $\PF$ node are labeled $\ZCT$ or $\ZCF$.
\end{lem}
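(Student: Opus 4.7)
The plan is a straightforward induction on $t$. The base case $t=0$ is vacuous: $Z_0$ contains only the initial $\CF$ node, so no vertex is labeled $\ZCT$, $\ZCF$, or $\ZNCT$. For the inductive step, I would first prove an auxiliary observation: if a node $u$ is labeled $\CT$ or $\CF$ in $Z_t$, then every ancestor of $u$ has a label in $\{\CT, \CF, \ZCT, \ZCF, \PF\}$, never $\ZNCT$ or $\ZNPF$. This follows because a $\CT$ or $\CF$ label is assigned only at birth, at which point the parent was $\CT$ or $\CF$, and no relabeling rule ever converts $\CT$ or $\CF$ into $\ZNCT$ or $\ZNPF$; iterating up the parent chain gives the observation. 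As a consequence, when the newly added node $v^{\mathrm{new}}$ is born as $\CT$, the entire check path $v^{\mathrm{new}}_0, \ldots, v^{\mathrm{new}}_k$ contains no $\ZNCT$ or $\ZNPF$ vertices.

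I would then split on the label of $v^{\mathrm{new}}$. When $v^{\mathrm{new}}$ is $\CT$, no new $\ZNCT$ labels are produced, so for the second assertion it suffices to re-verify existing $\ZNCT$ vertices $w$: by the auxiliary observation, the closest $\ZNCT$ ancestors of $w$ are not on $v^{\mathrm{new}}$'s check path and hence are untouched, while above this prefix the only possible relabelings are $\CT/\CF \to \ZCT/\ZCF$ (Case 1 Scenario IIb or Case 2) and $\ZCT/\ZCF \to \PF$ (Case 1 Scenarios I or IIa), both of which preserve the required structure---the latter only moves the nearest $\PF$ ancestor closer. For the first assertion, every newly created $\ZCT$ or $\ZCF$ vertex sits in the prefix $\{v^{\mathrm{new}}_0, \ldots, v^{\mathrm{new}}_{i-1}\}$ where $i$ is the stopping index, and above this prefix lies $v^{\mathrm{new}}_i$, which is already $\ZCT$, $\ZCF$, or $\PF$; applying the inductive hypothesis (Claim 1) to $v^{\mathrm{new}}_i$, or using vacuity if it is the $\ZCF$ root, supplies the required structure of the ancestors above. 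When $v^{\mathrm{new}}$ is born $\ZNCT$, Claim 2 for $v^{\mathrm{new}}$ itself is immediate from the inductive hypothesis on its parent: either extend the parent's $\ZNCT$-prefix by one, or take $\ell = 0$ and invoke Claim 1 on the parent if the parent is $\ZCT/\ZCF$. No new $\ZCT/\ZCF$ labels are produced in this case, and the only relabelings (all coming from Case 1) along $v^{\mathrm{new}}$'s check path are to $\PF$, which again just moves nearest-$\PF$ ancestors of any other affected vertex closer.

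The main obstacle is the bookkeeping for relabelings that occur on the portion of the check path shared with another node's ancestor chain---essentially an LCA-style argument showing that the structure strictly below an affected vertex's new nearest-$\PF$ ancestor is unchanged, while the new $\ZCT/\ZCF$ labels created on the shared tail naturally extend the structure upward via the inductive hypothesis applied to the first unrelabeled ancestor. A secondary but delicate point that simplifies the analysis is that the label $\ZNPF$ is never actually produced in any $Z_t$: Scenario II of a $\ZNCT$-check only fires when a $\ZNPF$ ancestor is encountered, so a trivial auxiliary induction shows $\ZNPF$ is never assigned, and Scenario II may be safely ignored throughout.
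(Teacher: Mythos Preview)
Your proposal is correct and follows essentially the same underlying idea as the paper: a node becomes $\ZCT/\ZCF$ only via a relabeling step that simultaneously zombifies the whole checked prefix up to a node that is already $\ZCT$, $\ZCF$, or $\PF$, and from there the structure propagates upward. The paper compresses this into a two-sentence ``by construction'' sketch; you have unpacked it into an explicit induction on $t$ with a careful case split on the label of the newly added node and on which relabeling scenario fires.

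Two remarks on where your write-up actually goes beyond the paper. First, your auxiliary observation (a $\CT/\CF$ node has no $\ZNCT/\ZNPF$ ancestors) is exactly the missing ingredient that makes the paper's sketch rigorous; without it one cannot cleanly separate the two halves of the case analysis. Second, your side observation that the label $\ZNPF$ is never produced in any $Z_t$ is correct and genuinely useful: the only rule creating $\ZNPF$ requires an existing $\ZNPF$ ancestor, so by a trivial induction the label never appears. The paper does not notice this, and indeed carries the $\ZNPF$ label through the coupling and the later correctness lemmas unnecessarily. Your proof is therefore not just a formalization but a mild simplification of the paper's construction.
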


\begin{proof}
    If a node $v$ is $\ZCT$ or $\ZCT$ in $Z_t$, then it was originally created $\CT$ or $\CF$ in $\{Z_i\}$. By construction, $v$ became a zombie when a check through it found a $\ZCT$ node (if $U \in [0, p_1]$) or a \CF, \ZCF, or \PF\ node (if $U \in (p_1, p_2]$). When this occurs, all nodes along the path to the node found are marked $\ZCT$ or $\ZCF$, so all ancestor nodes of $v$ on the path to the nearest $\PF$ node must be labeled $\ZCT$ or $\ZCF$.
    A similar argument can be given if $v$ is $\ZNCT$:
    it then connected to a $\ZNCT$ or $\ZCF$ 
    parent when created.
\end{proof}

\begin{lem} \label{lemma:monotonicity-coupling-p-Y-correctness}
    Let $\{Y_i\}$ be generated according to the process above. Then $\{Y_i\} \sim (\textsc{Attach} = (\mathbf{a}, M), \textsc{Check} = (p_2, k))$-simple CKP.
\end{lem}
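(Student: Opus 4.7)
The plan is to parallel the proof of Lemma~\ref{lemma:monotonicity-coupling-p-X-correctness} and show that each transition of the subsequence $\{Y_s\}$ has the conditional law of a $(\textsc{Attach} = (\mathbf{a}, 1), \textsc{Check} = (p_2, k))$-simple CKP. Since $Y$ only advances at times $t$ with $\textsc{Update}=1$, the ``silent'' steps never alter $Y_s$, so it suffices to verify the update rule conditional on $\textsc{Update}=1$ and then argue that the $Y$-transitions depend only on $Y_s$ (not on the hidden $Z$-state).

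The key structural input I would use is the following observation, which follows from Lemma~\ref{zombie-ancestors-are-zombies} together with the labeling rule for $\ZNCT$: in $Z_t$, a node with label in $\{\CT,\CF\}$ cannot have any child with label in $\{\ZCT,\ZCF,\ZNCT,\ZNPF\}$. A zombie child would force its parent, lying on the path to the nearest $\PF$ above it, to also be a zombie; a $\ZNCT$ (or $\ZNPF$, which starts life as $\ZNCT$) child requires a non-$\{\CT,\CF\}$ parent by the labeling rule. Consequently, for every $u \in Z_t$ with label in $\{\CT,\CF\}$ and $s = \textsc{Y-time}(t)$, the $T$-degree of $u$ in $Z_t$ equals the $\PT$-degree of $u$ in $Y_s$, and the natural bijection between $\{\CT,\CF\}$-nodes of $Z_t$ and $\PT$-nodes of $Y_s$ is degree-preserving. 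From this I would deduce that, conditional on $\textsc{Update}=1$, the $Z$-selected parent is drawn from $Y_s^\PT$ with probability proportional to $\mathbf{a}(\deg_\PT^{Y_s}(u))$, and the new node is labeled $\CT$ with probability one, matching the attachment and labeling rules of the $(p_2,k)$-simple CKP on $Y_s$.

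For the error-checking step, I would note that a $Y$-check occurs exactly when $U\in[0,p_2]$, which is an event of probability $p_2$ independent of the history, and then translate each branch of the $Z$-check into the corresponding $Y$-action via the projection that maps $\{\ZCT,\ZCF\}$ to $\PF$ and removes $\ZNCT, \ZNPF$ nodes. The first ancestor of $v_0$ that is $\PF$ in $Y_s$ corresponds precisely to the first ancestor of $v_0$ in $Z_t$ whose label lies in $\{\ZCT,\ZCF,\PF\}$. In the $U\in(p_1,p_2]$ branch, the $Z$-check walks up to that node and relabels the path by $\CT\to\ZCT$ and $\CF\to\ZCF$, which under the projection becomes the desired $\PF$-marking in $Y_s$. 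In the $U\in[0,p_1]$ branch, $Z$ may first promote a prefix of $\CT$ nodes to $\ZCT$ upon encountering the first zombie ancestor, then continue and eventually relabel the full path as $\PF$ upon encountering a $\PF/\CF/\ZCF$ ancestor; again under the projection this produces a single $\PF$-marking in $Y_s$ up to the first $Y_s$-ancestor that is $\PF$.

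The main obstacle I expect is the $U\in[0,p_1]$ branch, where $Z$ may continue walking past the first ancestor that is $\PF$ in $Y_s$ and relabel additional nodes strictly deeper on the path. I must verify that this extra activity is invisible from the viewpoint of $Y_s$: any node $Z$-relabeled beyond the first such ancestor was already in $\{\ZCT,\ZCF,\PF\}$ in $Z_t$ and hence already $\PF$ in $Y_s$, so the induced $Y$-state after the projection matches exactly what a genuine depth-$k$ $Y$-check would produce. Once this check-semantics correspondence is verified, combining it with the initialization, parent selection, and labeling checks yields the lemma by induction on $s$.
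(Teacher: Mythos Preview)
Your proposal is correct and follows essentially the same route as the paper: you verify the parent-selection step via the observation that $\{\CT,\CF\}$-nodes in $Z_t$ have no zombie children (so their $T$-degree equals their $\PT$-degree in $Y_s$), and you verify the error-detection phase by case analysis on $U$, using Lemma~\ref{zombie-ancestors-are-zombies} to show that any additional relabeling the $Z$-check performs beyond the first $Y_s$-$\PF$ ancestor in the $U\in[0,p_1]$ branch only touches nodes already $\PF$ in $Y_s$. This is exactly the paper's argument, including the identification of that last point as the main subtlety.
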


\begin{proof}
The initial state is consistent with the simple CKP with $\textsc{Attach} = (\mathbf{a}, M)$ and $\textsc{Check} = (p_2, k)$. We verify that this is also the case for state evolution.

For a time step $i \in [0, t]$ in which $Z_i$ is updated, the $Y$-process is only updated when $\textsc{Update} = 1$. Indeed, $\textsc{Update} = 0$ when the parent node is in $\{\ZCT, \ZCF, \ZNCT
\}$; the new node added is therefore 
$\ZNCT$, and all these nodes are removed in the coupled $Y$-process.
Therefore, we only need to focus on the evolution of the $Y$-process between two steps in the $Z$-process when $\textsc{Update} = 1$; all steps where $\textsc{Update} = 0$ do not impact the resulting $Y$-process. 
Call these two steps $Y_{s}$ and $Y_{s + 1}$, where $s = \textsc{Y-time}(t)$. Recall that to go from the $Z$ to $Y$ process, we relabel $\{\ZCT, \ZCF\} \Rightarrow \PF$ and remove all $\ZNCT$
and $\ZNPF$ nodes.
\begin{itemize}
    \item \textbf{Choose parent node:} When $\textsc{Update} = 1$, the new node connects to a \CT\ or \CF\ parent node. We want to verify that this connection is distributed according to the attachment function $\mathbf{a}$ \textit{in the graph corresponding to $Y_s$}. Note that since all children of a \CT\ or \CF\ node in $\{Z_i\}$ must also be \CT\ or \CF,
    the degree of a $\{\CT, \CF\}$ node in $Z_t$ equals the degree of the corresponding node in $Y_s$. 
    
    Therefore, conditioning on 
    the parent node being $\CF$ or $\CT$ (when 
    $\textsc{Update} = 1$), we find that $u$ is chosen as the parent node of the new node in $Y_s$ with probability
    $$\frac{\mathbf{a}(\deg_{T}^{Z_t}(u))}{\sum_{v \in Z_t^{\CF} \cup Z_t^{\CT}} \mathbf{a}(\deg_{T}^{Z_t}(v))} = \frac{\mathbf{a}(\deg_\PT^{Y_s}(u))}{\sum_{v \in Y_s^{\PT}} \deg_\PT^{Y_s}(v)},$$
    as required.
    \item \textbf{Add and label the new node:} Same as in the $(\textsc{Attach} = (\mathbf{a}, M), \textsc{Check} = (p_2, k))$-simple CKP.
       \item \textbf{Error detection phase:} Recall that
        if a node is labeled $\PF$ in $Y_s$, it must have a label among $\{\PF, \ZCT, \ZCF\}$ in $Z_t$.

        With probability $p_1$, 
        if the new node $v$ is labeled
        \CT, 
        a check is performed which checks a path of length up to $k$ and stops at the first node that is labeled $\PF$, $\CF$, or $\ZCF$ in $Z_t$ (i.e., $\PF$ or $\CF$ in $Y_s$); if such a node is found, the entire path is labeled $\PF$. Along the way, if a \ZCT\ ancestor is found, nodes on the path may be relabeled from $\CT$ to $\ZCT$; in $Y_s$, this corresponds to finding a \PF\ ancestor and relabeling the corresponding nodes from $\CT$ to $\PF$. 
        Even though the check continues in $Z_t$ after the \ZCT\ ancestor was found, Lemma~\ref{zombie-ancestors-are-zombies} guarantees that all ancestor nodes of this $\ZCT$ ancestor are themselves $\ZCT$ or $\ZCF$, so they are already marked $\PF$ in $Y_s$.
        We can disregard the case where the new node in $Z_t$ has label \ZNCT, as $\textsc{Update} = 0$ and these nodes are removed in $Y_s$. 
        Therefore this check has the desired behaviour.

        With probability $p_2-p_1$, if $v$ is \CT, we also have a check which stops at the first $\CF$, $\PF$ or $\ZCT$
        node 
        (i.e., $\CF$ or $\PF$ in $Y_j$), and makes nodes along the checked path either $\ZCF$ or $\ZCT$, i.e., $\PF$ in $Y_j$. This also is consistent with the $(\textsc{Attach} = (\mathbf{a}, M), \textsc{Check} = (p_2, k))$-simple CKP. 
\end{itemize} 

We have verified that the initialization and state evolution with the checking procedure all align with the $(\textsc{Attach} = (\mathbf{a}, M), \textsc{Check} = (p_2, k))$-simple CKP. 
\end{proof}

\paragraph{Proof of monotonicity with respect to $p$}
Using this coupling, we can now prove the following required lemma for error elimination with $k$ fixed.

\begin{lem} \label{lemma:monotonicity-proof-for-p}
    For all $p_2 \geq p_1$ and all $k$, if the error effect is completely eliminated in the simple CKP with features $(\textsc{Attach} = (\mathbf{a}, M=1),  \textsc{Check} = (p_1, k))$, then the error effect is completely eliminated in the simple CKP with features $(\textsc{Attach} = (\mathbf{a}, M=1),  \textsc{Check} = (p_2, k))$.
\end{lem}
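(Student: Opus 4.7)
The plan is to leverage the coupling $(\{Z_i\}, \{X_i\}, \{Y_i\})$ built in Section~\ref{subsection:coupling-p-values}, together with the marginal correctness results in Lemmas~\ref{lemma:monotonicity-coupling-p-X-correctness} and~\ref{lemma:monotonicity-coupling-p-Y-correctness}. Since those lemmas already certify that $\{X_t\}$ is distributed as the $(p_1, k)$-simple CKP and $\{Y_s\}$ as the $(p_2, k)$-simple CKP, the task reduces to a purely pointwise (sample-path) implication on the joint process: whenever $X_t$ reaches an all-$\PF$ state, the corresponding $Y_{\textsc{Y-time}(t)}$ must also be all-$\PF$. The error-elimination hypothesis for the $p_1$-CKP (Definition~\ref{def:error-elimination}), applied to the unique $\CF$ node (the root of $X_0$), guarantees the existence of such a random time almost surely, and the implication then transfers error elimination to the $Y$-marginal.

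The core step is a simple bookkeeping argument on labels, based on the deterministic relabeling rules defining $X_t$ and $Y_s$ from $Z_t$. By the definition of the coupling, an $X$-node is labeled $\PF$ exactly when the corresponding $Z$-node has label in $\{\PF, \ZNPF\}$, since the three relabelings $\ZCF\Rightarrow\CF$, $\{\ZCT,\ZNCT\}\Rightarrow\CT$ and $\ZNPF\Rightarrow\PF$ are the only ones sending a $Z$-label to $\PF$. Hence ``$X_t$ is entirely $\PF$'' is equivalent to ``every node of $Z_t$ has label in $\{\PF, \ZNPF\}$''. Now $Y_{\textsc{Y-time}(t)}$ is obtained from $Z_t$ by relabeling $\{\ZCT,\ZCF\}\Rightarrow\PF$ and deleting all $\ZNCT$ and $\ZNPF$ nodes, so under this hypothesis the only surviving $Y$-nodes are precisely the original $\PF$ nodes of $Z_t$. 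Consequently $Y_{\textsc{Y-time}(t)}$ is entirely $\PF$, establishing the required implication.

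One minor technicality worth flagging is the behavior of the time change $\textsc{Y-time}(\cdot)$, which is non-decreasing but may be constant on long stretches (exactly on steps where $\textsc{Update}=0$ because the parent in $Z_t$ has label in $\{\ZCT,\ZCF,\ZNCT\}$). This causes no issue: Definition~\ref{def:error-elimination} only asks for the \emph{existence} of a time at which the $\CF$-rooted sub-DAG becomes entirely $\PF$, and the pointwise implication above produces such a time whenever it exists for $X$. I do not anticipate a genuine obstacle here; the work of the section has already been spent in constructing the coupling and verifying the two marginals, and once those are in hand the monotonicity conclusion follows from this short label-correspondence argument. An analogous argument, combining the $p$-coupling with the $k$-coupling used in Lemma~\ref{lemma:monotonicity-proof-for-k}, then yields Theorem~\ref{thm:monotonicity-p-k} by chaining the two monotonicities.
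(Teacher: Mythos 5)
Your argument matches the paper's proof essentially line for line: invoke the coupling of Section~\ref{subsection:coupling-p-values} with its marginal-correctness Lemmas~\ref{lemma:monotonicity-coupling-p-X-correctness} and~\ref{lemma:monotonicity-coupling-p-Y-correctness}, then observe that an all-$\PF$ state in $X_t$ forces every $Z_t$-label into $\{\PF, \ZNPF\}$, whose $Y$-images are either $\PF$ or removed, so $Y_{\textsc{Y-time}(t)}$ is also all-$\PF$. The one imprecision is your claim that $Y_{\textsc{Y-time}(t)}$ is ``obtained from $Z_t$''; strictly it is obtained from $Z_{t'}$ for the last $\textsc{Update}=1$ step $t'\le t$, but the implication still holds because $\textsc{Update}=0$ steps never relabel a $\CT$ or $\CF$ node, so a $\CT/\CF$ node surviving in $Z_{t'}$ would also survive in $Z_t$, contradicting the all-$\{\PF,\ZNPF\}$ hypothesis --- a detail the paper's own terse proof also leaves implicit.
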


\begin{proof}
    Assume that for every $(\textsc{Attach} = (\mathbf{a}, M), \textsc{Check} = (p_1, k))$-simple CKP, the error effect is eliminated completely: there exists some time $t$ such that the sub-tree of the first node, i.e., the entire tree $\T_{t}$, is entirely marked \PF.
    Given a $(\textsc{Attach} = (\mathbf{a}, M), \textsc{Check} = (p_2, k))$-simple CKP, we consider its generation according to the $(\{Z_i\}, \{X_i\}, \{Y_i\})$ coupling defined in 
    Section~\ref{section:coupling-monotonicity}.
    Let $t$ be the time at which the $X_t = (\T_t, L_t)$ state is entirely marked \PF. Then, let $s = \textsc{Y-time}(t)$ be the corresponding time step of the $Y$-process. Since every \PF\ node in $X_t$ is either \PF\ or \ZNPF\ in $Z_s$, it consequently is either \PF\ or does not exist in $Y_s$. Therefore, by construction $Y_s$ is also entirely marked \PF, and the error is eliminated. 
\end{proof}

\subsection{Monotonicity with respect to $k$}

We now present the coupling of the $(\textsc{Attach} = (\mathbf{a}, M), \textsc{Check} = (p, k_1))$-simple CKP and $(\textsc{Attach} = (\mathbf{a}, M), \textsc{Check} = (p, k_2))$-simple CKP processes for $k_1 \leq k_2$. This coupling follows the same overarching structure as the coupling used to show monotonicity with respect to $p$, and so we highlight the components of the coupling that differ in this setting. The key difference arises in the \textit{checking procedure} utilized in the constructed $Z$-process, which must handle the checking for different values of $k$ in the coupled processes differently than how the checking for different $p$ values was handled.

Consider the coupling with $p$ and $\mathbf{a}$ fixed. Then, $k$ is our only variable and so we can simplify the notation and write $k_1$-CKP and $k_2$-CKP. Let the $k_1$-CKP be given by the series of states $\{X_i\}$, and let the $k_2$-CKP be given by the series of states $\{Y_i\}$. We once again define the coupling  $(\{Z_i\},\{X_i\},\{Y_i\})$ and the mapping $\textsc{Y-time}(t)$, with key difference as follows in the state evolution.

\paragraph{State evolution} Given $\{Z_0, Z_1, \dots, Z_t\}$, we define $Z_{t+1}$ through the following series of steps. The first four steps are the same as those in section~\ref{subsection:coupling-p-values}: choose a parent node, add the new node, specify the \textsc{Update} value, and label the new node. The error detection phase is different in this setting. Let $v_0 = v$ be the new node added, and for every $v_i$ in the path checked, let $v_{i + 1}$ denote its parent. 
\begin{itemize}
    \item \textbf{Error detection phase:} If the new node is labeled 
    $\CT$, perform the following procedure. Set indicator variable \textsc{Zombie} $\gets 0$. For $i = 0, \dots, k_1$:
    \begin{itemize}
        \item If $v_i$ is $\CF$, $\ZCF$, or $\PF$, label all the nodes $\{v_0, \dots, v_i\}$ as $\PF$ and stop the check. Else, continue the checking procedure.
        \item If $\textsc{Zombie} = 0$, and if $v_i$ is 
        $\ZCT$, relabel all the nodes $\{v_0, \dots, v_i\}$ from $\CT$ and $\CF$ to $\ZCT$ and $\ZCF$ correspondingly. Set $\textsc{Zombie} \gets 1$. Continue the checking procedure.
    \end{itemize}
    Next, for $i$ from $k_1 + 1$ to $k_2$:
    \begin{itemize}
        \item If $v_i$ is $\ZCF$, $\ZCT$, $\CT$, or $\PF$, relabel all the nodes $\{v_0, \dots, v_i\}$ from $\CT$ and $\CF$ to $\ZCT$ and $\ZCF$ correspondingly, and stop the check. Else, continue the checking procedure.
    \end{itemize}
    Otherwise, if the new node is labeled
    $\ZNCT$, then for $i = 0, \dots, k_1$: 
    \begin{itemize}
        \item   If $v_i$ is $\CF$, $\ZCF$, or $\PF$, label all the nodes $\{v_0, \dots, v_i\}$ as $\PF$ and stop the check. 
        \item If $v_i$ is 
        $\ZNPF$, then all nodes on the path checked so far were $\ZNCT$; relabel all of these nodes as $\ZNPF$. Else, continue the checking procedure.

    \end{itemize}
\end{itemize}
Finally, the processes of creating the state $Z_{t+1}$ and $\textsc{Y-time}$ and of forming $\{X_i\}$ and $\{Y_j\}$ from $\{Z_i\}$ are the same as for \ref{subsection:coupling-p-values}.

We now state the lemmas regarding the correctness of the $\{X_i\}$ and $\{Y_j\}$ processes.

\begin{lem}
    Let $\{X_i\}$ be generated according to the process above. Then $\{X_i\} \sim (\textsc{Attach} = (\mathbf{a}, M), \textsc{Check} = (p, k_1))$-simple CKP.
\end{lem}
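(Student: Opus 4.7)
The plan is to mirror the proof of Lemma~\ref{lemma:monotonicity-coupling-p-X-correctness}, verifying in turn that the initial state, the parent-selection and labeling steps, and the error-detection phase of the $\{X_i\}$ process induced by the coupling all agree with a genuine $(\textsc{Attach}=(\mathbf{a},M),\textsc{Check}=(p,k_1))$-simple CKP. The initial state is immediate since $X_0=Z_0$ is a single $\CF$ node. The core observation that makes parent selection work is that the node set $Z_t^T = Z_t \setminus (Z_t^\PF \cup Z_t^{\ZNPF})$ corresponds exactly to $X_t^\PT$ under the relabeling $\ZCF\Rightarrow\CF$, $\{\ZCT,\ZNCT\}\Rightarrow\CT$, $\ZNPF\Rightarrow\PF$; moreover, each $u$ in this set has $\deg_T^{Z_t}(u)=\deg_\PT^{X_t}(u)$, because the set $T$ of labels in $Z_t$ is precisely what becomes $\PT$ in $X_t$. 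Thus the attachment weight in $Z_t$ equals the attachment weight in $X_t$ node-by-node, so parent selection has the correct distribution. Every new node added is labeled $\CT$ or $\ZNCT$ in $Z_t$, both of which become $\CT$ in $X_t$, matching the simple CKP (no error introduction).

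The bulk of the argument is the checking phase, where I would split on whether the parent $u$ is in $\{\CT,\CF\}$ (so the new node is $\CT$ in $Z_t$ and hence $\CT$ in $X_t$) or in $\{\ZCT,\ZCF,\ZNCT\}$ (so the new node is $\ZNCT$ in $Z_t$ and $\CT$ in $X_t$). In either case, a check occurs in $Z_t$ with probability $p$, which matches the required overall checking probability in the $k_1$-CKP. I would then argue that the first $k_1$ iterations of the check in $\{Z_i\}$ produce exactly the same $\PF$ labeling in $X_t$ as a depth-$k_1$ check in a $k_1$-CKP would. Concretely, if the check encounters a $\CF$, $\ZCF$, or $\PF$ node $v_i$ within distance $k_1$, then $v_i$ is $\CF$ or $\PF$ in $X_t$ and the entire path $\{v_0,\dots,v_i\}$ is relabeled $\PF$ in both $Z_t$ and $X_t$, exactly as in the $k_1$-CKP. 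If the check instead encounters a $\ZCT$ node $v_i$, the path is relabeled from $\{\CT,\CF\}$ to $\{\ZCT,\ZCF\}$ in $Z_t$, but since these labels all collapse to $\{\CT,\CF\}$ in $X_t$, this relabeling is invisible to the $X_t$ process—which is the correct behavior, since $v_i$ is a $\CT$ node in $X_t$ and a $k_1$-CKP check would simply pass through it.

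For the $\ZNCT$ case, the check in $Z_t$ already runs only up to depth $k_1$, and I would argue that detecting a $\CF$, $\ZCF$, or $\PF$ node produces a $\PF$ path as required, while detecting a $\ZNPF$ marker triggers relabeling $\ZNCT\to\ZNPF$ along a path that is entirely $\CT$ in $X_t$, becoming $\PF$ in $X_t$—which is exactly what happens in the $k_1$-CKP when the corresponding $\CT$ path has a $\PF$ ancestor and the check discovers it. The remaining piece is the $k_1+1,\dots,k_2$ extension of the check on new $\CT$ nodes in $Z_t$: this extension can only relabel $\CT,\CF$ nodes into $\ZCT,\ZCF$, which is invisible in $X_t$. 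Hence none of the extra work done by the coupling beyond depth $k_1$ alters the $\{X_i\}$ trajectory.

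The main obstacle is ensuring a clean case analysis that handles the interaction between the two relabeling phases (the $\PF$-path relabeling and the $\ZCT$-triggered zombification) when both could occur in the same check. Using Lemma~\ref{zombie-ancestors-are-zombies}, which guarantees that the ancestors of a zombie node are themselves zombies or $\PF$, one can verify that once the check has reached a $\ZCT$ node, any subsequent $\CF$/$\ZCF$/$\PF$ node it would encounter has only zombie or $\PF$ intermediate ancestors—so the $\PF$-path produced in $X_t$ is identical whether one stops at the zombie or continues up to the true $\PF$/$\CF$ ancestor. This will complete the verification that $\{X_i\}$ is distributed as the claimed simple CKP.
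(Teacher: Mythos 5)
Your proof is correct and follows essentially the same route as the paper's: verify initialization, parent selection, and labeling exactly as in the $p$-coupling argument (relying on $\deg_T^{Z_t}(u) = \deg_\PT^{X_t}(u)$ for $T$-labeled nodes), then observe that the first $k_1$ iterations of the $Z$-check produce the correct $\PF$-relabeling in $X_t$ while all zombification — within the first $k_1$ steps, in the $\ZNCT$/$\ZNPF$ branch, and in the $k_1{+}1,\dots,k_2$ extension — only moves labels within the preimage of $\CT/\CF$ and is therefore invisible in $X_t$. One minor note: your final paragraph overstates the difficulty; for the $\{X_i\}$ marginal there is no real ``interaction'' to resolve, since the $\PF$-path rule is tested before the zombification rule at each step and zombification never changes an $X_t$-label, so the $\PF$-set in $X_t$ is determined solely by the position of the first $\CF/\ZCF/\PF$ encountered within depth $k_1$; the appeal to Lemma~\ref{zombie-ancestors-are-zombies}, and the phrase about ``whether one stops at the zombie,'' really belong to the $\{Y_j\}$-side verification, not this one.
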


\begin{proof}
The only component of the $\{X_i\}$ process that needs to be analyzed is the error detection phase of the state evolution. All other components of the correctness proof follow from the proof of Lemma \ref{lemma:monotonicity-coupling-p-X-correctness}, due to the similarities between the couplings for $p$ values and for $k$ values.
Recall that if a node is labeled $\CF$ in $X_t$, it must have had a label from the set $\{\CF, \ZCF, \ZNCF\}$ in $Z_t$. If a node is labeled $\PF$ in $X_t$, it must have been labeled $\PF$ or $\ZNPF$ in $Z_t$.

With probability $p$, a check is performed which checks a path of length up to $k_1$ and stops at the first node found that is labeled $\PF$, $\CF$, $\ZCF$
or $\ZNPF$ (so, $\PF$ or $\CF$ in the $\{X_i\}$ process); once such a node is found the entire path is labeled $\PF$. Along the way, nodes may be relabeled from $\CT$ to $\ZCT$, but this does not change the labeling of nodes in the $\{X_i\}$ process. The check also examines the following $k_2 - k_1$ nodes, but this process only relabels nodes from $\CT$ to $\ZCT$ and from $\CF$ to $\ZCF$, which does not change the labeling of nodes in the $\{X_i\}$ process.

Therefore, with probability $p$, a check is performed in the $\{X_i\}$ process that is consistent with how a check is performed in a simple CKP with $\textsc{Check} = (p, k_1))$ (and $\textsc{Attach} = (\mathbf{a}, M)$). We can conclude that $\{X_i\} \sim (\textsc{Attach} = (\mathbf{a}, M), \textsc{Check} = (p, k_1))$-simple CKP.
\end{proof}

\begin{lem}
    Let $\{Y_j\}$ be generated according to the process above. Then $\{Y_j\} \sim (\textsc{Attach} = (\mathbf{a}, M), \textsc{Check} = (p, k_2))$-simple CKP.
\end{lem}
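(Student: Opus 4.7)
The approach mirrors the proof of Lemma~\ref{lemma:monotonicity-coupling-p-Y-correctness}: verify the initial state, then check the transition rule at every update of the $Y$-process. The initial state $Y_0$ is a single $\CF$ node, which matches the $k_2$-simple CKP initialization. Exactly as in the $p$-coupling, only steps with $\textsc{Update} = 1$ update the $Y$-process (at steps with $\textsc{Update} = 0$ the parent $u$ is in $\{\ZCT, \ZCF, \ZNCT\}$, the new node is labeled $\ZNCT$, and it is then deleted when projecting $Z_{t+1}$ down to $Y_{s+1}$), so it suffices to analyze the transition $Y_s \to Y_{s+1}$ conditional on $\textsc{Update} = 1$, where the parent $u$ is in $\{\CT, \CF\}$ and the new node is born $\CT$.

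For the parent-selection step, I plan to first prove an analog of Lemma~\ref{zombie-ancestors-are-zombies} for the $k$-coupling: on the path from any node to its nearest $\PF$ ancestor, every $\ZCT$ or $\ZCF$ node has only $\ZCT/\ZCF$ ancestors, and every $\ZNCT$ node has an initial segment of $\ZNCT$ ancestors followed by $\ZCT/\ZCF$ ancestors. The key structural consequence, as in the $p$-case, is that the children in $Z_t$ of any $\{\CT, \CF\}$ node must themselves be labeled $\{\CT, \CF\}$: the only way a child could be relabeled to $\{\ZCT, \ZCF, \PF\}$ is via a check traversing a path from a strict descendant through the child and then through the parent, and such a check would also relabel the parent, contradicting the parent still being $\{\CT, \CF\}$. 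Hence $\deg_T^{Z_t}(u) = \deg_\PT^{Y_s}(u)$ for every $\{\CT, \CF\}$ node $u$, and the conditional distribution of the chosen parent given $\textsc{Update} = 1$ is proportional to $\mathbf{a}(\deg_\PT^{Y_s}(u))$ on $Y_s^{\PT}$, as required. The new node's label $\CT$ matches the simple CKP rule.

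The main obstacle is verifying that the two-phase check in $Z$, performed with overall probability $p$, induces a depth-$k_2$ check on $Y_s$ with probability $p$. I would argue depth by depth. For $i = 0, \ldots, k_1$ the argument reproduces that of Lemma~\ref{lemma:monotonicity-coupling-p-Y-correctness}: stopping in $Z$ at $\{\CF, \ZCF, \PF\}$ corresponds to the $Y_s$-check stopping at $\{\CF, \PF\}$ and marking $\{v_0, \ldots, v_i\}$ as $\PF$; entering zombie mode at the first $\ZCT$ ancestor $v_j$ corresponds to the $Y_s$-check stopping at a $\PF$ ancestor and marking $\{v_0, \ldots, v_j\}$ as $\PF$, and by the ancestor-structure lemma any $v_{j'}$ visited by the $Z$-check with $j' > j$ is already in $\{\ZCT, \ZCF, \PF\}$ (equivalently already $\PF$ in $Y_s$), so any subsequent $Z$-relabelings leave $Y_s$ unchanged. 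For $i = k_1 + 1, \ldots, k_2$, the $X$-check has ended but the $Y_s$-check continues; by construction $Z$ only relabels $\CT \to \ZCT$ and $\CF \to \ZCF$ in this range (never touching existing $\PF$ nodes and never interfering with the $X$-process, whose labels are unchanged), while on the $Y_s$ side these relabelings correspond exactly to marking the path up to the first $\{\CF, \PF\}$ ancestor (i.e.\ the first $\{\CF, \ZCT, \ZCF, \PF\}$ node of $Z$) as $\PF$, which is the depth-$k_2$ check behaviour. Combining the parent-selection analysis with this case-by-case check analysis yields the required $(\textsc{Attach} = (\mathbf{a}, M), \textsc{Check} = (p, k_2))$-simple CKP transition, completing the proof.
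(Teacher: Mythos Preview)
Your proposal is correct and follows essentially the same approach as the paper: reduce the parent-selection and labeling steps to the corresponding argument for the $p$-coupling (via the same ``children of $\{\CT,\CF\}$ nodes are $\{\CT,\CF\}$'' observation), then verify the error-detection phase by splitting into the $i\le k_1$ range (handled exactly as in Lemma~\ref{lemma:monotonicity-coupling-p-Y-correctness}, using the zombie-ancestor structure) and the $k_1<i\le k_2$ range (where $\CT\to\ZCT$, $\CF\to\ZCF$ relabelings in $Z$ realize the continuation of the depth-$k_2$ check in $Y$). The only cosmetic difference is that the paper directly invokes Lemma~\ref{zombie-ancestors-are-zombies} rather than re-deriving it for the $k$-coupling, whereas you (more carefully) note that an analogue must be checked; the argument is identical.
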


\begin{proof}
   We again only need to analyze the error detection phase of the state evolution of the $\{Y_j\}$ process. 
   We need to show that the error detection phase performs a check with probability $p$, and that such a check examines the path of ancestor nodes of up to length $k_2$, stopping at the first node labeled $\CF$ or $\PF$ in $Y_j$. Suppose that $j = \textsc{Y-time}(t)$.
   Recall that if a node is labeled $\CF$ in $Y_j$, it must have had a label of $\CF$ in $Z_t$. If a node is labeled $\PF$ in $Y_j$, it must have had a label in the set $\{ \PF, \ZCT, \ZCF\}$ in $Z_t$.

   With probability $p$, if the new node is labeled $\CF$ or $\PF$, a check is performed which checks a path of length up to $k_1$ and stops at the first node that is labeled $\PF$, $\CF$, or $\ZCF$ (so, $\PF$ or $\CF$ in the $\{Y_j\}$ process); once such a node is found the entire path is labeled $\PF$. Along the way, nodes may be relabeled from $\CT$ to $\ZCT$; in $Y_j$, this relabels the corresponding nodes from $\CT$ to $\PF$. This occurs when a $\ZCT$ ancestor of a node has been found (i.e. a $\PF$ ancestor node in $Y_j$), aligning with the desired checking mechanism for $Y_j$. Even though the check continues in $Z_t$ after the $\ZCT$ ancestor was found, Lemma \ref{zombie-ancestors-are-zombies} guarantees that all ancestor nodes of this $\ZCT$ ancestor are themselves $\ZCT$ or $\ZCF$, so they are already marked $\PF$ in $Y_j$. We can disregard the case where the new node and its parent node have labels in the set $\{\ZCT, \ZCF, \ZNCT, \ZNCF\}$, as these nodes are removed when moving from $Z_t$ to $Y_j$.

   For the remaining $k_2 - k_1$ steps of the check, nodes along a path are checked, stopping at the first node that is labeled $\CF$, $\ZCT$, $\ZCF$, or $\PF$ (i.e. $\CF$ or $\PF$ in $Y_j$). It makes nodes along the checked path either $\ZCF$ or $\ZCT$, i.e. $\PF$ in $Y_j$.

   Altogether, the error detection phase is consistent with that of the simple CKP with $\textsc{Attach} = (\mathbf{a}, M)$ and $ \textsc{Check} = (p, k_1)$ defined by $Y_j$. We can conclude that $\{Y_j\} \sim (\mathbf{a}, M), \textsc{Check} = (p, k_2))$-simple CKP.
\end{proof}

\paragraph{Proof of monotonicity with respect to \textit{k}}
We prove the following theorem regarding the monotonicity of error elimination with respect to the checking parameter $k$. 

\begin{lem}\label{lemma:monotonicity-proof-for-k}
    For all $k_2 \geq k_1$ and all $p$, if the error effect is completely eliminated in the simple CKP with features $(\textsc{Attach} = (\mathbf{a}, M=1),  \textsc{Check} = (p, k_1))$, then the error effect is completely eliminated in the simple CKP with features $(\textsc{Attach} = (\mathbf{a}, M=1),  \textsc{Check} = (p, k_2))$.
\end{lem}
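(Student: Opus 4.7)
The plan is to mirror the structure of the proof of Lemma~\ref{lemma:monotonicity-proof-for-p}, exploiting the fact that the two preceding lemmas have already done all the heavy lifting: they establish that in the coupling $(\{Z_i\}, \{X_i\}, \{Y_j\})$ constructed just above for varying $k$, the marginal $\{X_i\}$ has the law of a $(\textsc{Attach} = (\mathbf{a}, M), \textsc{Check} = (p, k_1))$-simple CKP and the marginal $\{Y_j\}$ has the law of a $(\textsc{Attach} = (\mathbf{a}, M), \textsc{Check} = (p, k_2))$-simple CKP. So it remains only to transfer the error-elimination event from the $X$-process to the $Y$-process along the coupling, using the time-alignment map $\textsc{Y-time}$.

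First I would assume the hypothesis: in every $(\textsc{Attach} = (\mathbf{a}, M=1), \textsc{Check} = (p, k_1))$-simple CKP the error is completely eliminated, i.e.\ there almost surely exists a time $t$ such that the entire tree is marked $\PF$. Given any $(\textsc{Attach} = (\mathbf{a}, M=1), \textsc{Check} = (p, k_2))$-simple CKP, realize it as the $Y$-marginal of the coupling; this is valid by the previous lemma. Applying the hypothesis to the $X$-marginal yields a (random, almost surely finite) time $t$ at which $X_t = (\T_t, L_t)$ has all nodes labeled $\PF$. Set $s = \textsc{Y-time}(t)$; it remains to argue that $Y_s$ is entirely $\PF$.

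The key observation, already used in the $p$-coupling, is the label correspondence between $Z_t$ and $Y_s$: a node is present in $Y_s$ iff it has a label in $\{\CT, \CF, \PF, \ZCT, \ZCF\}$ in $Z_t$, with $\{\ZCT, \ZCF\} \Rightarrow \PF$ upon projection. Consider any node $v$ present in $Y_s$. By construction it is labeled $\PT$ in $X_t$ only if it is labeled $\CT$ or $\CF$ in $Z_t$; but since every node of $X_t$ is $\PF$, $v$ is $\PF$ in $X_t$, which means $v$ has label in $\{\PF, \ZNPF\}$ in $Z_t$. Combining with the membership condition leaves only $\PF$, so $v$ is $\PF$ in $Y_s$. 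Hence $Y_s$ is entirely marked $\PF$, giving error elimination for the $k_2$-process.

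There is essentially no obstacle here beyond bookkeeping: the coupling has been designed precisely so that the zombie labels $\ZCT, \ZCF, \ZNCT, \ZNPF$ absorb all the asymmetry between the two processes, and the extra $k_2 - k_1$ steps in the error-detection phase only ever convert $\CT/\CF$ labels into $\ZCT/\ZCF$ labels, never affecting the $X$-process. Chaining this lemma with Lemma~\ref{lemma:monotonicity-proof-for-p} through the intermediate CKP with features $(\textsc{Attach} = (\mathbf{a}, M=1), \textsc{Check} = (p_2, k_1))$ then yields Theorem~\ref{thm:monotonicity-p-k}.
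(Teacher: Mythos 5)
Your proof is correct and takes essentially the same approach as the paper, which simply remarks that the argument of Lemma~\ref{lemma:monotonicity-proof-for-p} carries over verbatim because it only compares labelings in the coupled $X$ and $Y$ processes; you've spelled out that label-chasing explicitly. One small infelicity: your parenthetical ``it is labeled $\PT$ in $X_t$ only if it is labeled $\CT$ or $\CF$ in $Z_t$'' is not literally true (a $\ZCT$, $\ZCF$, or $\ZNCT$ label in $Z_t$ also projects to $\PT$ in $X_t$), but you never actually rely on it---the step you do use, that $\PF$ in $X_t$ forces $Z_t$-label in $\{\PF, \ZNPF\}$, is correct, and intersecting with the $Y_s$-membership condition gives exactly the conclusion.
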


\begin{proof}
    The proof is exactly the same as the proof of Lemma~\ref{lemma:monotonicity-proof-for-p} since the argument entirely relied on comparing the labeling of nodes in the coupled $X$ and $Y$ processes, which carries over to this setting.
\end{proof}

\bibliographystyle{alpha}
\bibliography{bibliography}

\appendix

\section{Simulations}\label{section:simulations}

\paragraph{Set-up of the simulations} We now present our experiments on simulations of CKPs where new units attach to $m$ parents according to preferential attachment, for various values of $m$. Preferential attachment is a common modeling choice for processes of knowledge accumulation (such as citation networks and information on the web) \cite{amaral2000classes,jeong2003measuring}. In the rest of this section, when we refer to CKPs, we specifically mean CKPs with the preferential attachment function.

We provide simulations for the simple and general CKP models equipped with the \textsc{Exhaustive BFS} checking mechanism. In our experiments, we run each CKP simulation for 2000 timesteps, where at each timestep one new node is introduced into the CKP, connects to parent nodes, and possibly performs a check. We perform 20 trials per combination of $m$, $p$, and $k$ values, and the heatmaps reflect the percentage of trials that have survived up to the 2000th timestep. We run simulations for $p \in \{0, 0.1, \dots, 0.9, 1\}$ and $k \in \{1, 2, \dots, 10\}$. For all simulations involving general CKPs, $\eps$ was set to $0.25$.

For each simulation, we initialize the CKP with a chain of 25 nodes, where the $0$-th node is labeled $\CF$, the other nodes in the chain are labeled $\CT$, and there are $m$ edges between each node $i$ in the chain and its parent node $i - 1$. We note that the results proven in this paper hold under any valid CKP initialization, and we choose this initialization to make it initially more difficult to eliminate errors in the CKP. In the simple CKP model, the first node should be labeled $\CF$ (as in this initialization). While the general CKP model is defined so that the first node is labeled $\CT$, we still utilize the chain initialization so that all nodes are $\False$, which simplifies the experiments we have performed (because otherwise, error elimination is tougher to measure in a simulation). One may view the general CKP model with initial node $\CF$ as a $\False$ sub-component that we are trying to eliminate in a larger CKP.

We say that a CKP has \textit{survived} if there exist $\PF$ nodes in the DAG at the 2000th timestep. We note that this does not ensure survival for all times $t \in \mathbb{N}$, but instead should be viewed as an indication that the CKP has survived (and may continue to survive) for a long period of time. A CKP has \textit{eliminated error} if it has not survived according to this definition.

\begin{figure}[hbtp]
    \centering
    \textcolor{white}{
    \subfigure{\includegraphics[height=0.32\textwidth]{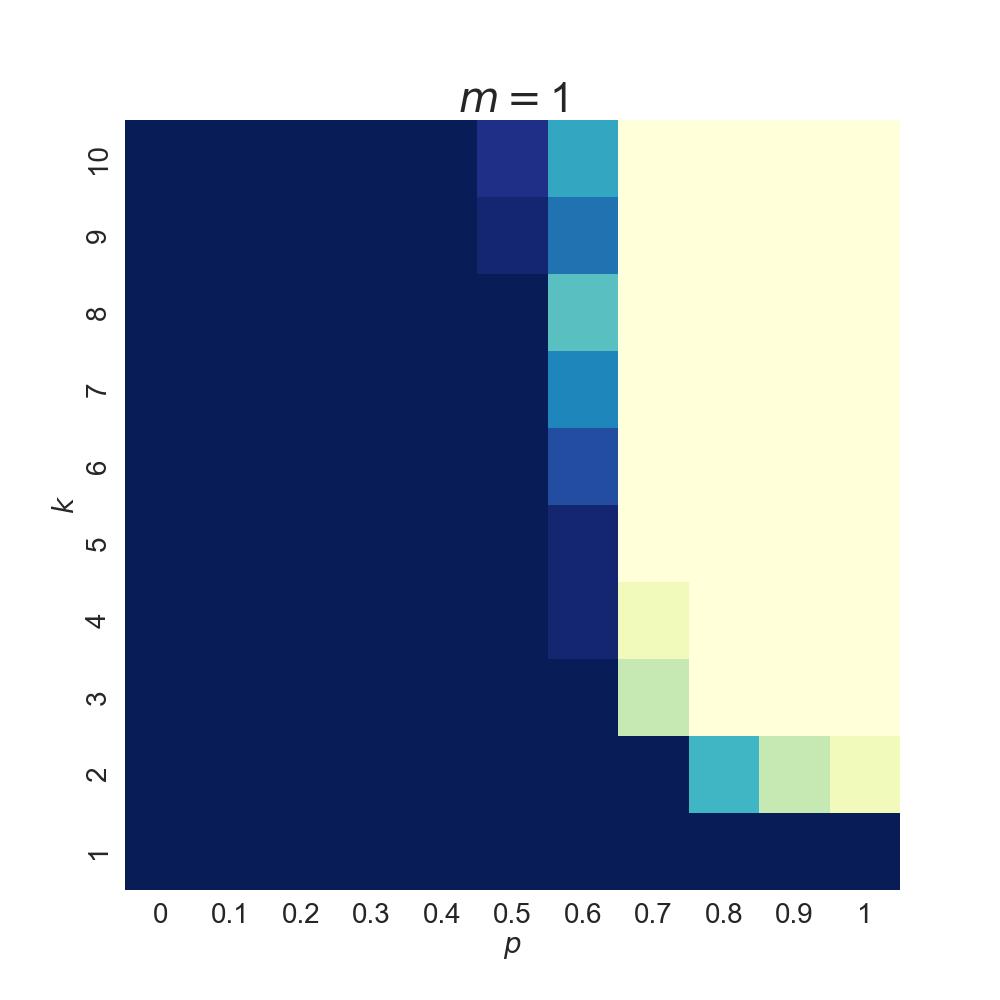}} 
    \subfigure{\includegraphics[height=0.32\textwidth]{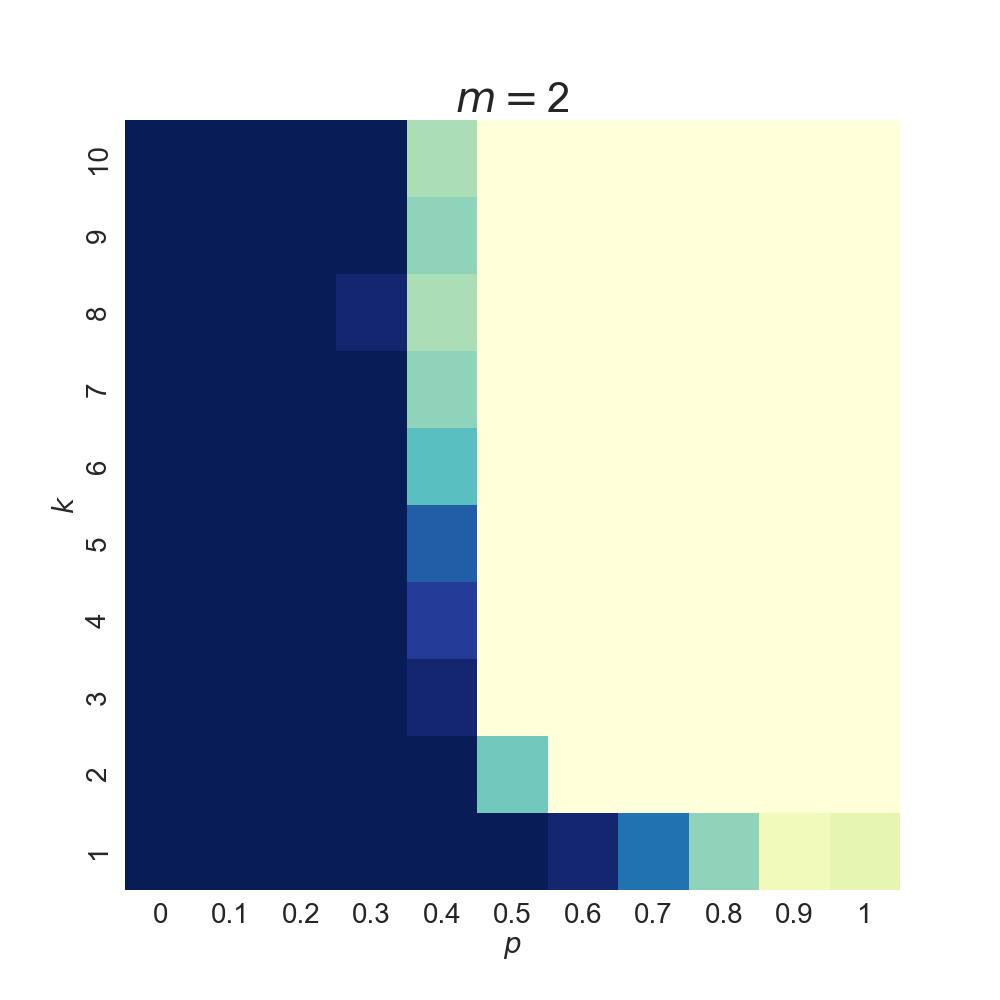}} 
    \subfigure{\includegraphics[height=0.32\textwidth]{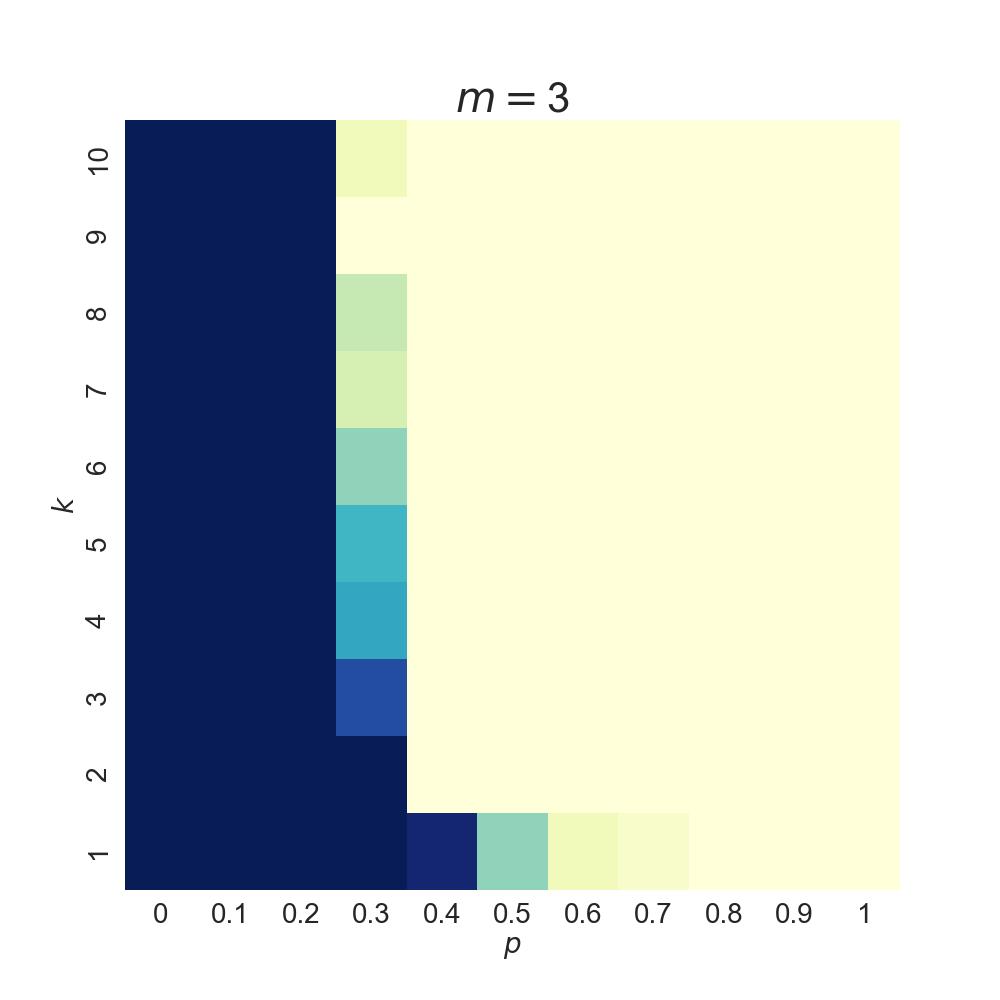}}\\ \vspace{-.6cm}
    \subfigure{\includegraphics[height=0.32\textwidth]{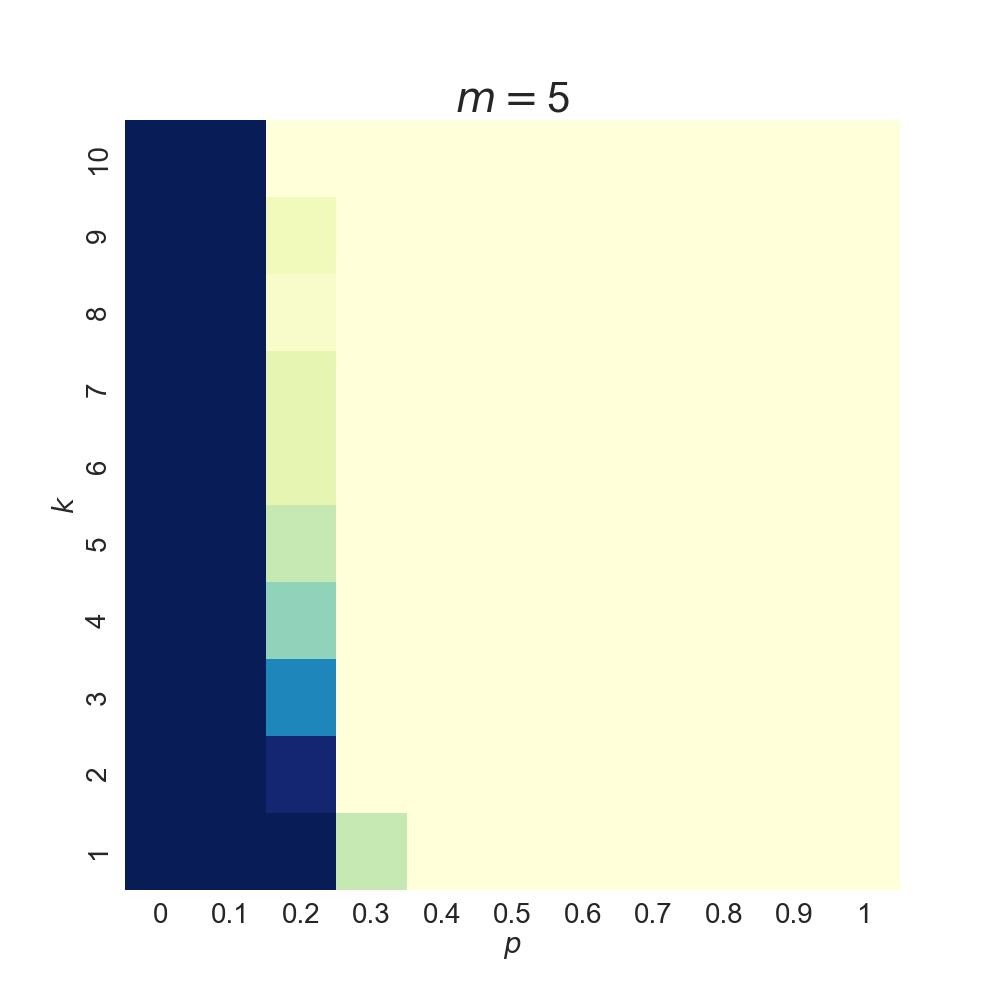}}
    \subfigure{\includegraphics[height=0.32\textwidth]{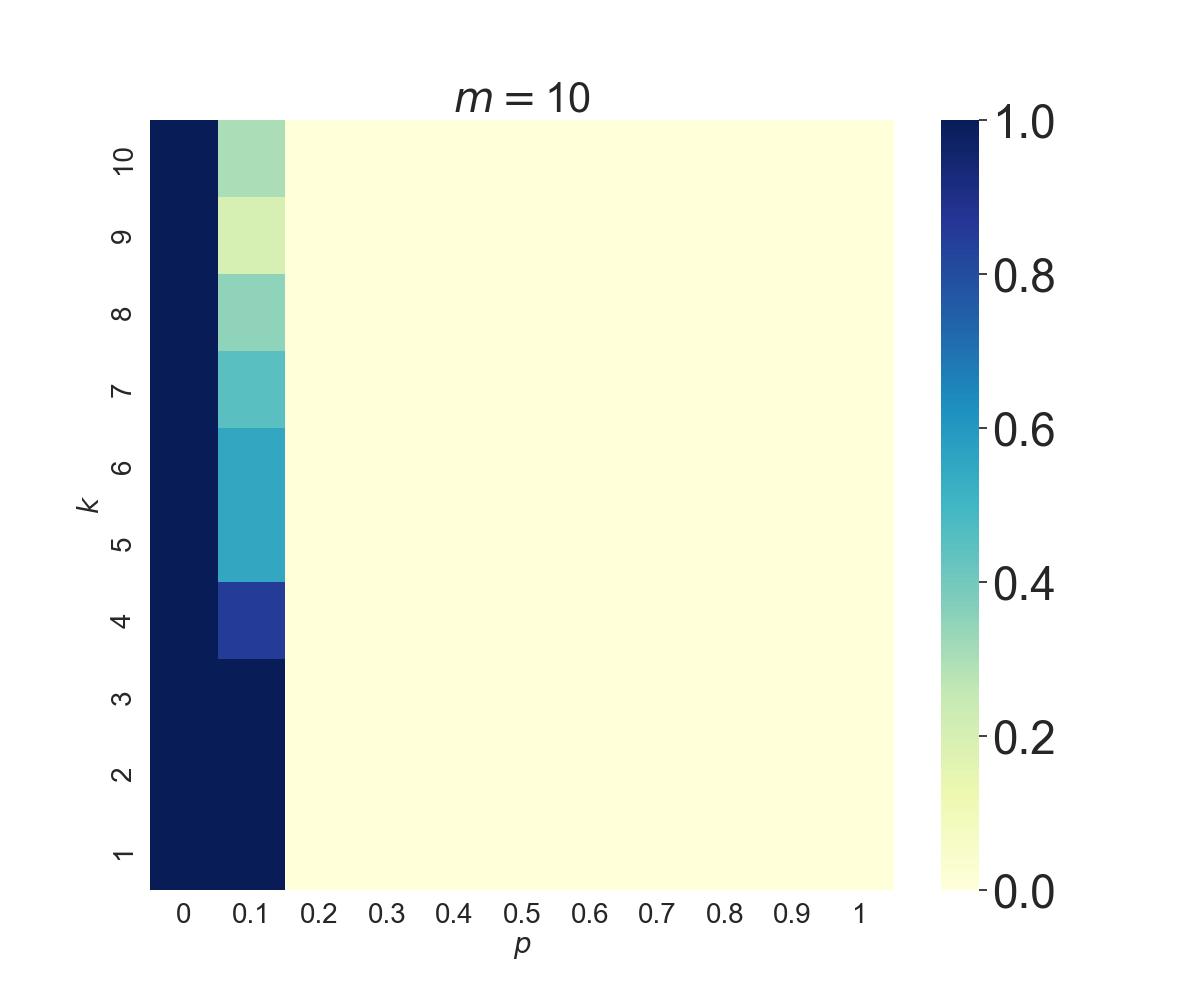}}}
    \vspace{-.6cm}
    \caption{\textbf{General CKP experiments.} For each choice of $m, p, k$, we generate 20 general CKP simulations with $M = m$, $\eps = 0.25$, $\textsc{Check} = (p, k)$, and checking mechanism \textsc{Exhaustive BFS}. The initial CKP state is a chain of $25$ nodes (one $\CF$ node followed by $24$ $\CT$ nodes) with $m$ edges between each node and its parent. The CKP evolves according to preferential attachment. 
    The percentage of simulations that survive until the 2000th timestep is displayed. \textbf{Error elimination phase transitions and monotonicity can be observed in the simulations.}}
    \label{fig:general-ckp-simulations}
\end{figure}

\paragraph{Discussion and conjectures} We now discuss potential phenomena suggested by the experiments.

\vspace{.5\bs} \noindent \textit{Phase transition:} The heatmaps indicate that there exists a critical $p_0$ value for which error survives in a $(p, k)$-CKP for $p < p_0$ and error is eliminated when $p > p_0$. The heatmaps also provide us with reasonable guesses for what the critical values may be for different $m$.

\vspace{.5\bs} \noindent \textit{Monotonicity of error elimination:} Our heatmaps suggest that for $m$-parent CKPs, there is monotonicity of error elimination with respect to checking parameters $p$ and $k$. The heatmaps also suggest that there is monotonicity of error elimination with respect to $m$, as the region of error elimination encompasses a broader range of combinations of $p$ and $k$ values as $m$ increases.

\vspace{.5\bs} \noindent \textit{Different checking mechanisms:} We note that, while not shown in this paper, the simulations run on different checking mechanisms yield heatmaps with different $p$ and $k$ thresholds for error elimination. However, for all checking mechanisms we have run simulations with, we observe the same overall phenomena (including the existence of a phase transition and various types of monotonicity).

\begin{figure}[hbtp]
    \centering 
    \textcolor{white}{
    \subfigure{\includegraphics[height=0.32\textwidth]{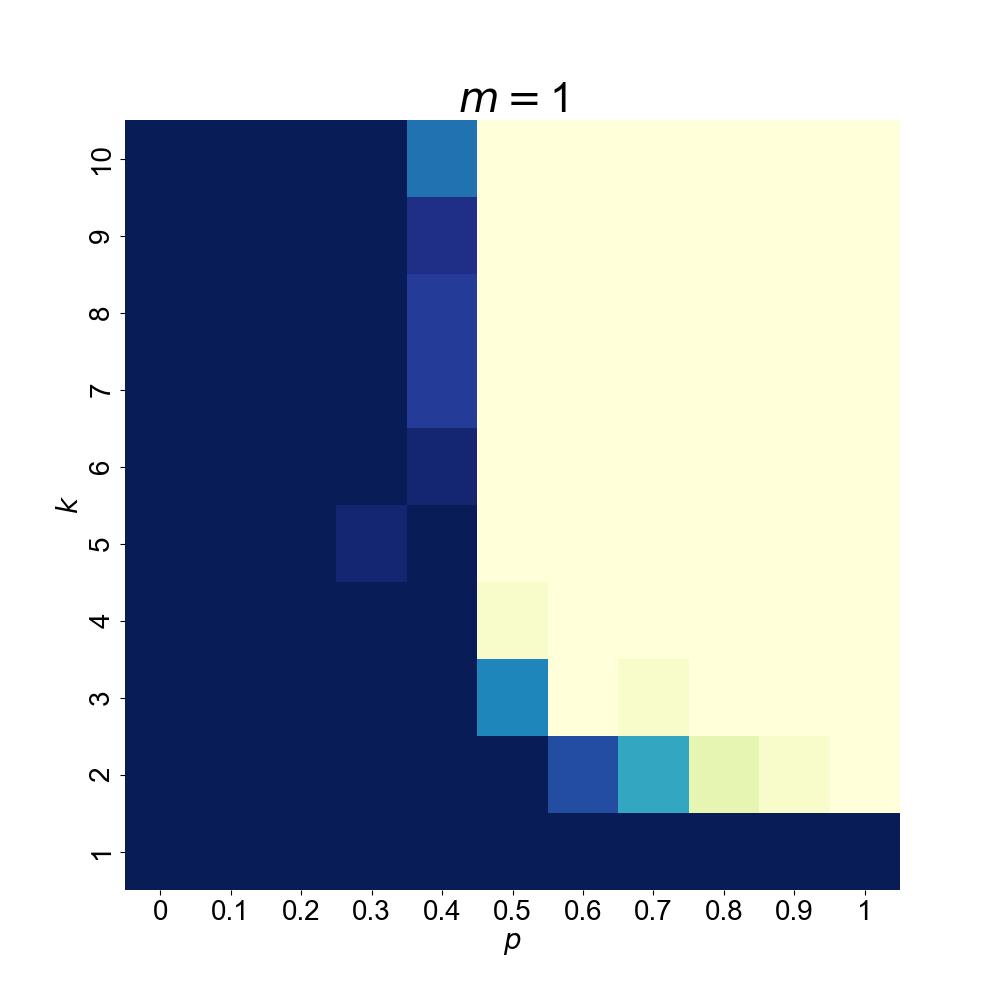}} 
    \subfigure{\includegraphics[height=0.32\textwidth]{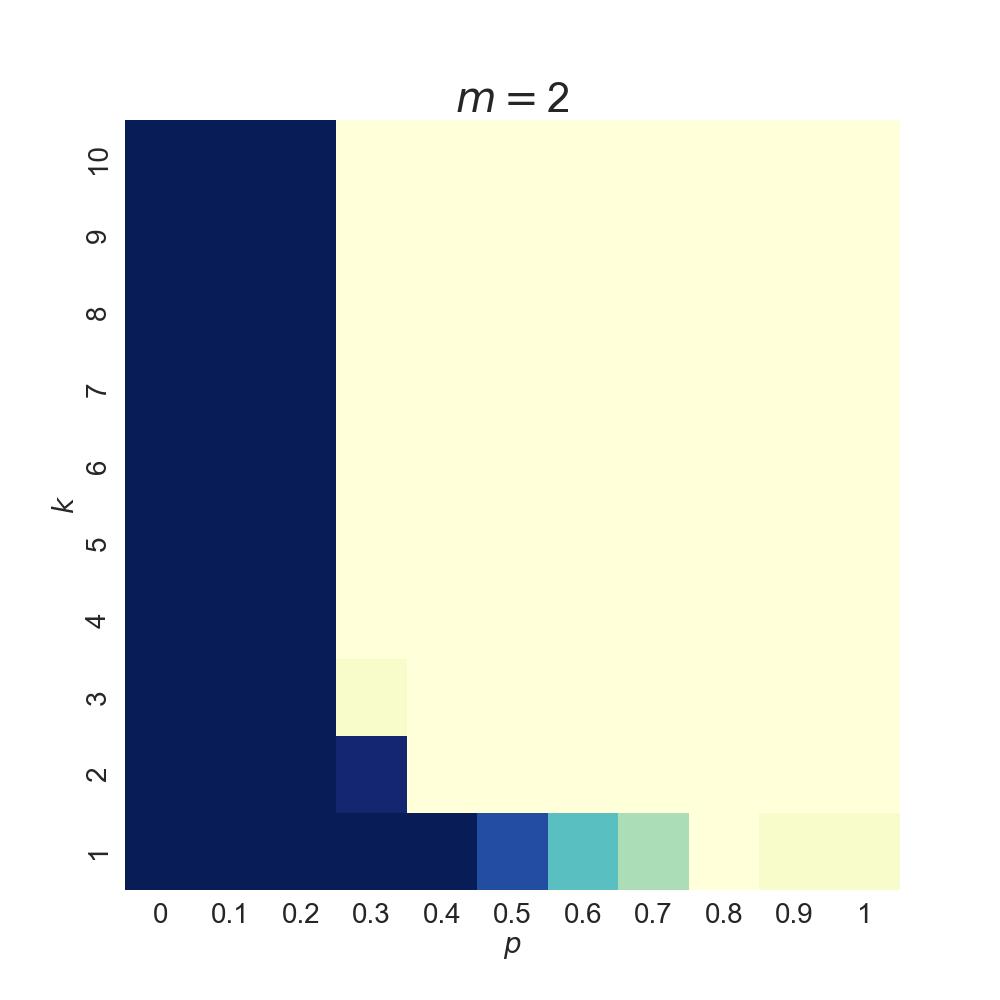}} 
    \subfigure{\includegraphics[height=0.32\textwidth]{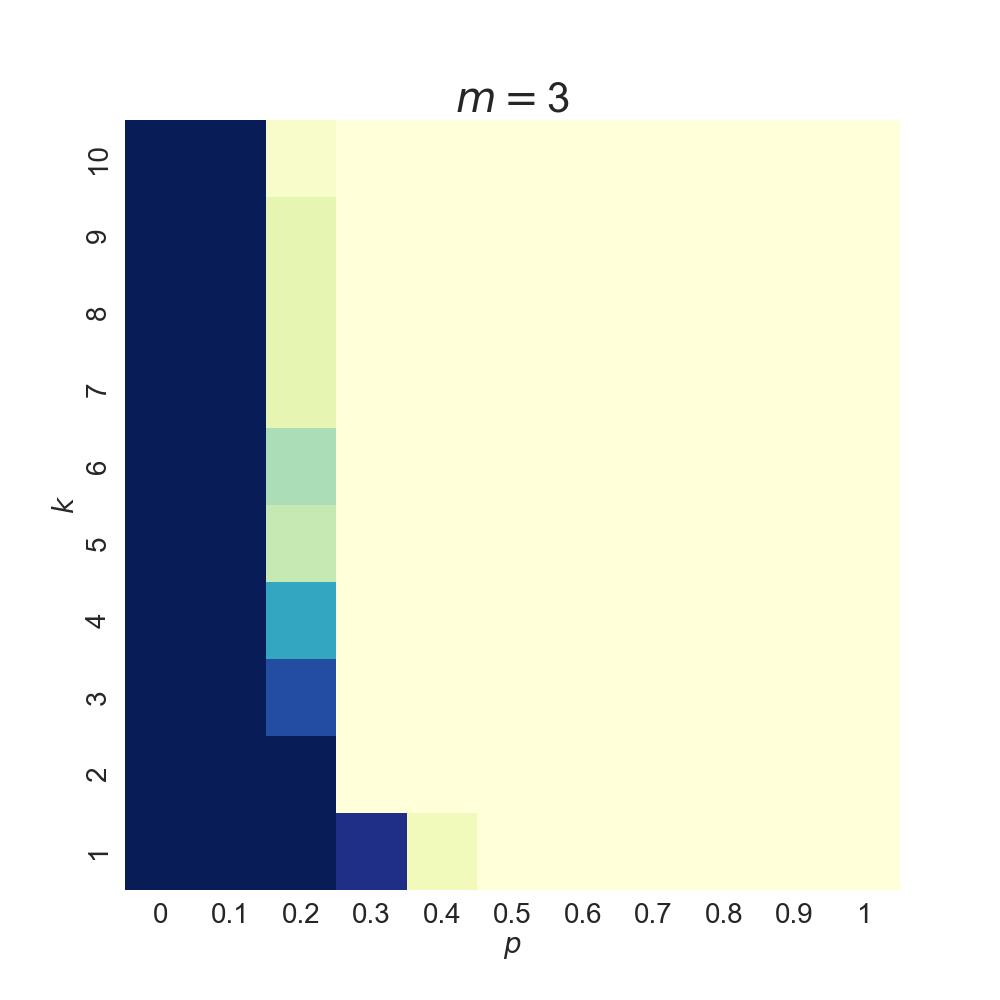}} \\ \vspace{-.6cm}
    \subfigure{\includegraphics[height=0.32\textwidth]{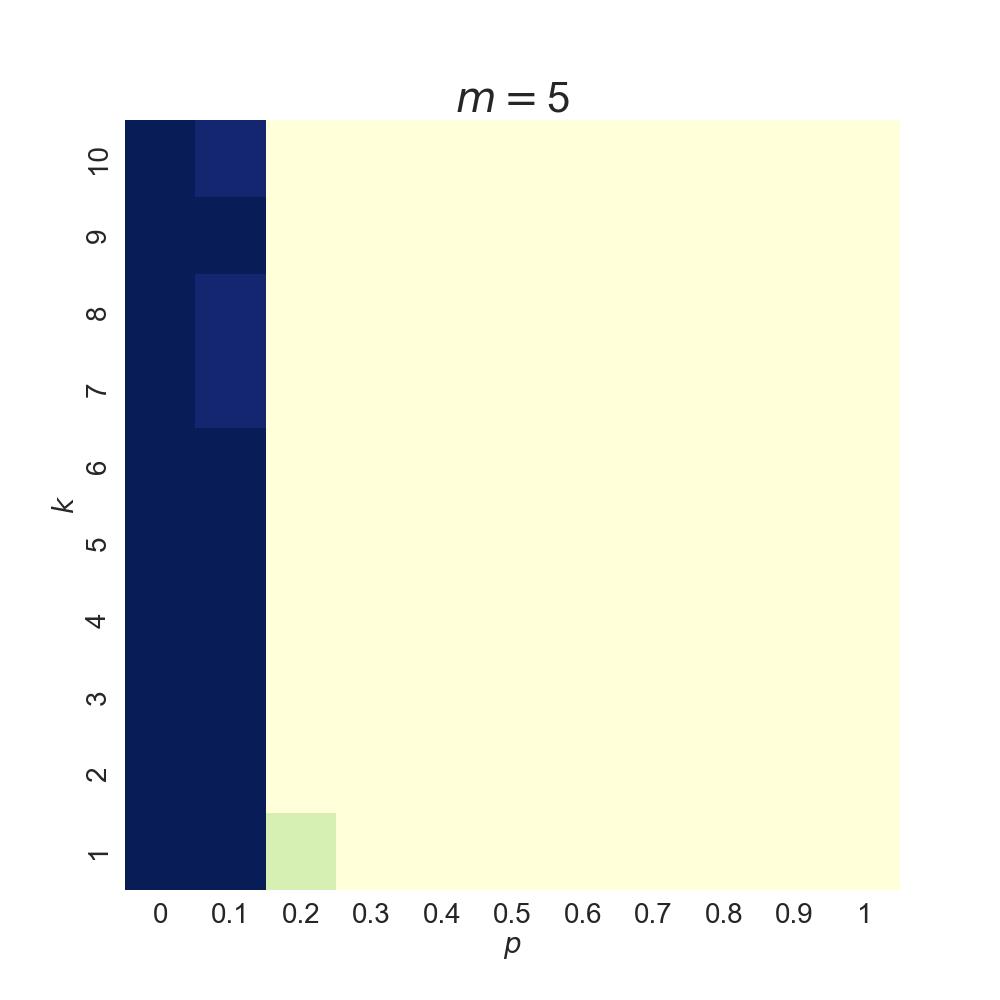}}
    \subfigure{\includegraphics[height=0.32\textwidth]{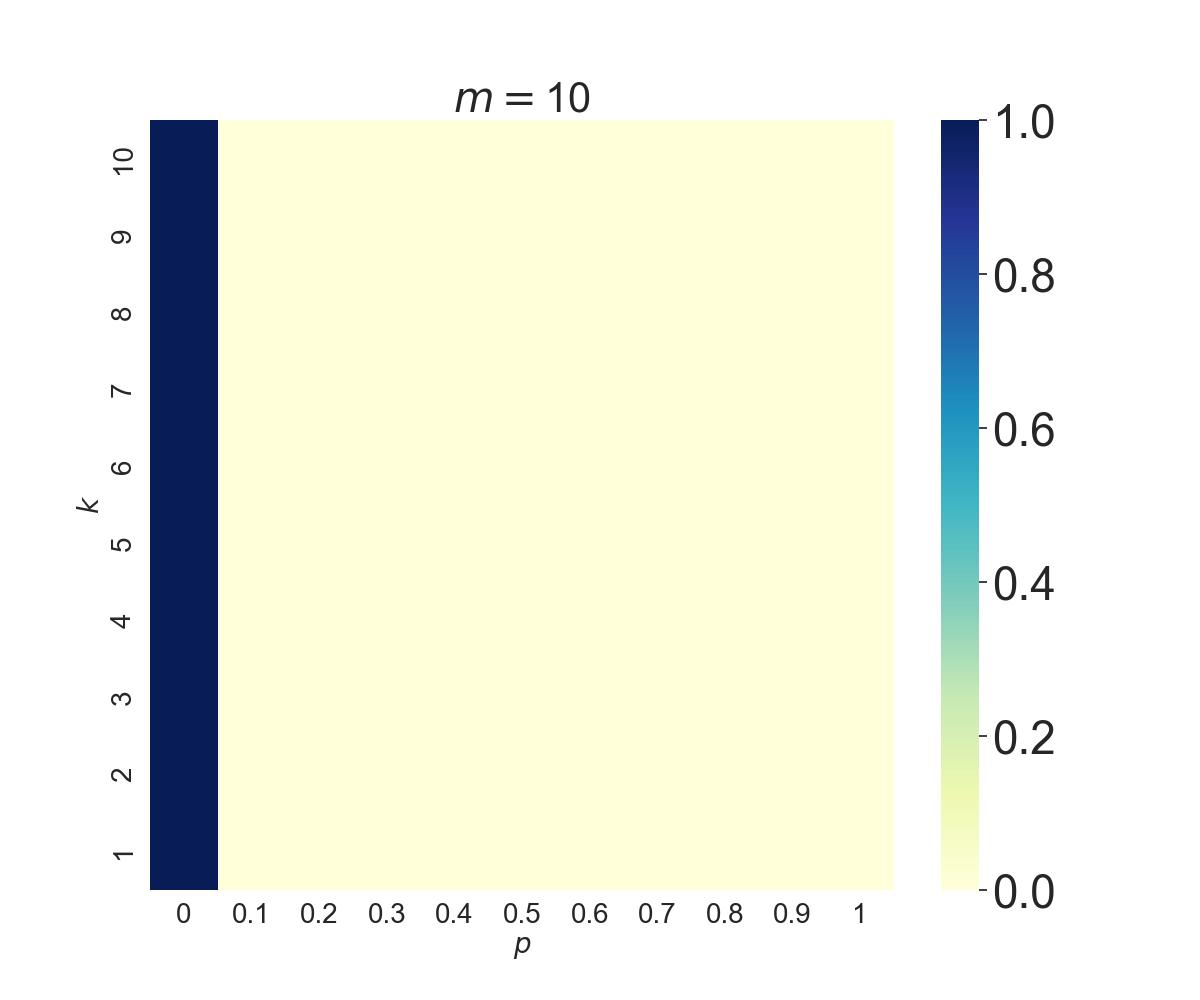}} 
    }
    \vspace{-.6cm}
    \caption{\textbf{Simple CKP experiments:} For each choice of $m, p, k$, we generate 20 simple CKP simulations with $M = m$, $\textsc{Check} = (p, k)$, and checking mechanism \textsc{Exhaustive BFS}. The initial CKP state is a chain of $25$ nodes (one $\CF$ node followed by $24$ $\CT$ nodes) with $m$ edges between each node and its parent. The CKP evolves according to preferential attachment. 
    The percentage of simulations that survive until the 2000th timestep is displayed. \textbf{Error elimination phase transitions and monotonicity can be observed in the simulations.} Note that the phase transitions occur at different $p$ values than in the general CKP case.}
    \label{fig:simple-ckp-simulations}
\end{figure}

\end{document}